\documentclass[sigconf]{acmart}

\usepackage{hyperref}       
\usepackage{url}            
\usepackage{booktabs}       
\usepackage{amsfonts}       
\usepackage{nicefrac}       
\usepackage{microtype}      
\usepackage{amsthm}
\usepackage{amsmath}
\usepackage{mathtools}
\usepackage{multirow} 
\usepackage{algorithm}
\usepackage{algorithmic}
\usepackage{subfig}

\usepackage[capitalize,noabbrev]{cleveref}
\theoremstyle{plain}
\newtheorem{theorem}{Theorem}[section]
\newtheorem{proposition}[theorem]{Proposition}

\theoremstyle{definition}
\newtheorem{definition}[theorem]{Definition}

\theoremstyle{remark}
\newtheorem{remark}[theorem]{Remark}
\usepackage[table]{xcolor}
\crefname{equation}{Eq.}{}
\crefname{table}{Table}{}
\crefname{figure}{Figure}{}
\crefname{algorithm}{Algorithm}{}
\crefname{theorem}{Theorem}{}
\crefname{definition}{Definition}{}
\crefname{remark}{Remark}{}
\crefname{assu}{Assumption}{}

\hypersetup{hidelinks}
\usepackage[most]{tcolorbox} 

\newtcolorbox{mybox}[1]{
    colback=gray!10, 
    colframe=gray!50, 
    sharp corners, 
}

\AtBeginDocument{%
  }

\setcopyright{none}
\renewcommand\footnotetextcopyrightpermission[1]{}

\setcopyright{none}
\begin{document}

\title{Breaking the Periodicity Assumption: Robust Tensorial Multi-View Clustering via Graph-Spectral Low-Rank Learning}
\author{Jintian Ji}
\authornote{Both authors contributed equally to this research.}
\email{jintian.ji@griffithuni.edu.au}
\orcid{1234-5678-9012}
\affiliation{%
  \institution{Griffith University}
  \city{Brisbane}
  \state{QLD}
  \country{Australia}
}

\author{Xingsu Li}
\authornotemark[1]
\email{xlijp@connect.ust.hk}
\affiliation{%
\department{Computer Science and Engineering}
  \institution{ The Hong Kong University of Science and Technology}
  \city{Hong Kong}
  \country{China}
}

\author{Songhe Feng}
\authornote{Corresponding author}
\email{shfeng@bjtu.edu.cn}
\orcid{0000-0002-5922-9358}
\affiliation{%
   \department{School of Computer and Information Technology}
  \institution{ Beijing Jiaotong University,}
  \city{Beijing}
  \country{China}
}

\begin{abstract}
Tensorial multi-view clustering (TMC) has achieved strong performance due to its ability to capture high-order correlations across multiple views. Most existing t-SVD-based TMC frameworks apply the Fast Fourier Transform (FFT) along the sample mode to impose frequency-domain low-rank constraints. However, we reveal that this widely adopted design critically relies on an implicit ``periodicity assumption'' induced by the sample arrangement. When samples are ordered by class, neighboring indices tend to be semantically similar, creating artificial local continuity along the sample mode and a favorable spectral structure for FFT-based low-rank regularization. Once this ordering is removed by random permutation, existing t-SVD-based TMC methods suffer severe performance degradation. This strong sensitivity to class ordering conflicts with the permutation-invariant nature of clustering and indicates that part of the reported performance may be attributed to a privileged sample arrangement rather than genuine high-order structure modeling. In this paper, we systematically investigate this phenomenon and its underlying algebraic and spectral mechanisms. To address this fundamental flaw, we further propose a graph-spectral low-rank tensor learning framework based on the Graph Fourier Transform (GFT), which replaces the fixed Fourier basis along the sample mode with a data-driven graph spectral basis, thereby capturing the intrinsic manifold structure without relying on a particular sample ordering. Moreover, we develop an anchor-based variant to address large-scale datasets efficiently. Extensive experiments on various benchmarks validate our findings and demonstrate the competitive or superior performance of the proposed methods compared with state-of-the-art TMC approaches.
\end{abstract}




\maketitle
\pagestyle{empty}

\section{Introduction}

Multi-view clustering (MVC) \cite{liu2025two,shen2022multi} is a fundamental task in unsupervised learning that aims to integrate complementary information from multiple sources or feature spaces to obtain reliable data partitions \cite{chao2021survey}. By jointly exploiting the consensus and complementary information across views, MVC methods often achieve better clustering performance than single-view approaches, such as $k$-means \cite{ahmed2020k}. According to how multi-view information is represented and organized, existing MVC methods can be broadly divided into matrix-based and tensor-based approaches. Matrix-based methods mainly include: (i) co-regularization methods \cite{renmulti,gan2025multi,2024YuTo}, which enforce consistency among view-specific latent representations; (ii) subspace learning methods \cite{zhang2017latent,xu2025asymptotics,chen2022efficient}, which project or factorize multiple views into a shared low-dimensional consensus space; and (iii) graph-based methods \cite{li2020multiview,xin2025Mu,2024DAIMU,zhao2025multi}, which learn a unified affinity matrix or manifold structure from heterogeneous views. Despite their effectiveness, matrix-based methods primarily characterize pairwise or view-wise relationships and may not explicitly capture high-order dependencies shared across multiple views.

\begin{figure}[!h]
\centering
\includegraphics[width=0.45\textwidth]{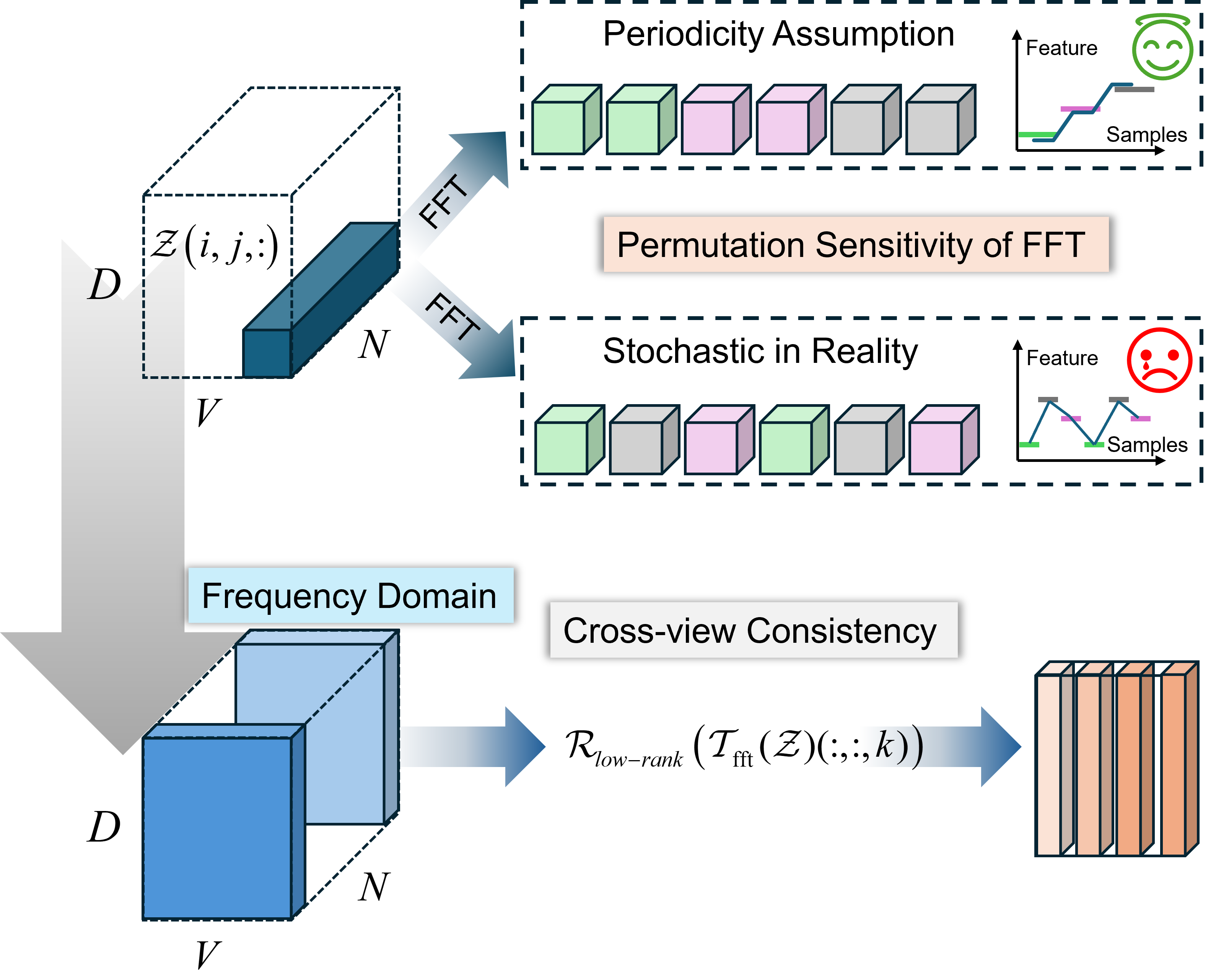}
\setlength{\abovecaptionskip}{0.1cm} 
\caption{The general workflow of t-SVD-based low-rank tensor learning for multi-view clustering.}
\label{fig:frame1}
\end{figure}

Tensor-based multi-view clustering (TMC) methods \cite{xie2018unifying,zhang2015low, GU2024ed,wang2025tpch,gu2024dictionary,li2024label} address this problem by combining the representation matrices of different views into a third-order tensor and then leveraging the low-rank tensor learning framework for the discovery of consistent clustering structure. The success of these tensor-based methods is largely built upon Tensor Singular Value Decomposition (t-SVD) based on the Fast Fourier Transform (FFT) \cite{lu2019tensor,xie2018unifying, luo2026estim}. As shown in \cref{fig:frame1}, these frameworks first apply FFT along the sample mode to convert the representation tensor into the frequency domain, and then impose a low-rank constraint to exploit cross-view consistency. However, applying FFT along the sample mode implicitly treats independent, unordered samples as if they formed a circular sequence, imposing a hidden \textit{Periodicity Assumption}. Class-contiguous ordering approximately induces the local smoothness favored by the Fourier basis. However, such an ordering is merely an artifact of how benchmark datasets are commonly organized rather than an intrinsic property of the clustering problem. The resulting dependence of model performance on this privileged arrangement fundamentally conflicts with permutation invariance \cite{ackerman2010towards}, an essential property of clustering. Consequently, current TMC methods risk reporting an inflated, index-dependent notion of effectiveness: performance can appear strong on class-ordered datasets yet deteriorate sharply once samples are randomly shuffled, even though the ground-truth clustering remains unchanged. This observation motivates the following research questions:
\begin{mybox}

\textbf{RQ1.} \textit{Why are FFT-based tensor regularizers applied along the sample mode sensitive to sample permutations?}

\textbf{RQ2.} \textit{How can transform-based low-rank tensor learning be redesigned to remain robust to arbitrary sample indexing?}
\end{mybox}

Answering these questions requires revisiting how transform-domain tensor regularization interacts with sample permutations. Although several recent studies \cite{liu2025large,ji2025anchorsbring} have empirically observed the sensitivity of tensor-based MVC methods to sample ordering, they mainly provide experimental evidence and do not fully characterize its algebraic origin or establish a principled permutation-robust alternative. A seemingly straightforward workaround is to apply FFT along the feature or view mode instead of the sample mode. However, such a change merely avoids applying the fixed transform to the sample dimension and does not explain or resolve the underlying order dependence that arises when fixed spectral bases are used to model arbitrarily indexed samples.

To address this gap, we systematically investigate the algebraic and spectral foundations of FFT-based low-rank tensor learning. For \textbf{RQ1}, we show that the Fourier transform imposes a circulant structure along the sample mode and is generally non-equivariant to arbitrary sample permutations. From a spectral perspective, random reindexing disrupts the artificial local continuity induced by category ordering and redistributes structural energy across Fourier components. This spectral leakage weakens the energy concentration and transform-domain low-rank structure exploited by existing tensor regularizers, causing discriminative information to be excessively suppressed during low-rank shrinkage.

For \textbf{RQ2}, we propose a graph-spectral low-rank tensor learning framework that replaces the fixed Fourier basis with a data-adaptive graph Fourier transform (GFT) basis. The graph Laplacian is constructed from the intrinsic relationships among samples and transforms consistently when the samples are permuted. We establish spectral-subspace equivariance under a general graph spectrum
and exact GTR stability under a simple spectrum. When repeated eigenvalues
are present, the equivariance error of the resulting low-rank mapping is
bounded by the spectral energy contained in the repeated-eigenvalue
eigenspaces. Consequently, the recovered representation is no longer tied to an arbitrary sample order. To improve scalability, we further develop an anchor-based graph Fourier transform (AGFT), which uses a compact sample-to-anchor graph to approximate the graph-spectral subspace. This construction avoids the full graph eigendecomposition and reduces both computational and memory costs for large-scale datasets.

Our main contributions are summarized as follows:

\textbf{1. Diagnosis of Sample-Order Bias.} We identify an underexamined sample-order dependence in FFT-based tensorial multi-view clustering, tracing its algebraic origin and connecting random sample permutations to spectral energy redistribution and the collapse of transform-domain low-rank structure.

\textbf{2. Permutation-Robust Graph-Spectral Regularization.} We develop a graph-spectral low-rank tensor learning framework that replaces the fixed Fourier basis with the graph Fourier basis, and establish spectral subspace equivariance under general graph spectra and exact GTR stability under the simple spectrum condition.

\textbf{3. Scalable and Plug-and-Play Design.} We introduce an anchor-based graph Fourier transform (AGFT) that approximates the graph-spectral subspace via a compact sample-to-anchor graph, improving scalability over full-graph GFT. The resulting framework can be integrated into existing FFT-based TMC pipelines with minimal modification to their low-rank tensor learning modules.

\textbf{4. Extensive Empirical Validation.} We integrate the proposed transforms into multiple representative TMC methods and evaluate them under controlled random sample permutations, demonstrating substantially improved robustness to sample ordering and consistently competitive or superior clustering performance across datasets and baselines.

\section{Preliminaries and Related Work}

We first summarize the notation used throughout this paper. Scalars are denoted by 
lowercase letters, e.g., $x$; vectors by bold lowercase letters, e.g., $\boldsymbol{x}$; 
and matrices by bold uppercase letters, e.g., $\mathbf{X}$. For a matrix, the Frobenius 
norm, $\ell_{2,1}$ norm, and nuclear norm are denoted by $\left\| \mathbf{X}\right\|_F$, 
$\left\| \mathbf{X}\right\|_{2,1}$, and $\left\| \mathbf{X}\right\|_{\circledast}$, 
respectively. $\boldsymbol{1}$ and $\mathbf{I}$ denote an all-ones vector and the 
identity matrix, respectively. Third-order tensors are represented by calligraphic 
letters, e.g., $\mathcal{X}\in\mathbb{R}^{n_1\times n_2\times n_3}$, and 
$\mathcal{X}^{(k)}$ denotes its $k$-th frontal slice.
Given tensors $\mathcal{X}\in\mathbb{R}^{n_1\times n_2\times n_3}$ and 
$\mathcal{Y}\in\mathbb{R}^{n_2\times n_4\times n_3}$, we next introduce several 
tensor operations and definitions \cite{KERNFELD2015545} used in our framework:

\textbf{Face-wise product:} Given tensors $\mathcal{X}\in\mathbb{R}^{n_1\times n_2\times n_3}$ and $\mathcal{Y}\in\mathbb{R}^{n_2\times n_4\times n_3}$, the face-wise product is defined as:
\begin{equation}
    (\mathcal{X}\Delta \mathcal{Y})^{(i)} = \mathcal{X}^{(i)} \mathcal{Y}^{(i)},\ \ i=1,\ldots,n_3. 
\end{equation}
\textbf{Mode-$3$ product}: Given an $n_3 \times n_3$ matrix $\mathbf{M}$, the mode-$3$ product of tensor $\mathcal{X}\in\mathbb{R}^{n_1\times n_2\times n_3}$ is:
\begin{equation}
    (\mathcal{X}\times_3\mathbf{M})_{(i,j,:)} = \mathbf{M}\mathcal{X}_{(i,j,:)}.
\end{equation}
\textbf{Transform of Tensor:} Given an invertible linear transform $\mathcal{T}$, the Transform of Tensor $\mathcal{X}$ along mode-$3$ is
\begin{equation}
    \bar{\mathcal{X}}=\mathcal{T}(\mathcal{X},\mathbf{T},3)=\mathcal{X} \times_3\mathbf{T}\in\mathbb{R}^{n_1\times n_2\times n_3}
\end{equation}
where the linear transform $\mathcal{T}$ is defined by an invertible matrix $\mathbf{T}\in\mathbb{R}^{n_3\times n_3}$. $\mathcal{X}=\mathcal{T}^{-1}(\bar{\mathcal{X}},\mathbf{T},3)$ denotes the corresponding inverse mapping.

\textbf{Generalized Tensor-Tensor product ($\mathcal{T}$-product):} Given an invertible linear transform $\mathcal{T}$, tensors $\mathcal{X}\in\mathbb{R}^{n_1\times n_2\times n_3}$ and $\mathcal{Y}\in\mathbb{R}^{n_2\times n_4\times n_3}$, the $\mathcal{T}$-product of $\mathcal{X}$ and $\mathcal{Y}$ under $\mathcal{T}$ is 
\begin{equation}
    \mathcal{C}=\mathcal{X} {*_{\mathcal{T}}}\mathcal{Y}\in\mathbb{R}^{n_1\times n_4 \times n_3}
\end{equation}
where $\mathcal{T}(\mathcal{C},\mathbf{T},3)=\mathcal{T}(\mathcal{X},\mathbf{T},3)\Delta\mathcal{T}(\mathcal{Y},\mathbf{T},3)$.

\textbf{Generalized Tensor Transpose:} For any invertible linear transform $\mathcal{T}$, the generalized tensor transpose of $\mathcal{X}$ under $\mathcal{T}$, denoted by $\mathcal{X}^{\top}$, satisfies $\mathcal{T}(\mathcal{X}^\top,\mathbf{T},3)^{(i)}=(\mathcal{T}(\mathcal{X},\mathbf{T},3)^{(i)})^\top, i=1,\ldots, n_3$.

\textbf{Generalized Identity Tensor:} For any invertible linear transform $\mathcal{T}$, the generalized identity tensor
$\mathcal{I}\in\mathbb{R}^{n\times n\times n_3}$
is defined such that
$\mathcal{T}(\mathcal{I},\mathbf{T},3)^{(i)}$
is an identity matrix for each $i$.

\textbf{Generalized Orthogonal Tensor:} For any invertible linear transform $\mathcal{T}$, a tensor $\mathcal{Q}\in\mathbb{R}^{n\times n\times n_3}$ is orthogonal under $\mathcal{T}$ if it satisfies $\mathcal{Q}^\top*_{\mathcal{T}}\mathcal{Q}=\mathcal{Q}*_{\mathcal{T}}\mathcal{Q}^\top=\mathcal{I}$.

\textbf{f-diagonal Tensor:} A tensor is called f-diagonal if each of its frontal slices is a diagonal matrix.

\begin{definition} 
\label{GT-SVD}
\textbf{Generalized Tensor Singular Value Decomposition (GT-SVD).}
Given any invertible linear transform $\mathcal{T}$, any tensor $\mathcal{X}\in\mathbb{R}^{n_1\times n_2\times n_3}$ can be
factorized as 
\begin{equation}
    \mathcal{X}=\mathcal{U}*_{\mathcal{T}}\mathcal{S}*_{\mathcal{T}}\mathcal{V}^\top,
\end{equation}
where $\mathcal{U}\in\mathbb{R}^{n_1\times n_1\times n_3}$, $\mathcal{V}\in\mathbb{R}^{n_2\times n_2\times n_3}$ are orthogonal,
and $\mathcal{S}\in\mathbb{R}^{n_1\times n_2\times n_3}$ is an f-diagonal tensor.
\end{definition}

\begin{definition} 
\label{GT-rank}
\textbf{Generalized Tensor Rank \cite{lu2019tensor}.}
Given any invertible linear transform $\mathcal{T}$, tensor $\mathcal{X}\in\mathbb{R}^{n_1\times n_2\times n_3}$ and its GT-SVD $\mathcal{X}=\mathcal{U}*_{\mathcal{T}}\mathcal{S}*_{\mathcal{T}}\mathcal{V}^\top$, the generalized tensor rank of $\mathcal{X}$ under $\mathcal{T}$ is defined as 
\begin{equation}
\begin{aligned}
        \left\| \mathcal{X} \right\|_{\rm \mathcal{T}-GTR}
        =\frac{1}{n_3}\sum_{k=1}^{n_3}\sum_{i=1}^{h}rank\left(\mathcal{T}(\mathcal{S},\mathbf{T},3)^{(k)}(i,i)\right),\\
\end{aligned}
\label{GNTR}
\end{equation}
\noindent where $h={\rm min}(n_1,n_2)$, $rank(\cdot)$ means the rank function: $rank(x)=1,x>0; rank(x)=0, otherwise$.  Here, various surrogate functions $\psi(\cdot)$ (e.g., $\psi(x)=x$) are commonly used to approximate the true rank function \cite{chen2021generalized,ji2025anchors}. Accordingly, we define $\|\mathcal{X}\|_{\rm \mathcal{T}\text{-}GTR}^{\psi}={1}/{n_3}\sum_{k=1}^{n_3}\sum_{i=1}^{h}\psi\left(\mathcal{T}(\mathcal{S},\mathbf{T},3)^{(k)}(i,i)\right)$.
\end{definition}

\subsection{Tensor-based Multi-view Clustering (TMC)}

The success of tensor-based multi-view clustering (TMC) heavily relies on exploring global consensus through tensor low-rankness. Shifting from early spatial-domain methods \cite{zhang2015low}, modern TMC predominantly operates in the transform domain via the Fast Fourier Transform (FFT) \cite{lu2016tensor, xie2018unifying}. For a multi-view dataset with $n$ samples and $m$ views, $\{\mathbf{X}^{(v)}\}_{v=1}^{m}$, where
$\mathbf{X}^{(v)}\in\mathbb{R}^{d_v\times n}$, the generalized t-SVD-based TMC framework can be formulated as
\begin{equation}
\label{t-SVD}
\begin{aligned} 
&\min_{\{\mathbf{Z}^{(v)}, \mathbf{E}^{(v)}\}_{v=1}^m} \|\mathcal{Z}\|_{\rm fft-GTR}^{\psi}+\alpha\mathcal{R}_1(\{\mathbf{E}^{(v)}\}_{v=1}^m) + \beta\mathcal{R}_2(\{\mathbf{Z}^{(v)}\}_{v=1}^m) \\ &s.t.\ \forall v, \{\mathbf{Z}^{(v)},\mathbf{E}^{(v)}\}=\Omega(\mathbf{X}^{(v)}),\mathcal{Z}=\Phi(\mathbf{Z}^{(1)},\mathbf{Z}^{(2)},\ldots,\mathbf{Z}^{(m)}), 
\end{aligned}
\end{equation}
where the operator $\Omega(\cdot)$ represents generalized feature construction paradigms (e.g., subspace \cite{fu2024subspace} or graph learning \cite{wang2019gmc}) that extract view-specific representations $\mathbf{Z}^{(v)}\in\mathbb{R}^{d\times n}$ and their corresponding reconstruction residuals $\mathbf{E}^{(v)}\in\mathbb{R}^{d_v\times n}$, and $\mathcal{R}_1, \mathcal{R}_2$ denote regularization terms (e.g., sparsity or low-rankness). $\Phi(\cdot)$ \cite{xie2018unifying} stacks and rotates these view-specific representations into a representation tensor $\mathcal{Z} \in \mathbb{R}^{d \times m \times n}$ (features $\times$ views $\times$ samples). The core constraint, $\|\mathcal{Z}\|_{\rm fft-GTR}^{\psi}$, enforces cross-view consistency via spectral energy compaction. Subsequent research has largely evolved within this framework, primarily focusing on anchor-based scalability \cite{tang2025multi, ji2023high, chang2024tensorized} to reduce tensor scale, and non-convex rank surrogates \cite{sun2022sliced, xia2021multiview, ding2025nonconvex} to better approximate true low-rankness. To optimize this framework, methods typically employ the Alternating Direction Method of Multipliers (ADMM), which isolates the tensor rank constraint into a proximal sub-problem:
\begin{equation}
\label{fft_subproblem}
\min_{\mathcal{Z}}\lambda\|\mathcal{Z}\|_{\rm fft\text{-}GTR}^{\psi}+\frac{1}{2}\|\mathcal{Z}-\mathcal{Y}\|_{F}^2
\end{equation}
where $\mathcal{Y}$ is the auxiliary tensor. This problem can be solved by applying the Singular Value Thresholding (SVT) framework to each frequency slice after performing the Fast Fourier Transform (FFT) along the sample mode, as summarized in \cref{alg:fft_low_rank}.

Although \cref{alg:fft_low_rank} is computationally efficient and widely adopted in modern TMC methods, this FFT-based sub-problem exposes a critical vulnerability: pathological sensitivity to sample permutation. The FFT implicitly imposes an artificial periodic (circulant) structure along the sample mode, which fundamentally mismatches the irregular, manifold-structured relationships of real-world multi-view data. Consequently, models often yield inflated performance on "pre-sorted" benchmarks but collapse under random shuffling. Addressing this "order-induced bias" necessitates transitioning from rigid trigonometric bases to data-adaptive transforms that respect the underlying data topology.

\begin{algorithm}[H]
\caption{t-SVD-based Low-Rank Tensor Learning}
\label{alg:fft_low_rank}
\begin{algorithmic}[1] 
\STATE \textbf{Input:} Representation tensor $\mathcal{Y} \in \mathbb{R}^{d \times m \times n}$, parameter $\lambda$, proximal penalty function $\psi(\cdot)$.
\STATE \textbf{Output:} Recovered low-rank tensor $\mathcal{Z}$.

\STATE $\bar{\mathcal{Y}} \leftarrow \mathcal{T}_\text{\rm fft}(\mathcal{Y},\mathbf{F},3)$;
\FOR{$k = 1,\ldots,n$}
    \STATE $[\mathbf{U}_{svd}, {\Sigma}, \mathbf{V}] \leftarrow \text{SVD}(\bar{\mathcal{Y}}^{(k)})$;
  \STATE${\Sigma}_{\text{new}} = \text{Prox}_{\psi, \lambda}({\Sigma})$ \cite{cai2010singular};
    \STATE $\bar{\mathcal{Z}}^{(k)} \leftarrow \mathbf{U}_{svd} {\Sigma}_{\text{new}} \mathbf{V}^H$;
\ENDFOR
\STATE $\mathcal{Z} \leftarrow \mathcal{T}^{-1}_{\text{\rm fft}}(\bar{\mathcal{Z}},\mathbf{F},3)$;
\end{algorithmic}
\end{algorithm}

\begin{figure*}[!h]

	\centering
	\subfloat[Clustering Performance]{\includegraphics[width=0.23\textwidth]{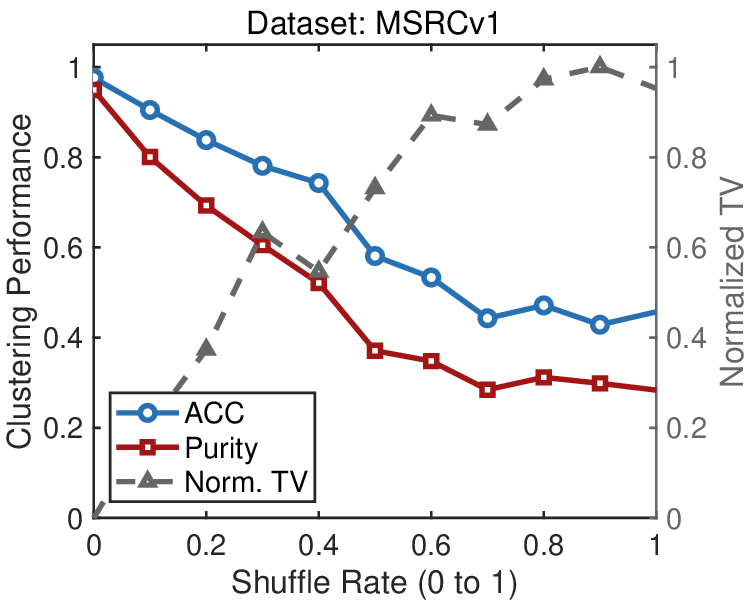}
    \includegraphics[width=0.23\textwidth]{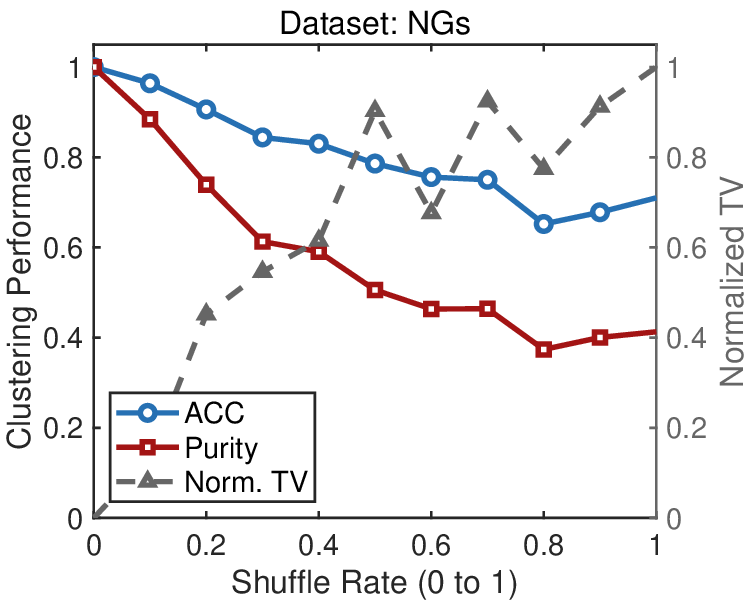}}\hspace{2pt}
	\subfloat[AC Spectral Energy]{\includegraphics[width=0.23\textwidth]{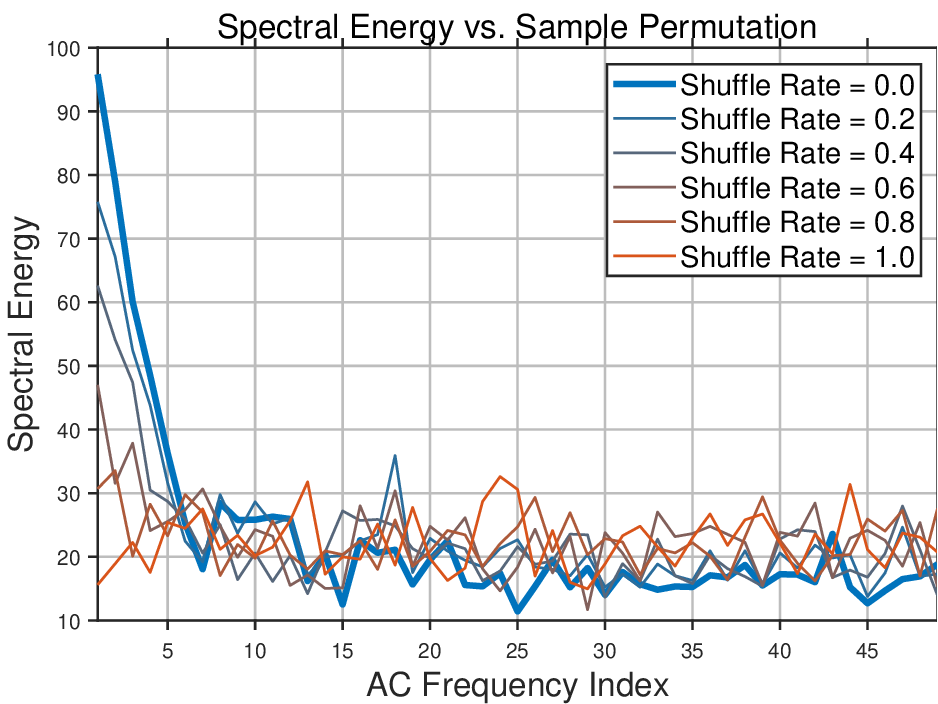}
    \includegraphics[width=0.23\textwidth]{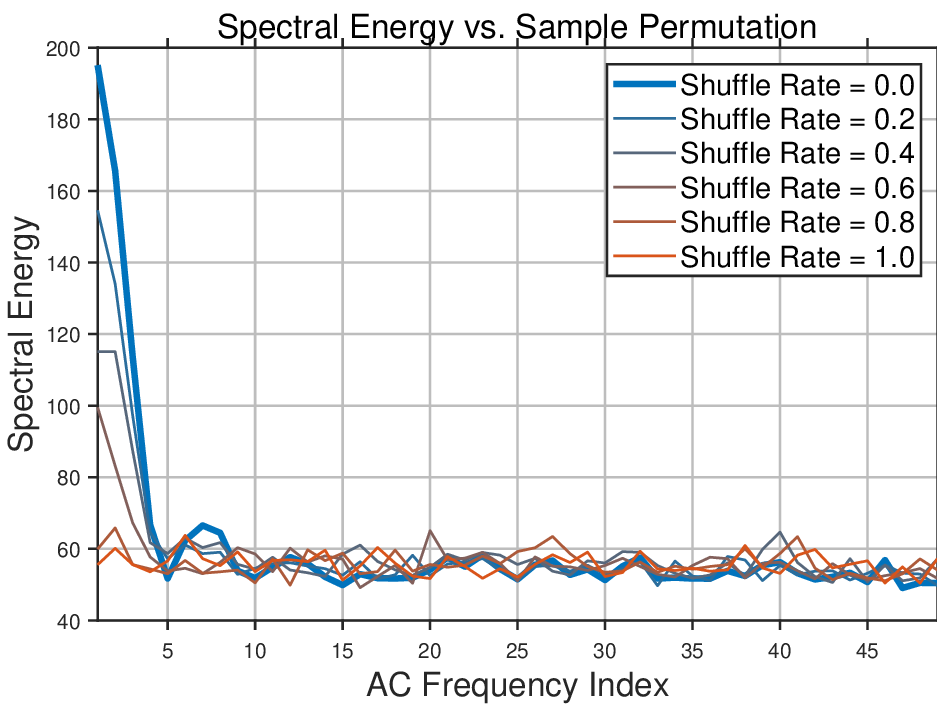}}\hspace{2pt}
    \setlength{\abovecaptionskip}{0.1cm} 
\caption{Impact of sample permutation on TLSpNM \cite{guo2022logarithmic} on MSRCv1 and NGs. (a) Increasing the shuffle rate (the proportion of randomized samples) increases normalized CTV and degrades clustering accuracy. (b) Shuffling weakens low-frequency energy concentration and redistributes energy toward higher frequencies.}
	\label{fig:ratio1}
    \vspace{-0.4cm}
\end{figure*}

\section{Permutation Sensitivity of FFT-Based Low-Rank Tensor Learning}

The effectiveness of \cref{alg:fft_low_rank} depends critically on the energy-compaction property, under which informative multi-view structures are concentrated in a limited number of spectral components. However, to mathematically answer why this framework fails under sample shuffling, we must investigate the relationship between sample permutation and spectral energy distribution.

For a multi-view representation tensor $\mathcal{X} \in \mathbb{R}^{d \times m \times n}$, the FFT along the sample mode is defined as $\bar{\mathcal{X}} = \mathcal{T}_{\rm fft}(\mathcal{X},\mathbf{F},3) = \mathcal{X} \times_3\mathbf{F}$, where $\mathbf{F} \in \mathbb{C}^{n \times n}$ is the Discrete Fourier Transform matrix. We propose the following theorem to characterize the pathological sensitivity of FFT in the tensor domain.

\begin{theorem} 
\label{fft_ps}
\textbf{(Permutation Sensitivity of FFT-based Tensor Transform)}
Let $\mathcal{T}_{{\rm fft}}(\mathcal{X},\mathbf{F},3)$ denote the Fast Fourier Transform of a representation tensor $\mathcal{X} \in \mathbb{R}^{d \times m \times n}$ along its third dimension. For a generic non-identity permutation matrix $\mathbf{P} \in \{0, 1\}^{n \times n}$, let $\mathcal{X}' = \mathcal{X} \times_3 \mathbf{P}$ be the permuted version of $\mathcal{X}$. For generic $\mathcal{X}$, the following properties hold:

\textbf{1. Spectral Non-Equivariance:} 
\begin{equation}
\begin{aligned}
        &\mathcal{T}_{{\rm fft}}(\mathcal{X}',\mathbf{F},3) \neq\mathcal{T}_{{\rm fft}}(\mathcal{X},\mathbf{F},3),\ \ \\ &\mathcal{T}_{{\rm fft}}(\mathcal{X}',\mathbf{F},3) \neq\mathcal{T}_{{\rm fft}}(\mathcal{X},\mathbf{F},3)\times_3\mathbf{P}, \\
\end{aligned}
\end{equation}

\textbf{2. Energy Redistribution Sensitivity:} Define the non-direct-current (non-DC) energy vector as $\boldsymbol{e}(\mathcal{X}) = [\|\bar{\mathcal{X}}^{(2)}\|^2_F, \dots, \|\bar{\mathcal{X}}^{(n)}\|^2_F]^\top$. The energy distribution is non-invariant under a generic permutation:
\begin{equation}
\boldsymbol{e}(\mathcal{X} \times_3 \mathbf{P}) \neq \boldsymbol{e}(\mathcal{X}).
\end{equation}
This implies that there exists at least one frequency $k \in \{2, \dots, n\}$ such that $\|\bar{\mathcal{X}}^{\prime(k)}\|^2_F \neq \|\bar{\mathcal{X}}^{(k)}\|^2_F$.
\end{theorem}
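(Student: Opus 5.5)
Both claims are "generic" statements, so I will use one standard device throughout. Each inequality will be reduced to the non-vanishing of a polynomial (or linear) function of the entries of $\mathcal{X}$. Such a function either vanishes identically or vanishes only on a proper algebraic subset, which has Lebesgue measure zero and empty interior. It therefore suffices to exhibit one tensor $\mathcal{X}_0$ at which the function is nonzero. Since $\mathcal{X}\times_3\mathbf{M}$ acts independently on each mode-3 fiber $\boldsymbol{x}=\mathcal{X}_{(i,j,:)}\in\mathbb{R}^n$, I will work fiberwise with vectors. I will pick $\mathcal{X}_0$ with all fibers equal to a single witness vector $\boldsymbol{x}_0$, or nonzero in only one fiber.

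\textbf{Part 1.} Using associativity of mode-3 products, $\mathcal{T}_{\rm fft}(\mathcal{X}',\mathbf{F},3)=\mathcal{X}\times_3(\mathbf{F}\mathbf{P})$. The first inequality is equivalent to $(\mathbf{F}\mathbf{P}-\mathbf{F})\boldsymbol{x}\neq 0$ for some fiber. Since $\mathbf{F}$ is invertible and $\mathbf{P}\neq\mathbf{I}$, the matrix $\mathbf{F}(\mathbf{P}-\mathbf{I})$ is nonzero. Its kernel is therefore a proper subspace, and generic fibers avoid it.

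The second inequality is equivalent to $(\mathbf{F}\mathbf{P}-\mathbf{P}\mathbf{F})\boldsymbol{x}\neq 0$, so I need $\mathbf{F}\mathbf{P}\neq\mathbf{P}\mathbf{F}$. The commutant of $\mathbf{F}$ is the set of matrices that are polynomials in $\mathbf{F}$. Because $\mathbf{F}^4=n^2\mathbf{I}$ (or $\mathbf{I}$ for unitary normalization), this commutant is spanned by $\mathbf{I},\mathbf{F},\mathbf{F}^2,\mathbf{F}^3$. Among permutation matrices it contains only $\mathbf{I}$ and the index reversal $\mathbf{R}:k\mapsto -k \bmod n$, which is $\mathbf{F}^2$ up to scaling. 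I will verify this directly: $\mathbf{P}\mathbf{F}=\mathbf{F}\mathbf{P}$ means $\omega^{\pi(j)k}=\omega^{j\pi(k)}$ up to relabeling. Taking $j=1$ gives $\pi(k)\equiv \pi(1)k$, and then $\pi(1)^2\equiv 1$. This leaves the family of multiplier permutations $k\mapsto ak$ with $a^2\equiv1 \pmod n$.

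This is the main obstacle. The claim is stated for a \emph{generic} $\mathbf{P}$, so I will interpret the statement as excluding this finite exceptional set (at most $n$ of the $n!$ permutations, including the reversal). Outside it, $\mathbf{F}\mathbf{P}-\mathbf{P}\mathbf{F}\neq 0$, and the kernel argument above applies.

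\textbf{Part 2.} For fixed $\mathbf{P}$, $\|\bar{\mathcal{X}}'^{(k)}\|_F^2-\|\bar{\mathcal{X}}^{(k)}\|_F^2=\sum_{i,j}\boldsymbol{x}_{ij}^\top\mathbf{Q}_k\boldsymbol{x}_{ij}$ is a real quadratic form, with
\begin{equation}
\mathbf{Q}_k=\mathrm{Re}\!\left(\mathbf{P}^\top\boldsymbol{f}_k^*\boldsymbol{f}_k^\top\mathbf{P}-\boldsymbol{f}_k^*\boldsymbol{f}_k^\top\right),
\end{equation}
where $\boldsymbol{f}_k$ is the $k$-th row of $\mathbf{F}$. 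By Parseval and the invariance of the DC term $\sum_t x_t$ under permutations, the total non-DC energy is preserved. Equality of $\boldsymbol{e}$ for all $\mathcal{X}$ would force $\mathbf{Q}_k=0$ for every $k\ge2$. That is, $\mathbf{P}$ would commute with every circulant projector $\mathrm{Re}(\boldsymbol{f}_k^*\boldsymbol{f}_k^\top)$, i.e., with the cosine matrices $[\cos(2\pi(k-1)(s-t)/n)]_{s,t}$. Taking $k=2$, the entries $\cos(2\pi(s-t)/n)$ must be preserved under $(s,t)\mapsto(\pi(s),\pi(t))$. So $\pi$ preserves the cyclic distance $\pm(s-t)$, meaning $\pi$ is a dihedral symmetry (rotation or reflection) of $\mathbb{Z}_n$.

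Hence, for every $\mathbf{P}$ outside the dihedral group of size $2n$, some $\mathbf{Q}_k\neq0$. A nonzero quadratic form is nonzero on a nonempty open dense set, so $\boldsymbol{e}(\mathcal{X}\times_3\mathbf{P})\neq\boldsymbol{e}(\mathcal{X})$ generically, as required. I will remark that the exclusion is unavoidable: cyclic shifts only multiply each $\bar{\mathcal{X}}^{(k)}$ by a unimodular phase and hence preserve $\boldsymbol{e}$ exactly. This remark pins down the meaning of ``generic non-identity permutation'' in the statement.
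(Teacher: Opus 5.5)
Your proposal is correct and takes a more rigorous route than the paper. One sentence in it is false and should be deleted; see the second paragraph.

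\textbf{Comparison with the paper.} The paper shares your reduction $\mathcal{T}_{\rm fft}(\mathcal{X}',\mathbf{F},3)=\mathcal{X}\times_3(\mathbf{F}\mathbf{P})$ and your observation that equivariance requires $\mathbf{F}\mathbf{P}=\mathbf{P}\mathbf{F}$. From there it argues heuristically. For a ``non-cyclic'' $\mathbf{P}$ it asserts that $\mathbf{F}\mathbf{P}\mathbf{F}^{-1}$ is dense and non-diagonal, hence mixes frequency slices, hence changes some slice energy for generic data. It never identifies the exceptional permutations and never makes the genericity in $\mathcal{X}$ explicit.

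You reduce each claim to the non-vanishing of a linear or quadratic form in the entries of $\mathcal{X}$, and you pin down the exceptional sets exactly:
\begin{itemize}
\item for equivariance, the multipliers $k\mapsto ak$ with $a^2\equiv 1 \pmod n$;
\item for the energy vector $\boldsymbol{e}$, the dihedral group, obtained via $\mathbf{Q}_k=\mathbf{P}^\top\mathbf{C}_k\mathbf{P}-\mathbf{C}_k$ and the cyclic-distance argument on $\mathbf{C}_2$.
\end{itemize}

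This precision matters for correctness. The paper's inference ``$\mathbf{F}\mathbf{P}\mathbf{F}^{-1}$ non-diagonal $\Rightarrow$ some energy changes'' fails for reflections $t\mapsto c-t$. There $\mathbf{F}\mathbf{P}\mathbf{F}^{-1}$ is a phased frequency reversal: non-diagonal, and also not dense, as is the case for every affine map $t\mapsto at+c$. Yet $|\bar{x}_{-k}|=|\bar{x}_k|$ for real data, so $\boldsymbol{e}$ is preserved exactly. The paper's implicit exception set (cyclic shifts) is therefore too small. Yours is exactly right, and it even shows that the $k=2$ energy alone changes generically. What the paper's version offers is only brevity and the intuitive mixing picture.

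\textbf{The sentence to remove.} You claim that the commutant of $\mathbf{F}$ consists of the polynomials in $\mathbf{F}$, and hence contains only $\mathbf{I}$ and $\mathbf{R}$ among permutation matrices. The first part holds only for nonderogatory matrices. Since $\mathbf{F}^4=n^2\mathbf{I}$ allows only four eigenvalues, some eigenvalue repeats once $n\ge 5$. The commutant is then the strictly larger algebra of matrices preserving each eigenspace. Concretely, for $n=8$ the permutations $k\mapsto 3k$ and $k\mapsto 5k$ commute with $\mathbf{F}$, contradicting your claim.

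Your direct computation already gives the correct family, so rely on it alone. Write its defining condition correctly, though. With $0$-based indices and $(\mathbf{P}\boldsymbol{x})_j=x_{\pi(j)}$, the equation $\mathbf{P}\mathbf{F}=\mathbf{F}\mathbf{P}$ reads $\pi(j)k\equiv j\,\pi^{-1}(k) \pmod n$. The relation $\omega^{\pi(j)k}=\omega^{j\pi(k)}$ as you wrote it is satisfied by every unit multiplier. It is the inverse that forces $a^2\equiv 1$.

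Neither slip affects the final argument. All you need is that the commuting set is a small proper subset. Outside it, $\mathbf{F}\mathbf{P}-\mathbf{P}\mathbf{F}\neq 0$ and your kernel argument applies.
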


The formal proof is provided in Appendix~\ref{proof1}. \cref{fft_ps} establishes that the FFT-domain low-rank structure exploited by existing TMC methods is sensitive to the particular arrangement of samples. To formalize exactly how permutation degrades performance, we link this spectral redistribution to the Time-Domain Circular Total Variation (CTV). For $\mathcal{X} \in \mathbb{R}^{d \times m \times n}$, we quantify its smoothness along the sample mode as:
\begin{equation}
\label{TV}
CTV(\mathcal{X}) = \sum_{t=1}^{n-1} \|\mathcal{X}^{(t+1)} - \mathcal{X}^{(t)}\|_F +\|\mathcal{X}^{(1)}-\mathcal{X}^{(n)}\|_F,
\end{equation}
where $\mathcal{X}^{(t)}$ denotes the $t$-th sample slice. When samples are sorted by 
category, $\mathcal{X}^{(t+1)} \approx \mathcal{X}^{(t)}$ holds for most $t$ except at 
cluster boundaries, so class-contiguous ordering typically yields a relatively small CTV. 
In contrast, a uniformly random permutation $\mathbf{P}$ is unlikely to preserve this local smoothness and therefore typically increases $CTV(\mathcal{X}\times_3\mathbf{P})$ relative to $CTV(\mathcal{X})$. The following proposition explains why a small CTV provides a favorable condition for FFT-based low-rank tensor regularization.

\begin{proposition}
\label{FEB}
\textbf{ (Frequency Energy Bound)} 
Let $\bar{\mathcal{X}} = \mathcal{T}_{\text{\rm fft}}(\mathcal{X},\mathbf{F},3)$ be the representation of the tensor $\mathcal{X}\in\mathbb{R}^{d \times m\times n}$ in the Fourier domain. The Frobenius norm of the $k$-th alternating-current (AC) frequency component is bounded by the Circular Total Variation (CTV) defined in \cref{TV}:
\begin{equation}\|\bar{\mathcal{X}}^{(k)}\|_F \leq \frac{CTV(\mathcal{X})}{2 \left| \sin(\pi (k-1) / n) \right|}, \quad \forall k=2, \dots, n.
\end{equation}
\end{proposition}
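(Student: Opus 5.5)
The plan is the standard discrete shift argument: compare the Fourier coefficients of $\mathcal{X}$ with those of its circular difference along the sample mode. Write the DFT entrywise along mode $3$ as $\bar{\mathcal{X}}^{(k)} = \sum_{t=1}^{n} \omega^{(k-1)(t-1)} \mathcal{X}^{(t)}$ with $\omega = e^{-2\pi i/n}$. This is the unnormalized convention implicit in $\mathbf{F}$; if a unitary normalization were intended, the bound only becomes sharper by a factor $1/\sqrt{n}$, so I would fix the unnormalized one. Define the circularly shifted tensor $\mathcal{S}$ by $\mathcal{S}^{(t)} = \mathcal{X}^{(t+1)}$, with the index taken mod $n$ so that $\mathcal{S}^{(n)}=\mathcal{X}^{(1)}$. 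Then define the difference tensor $\mathcal{D} = \mathcal{S} - \mathcal{X}$. Its Frobenius slices satisfy $\sum_{t}\|\mathcal{D}^{(t)}\|_F = CTV(\mathcal{X})$ exactly, by \cref{TV}.

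\textbf{Step 1 (shift theorem).} Reindex the sum to get $\bar{\mathcal{S}}^{(k)} = \omega^{-(k-1)}\bar{\mathcal{X}}^{(k)}$. By linearity this gives
\begin{equation}
\bar{\mathcal{D}}^{(k)} = \left(\omega^{-(k-1)} - 1\right)\bar{\mathcal{X}}^{(k)}.
\end{equation}
The circular boundary term $\mathcal{X}^{(1)}-\mathcal{X}^{(n)}$ is exactly what makes this identity exact, with no boundary error.

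\textbf{Step 2 (modulus of the multiplier).} Compute $|e^{i\theta}-1| = 2|\sin(\theta/2)|$ with $\theta = 2\pi(k-1)/n$. This gives $|\omega^{-(k-1)}-1| = 2|\sin(\pi(k-1)/n)|$, which is nonzero for $k=2,\dots,n$. Therefore $\|\bar{\mathcal{X}}^{(k)}\|_F = \|\bar{\mathcal{D}}^{(k)}\|_F / \bigl(2|\sin(\pi(k-1)/n)|\bigr)$.

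\textbf{Step 3 (bound the transformed difference).} Apply the triangle inequality for the Frobenius norm, using that $|\omega^{(k-1)(t-1)}|=1$:
\begin{equation}
\|\bar{\mathcal{D}}^{(k)}\|_F \le \sum_{t=1}^{n}\|\mathcal{D}^{(t)}\|_F = CTV(\mathcal{X}).
\end{equation}
Combining this with Step 2 yields the claimed bound for every $k = 2, \dots, n$.

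The only delicate point is bookkeeping. The sign and direction of the shift must match the DFT sign convention, though the modulus in Step 2 is the same either way. The normalization of $\mathbf{F}$ must also be consistent with the definition of $\bar{\mathcal{X}}$ used in \cref{alg:fft_low_rank}. Apart from these conventions, the argument is elementary. It deliberately uses the $\ell_1$-type CTV rather than a Parseval-based $\ell_2$ bound, because the triangle inequality of Step 3 matches the sum of Frobenius norms in \cref{TV} exactly.
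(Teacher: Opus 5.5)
Your proof is correct and follows essentially the same route as the paper: the circular difference along the sample mode, the DFT shift theorem giving a multiplier of modulus $2|\sin(\pi(k-1)/n)|$, and then the triangle inequality with unit-modulus Fourier weights. The only difference is organizational. You apply the shift theorem and the triangle inequality directly to the matrix-valued slices in Frobenius norm, so you land on $CTV(\mathcal{X})$ in one step. The paper instead first bounds each fiber's coefficient by that fiber's scalar circular variation, then aggregates over fibers with a Minkowski-type swap of the $\ell_2$ and $\ell_1$ sums.
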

Proposition~\ref{FEB} characterizes how sample-mode variation controls the magnitude of the non-DC Fourier components. Its proof is provided in Appendix~\ref{proof2}. Note that the DC component ($k=1$) is invariant to sample permutations and primarily captures the global mean, carrying limited information for distinguishing cluster structures. The sample-dependent structural variations relevant to clustering are primarily encoded in the non-DC components ($k>1$), whose magnitudes are bounded by Proposition~\ref{FEB}. 

When samples are arranged contiguously by category, $\mathcal{X}$ exhibits an approximately piecewise-smooth structure with a relatively small $CTV(\mathcal{X})$. This tightens the upper envelope of the non-DC components and typically promotes stronger spectral energy concentration (\cref{fig:ratio1}(b), shuffle rate $=0$), providing a favorable spectral structure for low-rank regularization to recover the multi-view consensus. In contrast, random permutations destroy this local continuity and substantially increase $CTV(\mathcal{X})$. The resulting spectral redistribution spreads discriminative energy across a broader range of frequency slices. Consequently, low-rank shrinkage may excessively suppress these dispersed structural components, leading to degradation in clustering accuracy observed in \cref{fig:ratio1}(a).

\section{Graph-Spectral Low-Rank Tensor Learning}

The preceding analysis underscores that to overcome the "periodic curse" of fixed trigonometric bases, a robust low-rank tensor framework necessitates a data-adaptive transformation characterized by permutation equivariance. To this end, we introduce the Graph Fourier Transform (GFT) \cite{shuman2013emerging,sandryhaila2013discrete,pan2025wind,deng2021graph} in the t-SVD-based low-rank tensor learning framework.

\begin{definition}
\textbf{(Graph Fourier Transform of Tensor)} Given a graph Laplacian matrix $\mathbf{L} = \mathbf{U} \mathbf{\Lambda} \mathbf{U}^\top \in \mathbb{R}^{n \times n}$, where $\mathbf{U} = [\boldsymbol{u}_1, \dots, \boldsymbol{u}_n]$ is the orthonormal matrix of eigenvectors, the Graph Fourier Transform (GFT) of a signal $\boldsymbol{x} \in \mathbb{R}^{n}$ is defined as $\bar{\boldsymbol{x}} = \mathbf{U}^\top \boldsymbol{x}$. For a third-order tensor $\mathcal{X} \in \mathbb{R}^{d \times m \times n}$, the GFT along the third dimension is defined as:
\begin{equation}
    \bar{\mathcal{X}} = \mathcal{T}_{\rm gft}(\mathcal{X},\mathbf{U}^\top,3) = \mathcal{X} \times_3 \mathbf{U}^\top
\end{equation}
where each tube $\mathcal{X}(i, j,:)$ is transformed by $\mathbf{U}^\top$.
\label{gft_de}
\end{definition}

This GFT-based formulation fundamentally departs from the classical FFT approach by respecting the underlying topological structure of the data rather than imposing an artificial sequential order. The significance of this geometric awareness is formalized in \cref{theorem_gft}, which guarantees robustness against sample shuffling.

\begin{theorem} \textbf{(Permutation Equivariance of GFT-based Tensor Transform)}
\label{theorem_gft}
Let $\mathcal{X} \in \mathbb{R}^{d \times m \times n}$ be a third-order tensor and $\mathbf{L} = \mathbf{U}\mathbf{\Lambda} \mathbf{U}^\top\in\mathbb{R}^{n\times n}$ be the graph Laplacian characterizing the relationships along the third mode. For any permutation matrix $\mathbf{P} \in \{0, 1\}^{n \times n}$, let $\mathcal{X}' = \mathcal{X} \times_3\mathbf{P}$ and $\mathbf{L}' = \mathbf{P}\mathbf{L}\mathbf{P}^\top = \mathbf{U}'\mathbf{\Lambda} \mathbf{U}'^\top$ be the permuted tensor and Laplacian, respectively. The following properties hold:

\textbf{1. Spectral Subspace Equivariance:} There exists a block-diagonal orthogonal matrix $\mathbf{Q}$, whose blocks correspond to the eigenspaces associated with repeated eigenvalues, such that $\mathbf{U}'=\mathbf{P}\mathbf{U}\mathbf{Q}$ and
\begin{equation}
\mathcal{T}_{\rm gft}(\mathcal{X}', \mathbf{U}'^\top,3)
=
\mathcal{T}_{\rm gft}(\mathcal{X}, \mathbf{U}^\top,3)
\times_3\mathbf{Q}^\top.
\end{equation}

\textbf{2. GTR Rank Stability:} Under the simple spectrum condition, the following equality holds:
\begin{equation}
  \left\|\mathcal{X}'\right\|_{\rm gft\text{-}GTR} =  \left\|\mathcal{X}\right\|_{\rm gft\text{-}GTR}.
\end{equation}
\end{theorem}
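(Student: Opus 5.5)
Both parts reduce to linear algebra on the eigendecomposition of $\mathbf{L}$ and on how the mode-3 product composes.

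\textbf{Step 1: relate $\mathbf{U}'$ to $\mathbf{U}$.} From $\mathbf{L}'=\mathbf{P}\mathbf{L}\mathbf{P}^\top=(\mathbf{P}\mathbf{U})\mathbf{\Lambda}(\mathbf{P}\mathbf{U})^\top$, the matrix $\mathbf{P}\mathbf{U}$ is orthonormal, since $\mathbf{P}$ is orthogonal. It therefore diagonalizes $\mathbf{L}'$ with the same ordered eigenvalues $\mathbf{\Lambda}$. For a fixed eigenvalue $\lambda$, the columns of $\mathbf{U}'$ and of $\mathbf{P}\mathbf{U}$ associated with $\lambda$ are two orthonormal bases of the same eigenspace $\ker(\mathbf{L}'-\lambda\mathbf{I})$. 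Hence they differ by an orthogonal change of basis $\mathbf{Q}_\lambda$. Assembling these blocks in the eigenvalue order gives a block-diagonal orthogonal $\mathbf{Q}$ with $\mathbf{U}'=\mathbf{P}\mathbf{U}\mathbf{Q}$. Blocks of size one correspond to simple eigenvalues and reduce to $\pm1$.

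\textbf{Step 2: the transform identity.} Use the composition rule $(\mathcal{X}\times_3\mathbf{A})\times_3\mathbf{B}=\mathcal{X}\times_3(\mathbf{B}\mathbf{A})$, which follows from the definition by acting on each tube. Then
\begin{equation*}
\mathcal{X}'\times_3\mathbf{U}'^\top=\mathcal{X}\times_3(\mathbf{Q}^\top\mathbf{U}^\top\mathbf{P}^\top\mathbf{P})=\mathcal{X}\times_3(\mathbf{Q}^\top\mathbf{U}^\top)=(\mathcal{X}\times_3\mathbf{U}^\top)\times_3\mathbf{Q}^\top,
\end{equation*}
using $\mathbf{P}^\top\mathbf{P}=\mathbf{I}$. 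This is Part 1.

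\textbf{Step 3: rank stability.} Under a simple spectrum, every block of $\mathbf{Q}$ is $1\times1$, so $\mathbf{Q}$ is diagonal with entries $\pm1$. The transformed tensors therefore satisfy $\bar{\mathcal{X}}'^{(k)}=q_k\bar{\mathcal{X}}^{(k)}$ with $q_k=\pm1$, slice by slice and with the same indexing. A sign flip of a matrix leaves its singular values unchanged.

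In the GT-SVD, the transformed $\mathcal{S}$ slices are exactly the diagonal matrices of singular values of the transformed $\mathcal{X}$ slices. This holds because the GT-SVD is built slicewise in the transform domain, as in \cref{alg:fft_low_rank}. So the multiset of diagonal entries $\mathcal{T}(\mathcal{S},\mathbf{U}^\top,3)^{(k)}(i,i)$ coincides for $\mathcal{X}$ and $\mathcal{X}'$ for every $k$. Summing $rank(\cdot)$ gives equality of $\|\cdot\|_{\rm gft\text{-}GTR}$. The same argument yields equality for any surrogate $\psi$.

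\textbf{Main obstacle.} The delicate point is making Step 3 rigorous given that the GT-SVD is not unique. I would handle this by arguing that the diagonal of the transformed $\mathcal{S}$ must equal the singular values of $\bar{\mathcal{X}}^{(k)}$, up to ordering and sign, in any GT-SVD. The reason is that each transformed slice $\bar{\mathcal{X}}^{(k)}=\bar{\mathcal{U}}^{(k)}\bar{\mathcal{S}}^{(k)}\bar{\mathcal{V}}^{(k)\top}$ has orthogonal factors, and orthogonality transfers slicewise because $\mathbf{U}^\top$ is real orthogonal. The rank count then depends only on $\bar{\mathcal{X}}^{(k)}$.

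A secondary remark concerns the non-simple case. There $\mathbf{Q}$ mixes slices within an eigenspace, so per-slice ranks can change. This is why Part 2 needs the simple spectrum hypothesis.
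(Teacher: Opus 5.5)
Your proposal is correct and follows the paper's proof essentially step for step. You obtain $\mathbf{U}'=\mathbf{P}\mathbf{U}\mathbf{Q}$ by comparing two orthonormal eigenbases of $\mathbf{L}'$ with the same ordered $\mathbf{\Lambda}$, apply the mode-3 composition rule with $\mathbf{P}^\top\mathbf{P}=\mathbf{I}$, and, under a simple spectrum, reduce $\mathbf{Q}$ to a sign matrix so that each transformed slice is only negated and keeps its rank; the paper argues the same way. Your discussion of GT-SVD non-uniqueness goes beyond the paper, which simply identifies the GTR with the average matrix rank of the transformed slices. Note, however, that this identification (and your ``up to sign'' remark) needs the usual convention that the diagonal entries of the transformed $\mathcal{S}$ slices are nonnegative, because the paper's $rank(x)$ counts only $x>0$.
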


The formal proof is provided in Appendix~\ref{proof3}. By leveraging this permutation-equivariance property, we can mitigate the sample-order sensitivity of current TMC frameworks. We reformulate the low-rank tensor learning sub-problem by substituting the FFT constraint with the GFT-enhanced norm:
\begin{equation}
\begin{aligned}
     &\min_{\mathcal{Z}}\lambda\|\mathcal{Z}\|_{\rm gft \text{-}GTR}^{\psi}+ \frac{1}{2}\left\|\mathcal{Z}-\mathcal{Y}\right\|_{F}^2,\\
     \end{aligned}
\end{equation}
where $\mathcal{Y}$ is an auxiliary tensor. Similar to the classical framework, this sub-problem is efficiently solved by applying Singular Value Thresholding (SVT) to each graph-spectral slice $\bar{\mathcal{Y}}^{(k)}$ in the graph-spectral domain, followed by the inverse GFT (\cref{alg:gft_low_rank}). Because the GFT basis adapts to the sample manifold, the resulting low-rank regularizer is better aligned with the intrinsic data geometry and is less likely to suppress structural components solely because of arbitrary sample indexing. Although exact slice-wise GTR invariance is guaranteed only under the simple spectrum condition, the GFT-based low-rank mapping has a bounded permutation-equivariance error when repeated eigenvalues are present (see Appendix~\ref{app:bounded_equivariance_error}).

\begin{algorithm}[H]
\caption{t-SVD-based Low-Rank Tensor Learning with GFT}
\label{alg:gft_low_rank}
\begin{algorithmic}[1] 
\STATE \textbf{Input:} Representation tensor $\mathcal{Y} \in \mathbb{R}^{d \times m \times n}$, parameter $\lambda$, proximal penalty function $\psi(\cdot)$, graph Fourier basis $\mathbf{U}$.
\STATE \textbf{Output:} Low-rank tensor $\mathcal{Z}$.

\STATE $\bar{\mathcal{Y}}\in\mathbb{R}^{d\times m\times n} \leftarrow \mathcal{T}_\text{\rm gft}(\mathcal{Y}, \mathbf{U}^\top,3)$;
\FOR{$k = 1,\ldots,n$}
    \STATE $[\mathbf{U}_{svd}, {\Sigma}, \mathbf{V}] \leftarrow \text{SVD}(\bar{\mathcal{Y}}^{(k)})$;
  \STATE${\Sigma}_{\text{new}} = \text{Prox}_{\psi, \lambda}({\Sigma})$;
    \STATE $\bar{\mathcal{Z}}^{(k)} \leftarrow \mathbf{U}_{svd} {\Sigma}_{\text{new}} \mathbf{V}^\top$;
\ENDFOR
\STATE $\mathcal{Z} \leftarrow \mathcal{T}^{-1}_{\text{\rm gft}}(\bar{\mathcal{Z}}, \mathbf{U}^\top,3)$;
\end{algorithmic}
\end{algorithm}

\subsection{Simple Graph Construction Strategy}
Although the GFT removes the dependence on a fixed periodic basis, its effectiveness depends critically on the quality of the underlying graph Laplacian $\mathbf{L}$. To capture the shared geometric structure across views with moderate computational overhead, we adopt a simple graph construction strategy, termed \textbf{SimG}.

Given a normalized multi-view dataset $\{\mathbf{X}^{(v)}\}_{v=1}^m$ with $m$ views, where $\mathbf{X}^{(v)} \in \mathbb{R}^{d_v \times n}$ contains $n$ samples with $d_v$ features, we first form a global representation by vertically concatenating all views:
\begin{equation}
\mathbf{X}_{unified}=[\mathbf{X}^{(1)};\ldots;\mathbf{X}^{(m)}]=[\hat{\boldsymbol{x}}_1,\ldots,\hat{\boldsymbol{x}}_n]\in \mathbb{R}^{d_{all}\times n},
\label{x_u}
\end{equation}
where $d_{all} = \sum_{v=1}^m d_v$ is the total feature dimensionality. To characterize the local connectivity of the sample manifold, we construct a ${k_n}$-nearest-neighbor (${k_n}$-NN) graph. The entries of the unified adjacency matrix $\mathbf{W}_{unified} \in \mathbb{R}^{n \times n}$ are defined as:
\begin{equation}
\mathbf{W}_{unified}(i,j) =
\begin{cases} 1 & \text{if } \hat{\boldsymbol{x}}_{j} \in \mathcal{N}_{k_n}(\hat{\boldsymbol{x}}_i) \text{ or } \hat{\boldsymbol{x}}_i \in \mathcal{N}_{k_n}(\hat{\boldsymbol{x}}_{j}), \\ 0 & \text{otherwise,} \end{cases}
\end{equation}
where $\mathcal{N}_{k_n}(\hat{\boldsymbol{x}}_i)$ denotes the set of ${k_n}$ nearest neighbors of sample $\hat{\boldsymbol{x}}_i$ based on the Euclidean distance in the joint feature space. By employing the normalized Laplacian $\mathbf{L} = \mathbf{I} - \mathbf{D}^{-1/2} \mathbf{W}_{unified} \mathbf{D}^{-1/2}$, we obtain the graph Fourier basis $\mathbf{U}$ through eigendecomposition.

\subsection{Anchor-based Graph Fourier Transform}
Although the GFT-based framework provides theoretical guarantees, the time and space complexities of GFT are both $\mathcal{O}(n^2)$. Such computational overhead limits the scalability of the algorithm for large-scale datasets. To alleviate this burden, we propose a computationally efficient framework termed \textbf{Anchor-based Graph Fourier Transform (AGFT)}.

\begin{definition} \textbf{(Anchor-based Graph Fourier Transform, AGFT)}
\label{def:A-GFT}
Given a concatenated multi-view feature matrix $\mathbf{X}_{unified} \in \mathbb{R}^{d_{all} \times n}$ in \cref{x_u}, let $\mathbf{A} = [\boldsymbol{a}_1, \dots, \boldsymbol{a}_{k_a}] \in \mathbb{R}^{d_{all} \times k_a}$ be ${k_a}$ anchors (${k_a} \ll n$) obtained via anchor-selection methods. Let $\mathbf{S} \in \mathbb{R}^{n \times {k_a}}$ be the anchor graph constructed following the method in \cite{long2025tlrlf4mvc}, where $\mathbf{S}_{ij}$ measures the similarity between sample $\hat{\boldsymbol{x}}_i$ and anchor $\boldsymbol{a}_j$. The AGFT of a tensor $\mathcal{X} \in \mathbb{R}^{d \times m \times n}$ along the third dimension is defined as:
\begin{equation}
\bar{\mathcal{X}} = \mathcal{T}_{\rm agft}(\mathcal{X},\mathbf{U}_{k_a}^\top, 3) = \mathcal{X} \times_3 \mathbf{U}_{k_a}^\top \in \mathbb{R}^{d \times m \times {k_a}},
\end{equation}
where $\mathbf{U}_{k_a} \in \mathbb{R}^{n \times {k_a}}$ is the anchor-based graph Fourier basis. Specifically, let $\mathbf{\Delta} = \text{diag}(\mathbf{S}^\top \mathbf{1}) \in \mathbb{R}^{{k_a} \times {k_a}}$ and $\mathbf{D} = \text{diag}(\mathbf{S}\mathbf{1}) \in \mathbb{R}^{n \times n}$ be the degree matrices. We first construct the normalized anchor affinity matrix:
$\hat{\mathbf{S}} = \mathbf{D}^{-1/2} \mathbf{S} \mathbf{\Delta}^{-1/2}.$
Then, we compute the eigendecomposition of the reduced matrix:
$\hat{\mathbf{S}}^\top \hat{\mathbf{S}} = \mathbf{V} {\Sigma}^2 \mathbf{V}^\top,$
where $\mathbf{V}$ and ${\Sigma}$ contain the corresponding right singular vectors and positive singular values, respectively. The anchor-based graph Fourier basis is then constructed as:
$\mathbf{U}_{k_a} = \hat{\mathbf{S}} \mathbf{V} {\Sigma}^{-1} = \mathbf{D}^{-1/2} \mathbf{S} \mathbf{\Delta}^{-1/2} \mathbf{V} {\Sigma}^{-1}.$
\end{definition}

\begin{remark} \textbf{(Permutation Invariance of AGFT)}
The truncated spectral basis $\mathbf{U}_{k_a}$ exhibits \textit{permutation equivariance}. Specifically, if the samples are permuted by $\mathbf{P}$ while the anchor set is kept fixed or selected by a permutation-invariant procedure, the
sample-to-anchor matrix satisfies $\mathbf{S}'=\mathbf{P}\mathbf{S}$. Assuming distinct nonzero singular values and a deterministic sign convention, the left singular basis obtained from the SVD of the normalized affinity $\hat{\mathbf{S}}$ satisfies $\mathbf{U}_{k_a}'=\mathbf{P}\mathbf{U}_{k_a}$. Consequently, the AGFT spectral coefficients are invariant to sample ordering:
$\mathcal{T}_{\rm agft}(\mathcal{X} \times_3\mathbf{P}, \mathbf{U}_{k_a}'^\top,3) = \mathcal{T}_{\rm agft}(\mathcal{X}, \mathbf{U}_{k_a}^\top,3).$
This ensures that the resulting truncated graph-spectral coefficients and the corresponding low-rank regularization remain consistent regardless of the sample sequence.
\end{remark}

\begin{table*}[!h]
\setlength{\abovecaptionskip}{0.1cm} 
\centering
\caption{Clustering performance (mean(std)) of five baseline architectures on the MSRCv1 and NGs datasets.
}
\label{tab:main_results}

  \resizebox{\textwidth}{!}
  {
\begin{tabular}{l l cccccccccc}
\toprule
\multirow{2}{*}{\textbf{Baselines}} & \multirow{2}{*}{\textbf{Variants}} & \multicolumn{5}{c}{\textbf{Shuffled MSRCv1}} & \multicolumn{5}{c}{\textbf{Shuffled NGs}} \\
\cmidrule(lr){3-7} \cmidrule(lr){8-12}
& & ACC& NMI & Purity& F-score &ARI & ACC& NMI & Purity& F-score & ARI \\
\midrule
  & +FFT   & 81.43(2.38) & 73.35(1.53) & 81.81(2.38) & 70.82(1.94) & 65.99(2.30) & 88.76(0.86) & 74.82(1.24) & 88.76(0.86) & 80.22(1.29) & 75.30(1.61) \\
          & +FFT(view mode) & 86.10(0.21) & 77.90(0.50) & 86.10(0.21) & 75.90(0.28) & 71.95(0.33) & \textbf{99.00(0)} & \textbf{96.52(0)} & \textbf{99.00(0)} & \textbf{98.00(0)} & \textbf{97.51(0)} \\
          \rowcolor{gray!15} \cellcolor{white} & +GFT(SimG) & 88.57(0) & 79.33(0) & 88.57(0) & 78.69(0) & 75.52(0) & \underline{98.60(0)} & \underline{95.13(0)} & \underline{98.60(0)} & \underline{97.21(0)} & \underline{96.52(0)} \\
          \rowcolor{gray!15} \cellcolor{white} & +GFT(GMC) & \textbf{91.90(0)} & \textbf{84.34(0.37)} & \textbf{91.90(0)} & \textbf{84.44(0.04)} & \textbf{81.92(0.04)} & 98.28(0.11) & 94.19(0.38) & 98.28(0.11) & 96.59(0.22) & 95.74(0.27) \\
          \rowcolor{gray!15} \cellcolor{white} & +AGFT & \underline{88.86(0.43)} & \underline{80.45(0.44)} & \underline{88.86(0.43)} & \underline{79.20(0.76)} & \underline{75.80(0.90)} & 92.60(0) & 79.37(0) & 92.60(0) & 85.75(0) & 82.30(0) \\
          \rowcolor{gray!30} \multirow{-6}{*}{\cellcolor{white} t-SVD-MVC} & +GFT(Ideal) & 100(0) & 100(0) & 100(0) & 100(0) & 100(0) & 100(0) & 100(0) & 100(0) & 100(0) & 100(0) \\
\midrule
\addlinespace[0.5em] 

  & +FFT   & 47.05(3.17) & 30.04(2.06) & 47.71(2.84) & 30.82(1.77) & 19.49(2.08) & 68.68(1.67) & 39.99(1.81) & 68.68(1.67) & 50.81(1.80) & 38.50(2.28) \\
          & +FFT(view mode) & 54.95(2.62) & 37.79(1.35) & 55.14(2.46) & 37.10(1.58) & 26.75(1.78) & 82.24(0.09) & 61.22(0.31) & 82.24(0.09) & 68.50(0.16) & 60.63(0.20) \\
          \rowcolor{gray!15} \cellcolor{white} & +GFT(SimG) & 69.71(0.99) & 56.56(1.19) & 69.71(0.99) & \underline{62.11(0.84)} & \underline{47.21(1.17)} & \underline{85.00(0)} & \underline{67.09(0)} & \underline{85.00(0)} & \underline{73.10(0)} & \underline{66.39(0)} \\
          \rowcolor{gray!15} \cellcolor{white} & +GFT(GMC) & \underline{74.48(4.59)} & \underline{62.76(6.17)} & \underline{74.86(4.50)} & 61.02(5.76) & \textbf{54.57(6.73)} & \textbf{97.56(0.65)} & \textbf{92.23(1.63)} & \textbf{97.56(0.65)} & \textbf{95.20(1.24)} & \textbf{94.01(1.55)} \\
          \rowcolor{gray!15} \cellcolor{white} & +AGFT & \textbf{80.10(1.23)} & \textbf{66.56(1.03)} & \textbf{80.10(1.23)} & \textbf{66.36(1.15)} & 41.45(2.20) & 75.00(0) & 58.53(0) & 75.00(0) & 61.84(0) & 52.14(0) \\
          \rowcolor{gray!30} \multirow{-6}{*}{\cellcolor{white} TLSpNM} & +GFT(Ideal) & 100(0) & 100(0) & 100(0) & 100(0) & 100(0) & 99.84(0.09) & 99.44(0.31) & 99.84(0.09) & 99.68(0.18) & 99.60(0.22) \\
\midrule
\addlinespace[0.5em] 

  & +FFT   & 80.33(0.55) & 69.82(1.38) & 80.33(0.55) & 67.70(2.11) & 62.30(1.62) & 95.32(0.19) & 85.89(0.54) & 95.32(0.19) & 90.85(0.65) & 88.58(0.46) \\
          & +FFT(view mode) & 80.29(0.43) & 70.04(2.12) & 80.29(0.43) & 67.88(2.02) & 62.59(2.37) & 95.80(0) & 87.84(0) & 95.80(0) & 91.78(0) & 89.74(0) \\
          \rowcolor{gray!15} \cellcolor{white} & +GFT(SimG) & 87.81(0.26) & 80.03(0.48) & 87.81(0.26) & 77.58(0.33) & 73.85(0.38) & \underline{97.16(0.09)} & \underline{90.79(0.30)} & \underline{97.16(0.09)} & \underline{94.39(0.17)} & \underline{93.00(0)} \\
          \rowcolor{gray!15} \cellcolor{white} & +GFT(GMC) & \textbf{91.62(0.26)} & \textbf{83.99(0.50)} & \textbf{91.62(0.26)} & \textbf{83.90(0.41)} & \textbf{81.28(0.48)} & \textbf{98.24(0.09)} & \textbf{94.02(0.23)} & \textbf{98.24(0.09)} & \textbf{96.51(0.17)} & \textbf{95.64(0)} \\
          \rowcolor{gray!15} \cellcolor{white} & +AGFT & \underline{89.05(0)} & \underline{80.05(0)} & \underline{89.05(0)} & \underline{79.07(0)} & \underline{75.62(0)} & 90.84(0) & 75.56(0) & 90.84(0) & 82.56(0) & 78.21(0) \\
          \rowcolor{gray!30} \multirow{-6}{*}{\cellcolor{white} ASR-ETR} & +GFT(Ideal) & 100(0) & 100(0) & 100(0) & 100(0) & 100(0) & 100(0) & 100(0) & 100(0) & 100(0) & 100(0) \\
\midrule
\addlinespace[0.5em] 
    & +FFT   & 77.81(1.74) & 67.56(2.84) & 77.81(1.74) & 65.97(1.95) & 60.41(2.29) & 85.84(0.52) & 67.72(0.77) & 85.84(0.52) & 73.99(1.08) & 67.48(1.14) \\
          & +FFT(view mode) & 78.29(0.80) & 70.23(0.52) & 78.29(0.80) & 67.28(0.50) & 61.97(0.58) & 86.44(0.09) & 68.59(0.12) & 86.44(0.09) & 75.07(0.30) & 68.64(0.17) \\
          \rowcolor{gray!15} \cellcolor{white} & +GFT(SimG) & 85.05(1.53) & 74.99(1.78) & 85.05(1.53) & 74.32(1.64) & 70.06(1.92) & \underline{96.60(0)} & \underline{89.53(0)} & \underline{96.60(0)} & \underline{93.33(0)} & \underline{91.68(0)} \\
          \rowcolor{gray!15} \cellcolor{white} & +GFT(GMC) & \textbf{88.67(2.72)} & \textbf{79.58(4.08)} & \textbf{88.67(2.72)} & \textbf{78.75(4.75)} & \textbf{75.26(5.55)} & \textbf{98.20(0)} & \textbf{93.92(0)} & \textbf{98.20(0)} & \textbf{96.43(0)} & \textbf{95.54(0)} \\
          \rowcolor{gray!15} \cellcolor{white} & +AGFT & \underline{87.81(0.26)} & \underline{78.87(0.55)} & \underline{87.81(0.26)} & \underline{77.38(0.44)} & \underline{73.61(0.51)} & 87.80(0) & 71.35(0) & 87.80(0) & 77.53(0) & 71.92(0) \\
          \rowcolor{gray!30} \multirow{-6}{*}{\cellcolor{white} ESTMC} & +GFT(Ideal) & 100(0) & 100(0) & 100(0) & 100(0) & 100(0) & 100(0) & 100(0) & 100(0) & 100(0) & 100(0) \\
\midrule
\addlinespace[0.5em] 
  & +FFT   & \underline{85.24(0)} & 75.33(0) & \underline{85.24(0)} & 74.60(0) & 70.34(0) & 88.26(0.19) & 70.02(0.72) & 88.26(0.19) & 78.32(0.34) & 72.93(0.42) \\
          & +FFT(view mode) & 85.05(0.43) & 76.21(0.63) & 85.05(0.43) & \underline{75.83(1.21)} & \underline{71.81(1.36)} & 91.40(0) & 77.13(0) & 91.40(0) & 83.91(0) & 79.93(0) \\
          \rowcolor{gray!15} \cellcolor{white} & +GFT(SimG) & 83.81(2.13) & 75.05(0.63) & 84.38(0.85) & 73.90(0.22) & 69.57(0.25) & \underline{97.20(0)} & \underline{91.26(0)} & \underline{97.20(0)} & \underline{94.46(0)} & \underline{93.08(0)} \\
          \rowcolor{gray!15} \cellcolor{white} & +GFT(GMC) & \textbf{91.90(0)} & \textbf{84.52(0)} & \textbf{91.90(0)} & \textbf{84.48(0)} & \textbf{81.97(0)} & \textbf{98.20(0)} & \textbf{93.92(0)} & \textbf{98.20(0)} & \textbf{96.43(0)} & \textbf{95.54(0)} \\
          \rowcolor{gray!15} \cellcolor{white} & +AGFT & 84.76(0) & \underline{76.45(0)} & 84.76(0) & 74.23(0) & 69.93(0) & 90.80(0) & 76.95(0) & 90.80(0) & 82.86(0) & 78.61(0) \\
          \rowcolor{gray!30} \multirow{-6}{*}{\cellcolor{white} TLRLF4MVC} & +GFT(Ideal) & 95.24(0) & 90.53(0) & 95.24(0) & 91.07(0) & 89.62(0) & 100(0) & 100(0) & 100(0) & 100(0) & 100(0) \\
\bottomrule
\end{tabular}}

\end{table*}
\begin{remark} \textbf{(Intuitive Interpretation of AGFT)}
Since $\mathbf{U}_{k_a}\in\mathbb{R}^{n\times k_a}$ is rectangular, AGFT is not an invertible transform over the original sample space, but a truncated projection onto a $k_a$-dimensional graph-spectral subspace. Intuitively, $\mathbf{U}_{k_a}$ retains the dominant smooth components characterized by the anchor graph, while the omitted orthogonal complement $\mathbf{U}_{\perp}$ is assumed to contain relatively little structural energy. Because the singular-value proximal operator maps zero matrices to zero, the discarded components remain inactive during low-rank shrinkage. AGFT can therefore be viewed as performing slice-wise low-rank learning within a compact graph-spectral subspace. Moreover, it reduces the per-tube cost of full GFT from $\mathcal{O}(n^2)$ to $\mathcal{O}(nk_a)$, where $k_a\ll n$, thereby improving scalability on large datasets.
\end{remark}


\section{Experiments}

\subsection{Experimental Setup}

 \textbf{Datasets:} We conduct experiments on eight benchmark datasets, including MSRCv1, NGs, 
BBCSport, Caltech101-20, CCV, Caltech101-all, Aloi-100, and Animal. For each trial, we generate a random permutation of the sample indices and apply the same permutation to all views and the ground-truth labels. This procedure removes class-contiguous ordering while preserving cross-view sample correspondence and the underlying clustering problem. Further details are provided in Appendix~\ref{exp_data}.

 \textbf{Baselines:} To comprehensively evaluate the performance of our GFT-based framework, we benchmark it against five state-of-the-art TMC methods. These baselines represent the current frontier of FFT-based low-rank tensor learning, including \textbf{t-SVD-MVC (IJCV 2018)} \cite{xie2018unifying}, \textbf{TLSpNM (TPAMI 2023)} \cite{guo2022logarithmic}, \textbf{ASR-ETR (ICCV 2023)} \cite{ji2023anchor}, \textbf{ESTMC (TPAMI 2025)} \cite{ji2025anchors}, and \textbf{TLRLF4MVC (TPAMI 2025)} \cite{long2025tlrlf4mvc}. 
Further details are provided in Appendix~\ref{exp_data}.

 \textbf{Evaluation Metrics:} To quantitatively assess the clustering performance, we adopt five widely recognized evaluation metrics: Accuracy (ACC), Normalized Mutual Information (NMI), Purity, F-score, and Adjusted Rand Index (ARI). In addition to performance metrics, we also record the CPU runtime (including graph construction) to evaluate the computational efficiency.

\textbf{Implementation Details:} To ensure a rigorous comparison, we integrate the GFT-based transform into representative TMC baselines by replacing their sample-mode FFT components. We evaluate our framework using three graph construction strategies: a $k$-nearest neighbor graph (SimG, $k_n=10$), an adaptive graph (GMC \cite{wang2019gmc}), and an ideal graph (oracle upper-bound reference). These variants are referred to as \textbf{GFT(SimG)}, \textbf{GFT(GMC)}, and \textbf{GFT(Ideal)}, respectively. We also employ the \textbf{AGFT} (\cref{def:A-GFT}) with anchors selected via BKHK \cite{Nie2024Fast}. The number of anchors is set to $k_a=2^{\min(c,10)}$, where $c$ denotes the number of clusters. Furthermore, we include an \textbf{FFT(view mode)} variant \cite{liu2025large} that performs FFT along the view mode to specifically benchmark against existing efforts to mitigate sample-ordering sensitivity.  All results are averaged over five independent trials with hyperparameters optimized via grid search. More details are provided in Appendix~\ref{exp_data}. The code is available at \href{https://github.com/jijintian/GFT-TMVC}{https://github.com/jijintian/GFT-TMVC}.

\begin{table*}[h]
\setlength{\abovecaptionskip}{0.1cm} 
\centering
\caption{Runtime (main execution + transform matrix construction) and performance of TLRLF4MVC with different transforms.}
\label{time_table}

  \resizebox{0.9\textwidth}{!}
  {
\begin{tabular}{ccl|cl|cl|cl}
    \toprule
    \multirow{2}[4]{*}{Methods} & \multicolumn{2}{c}{Shuffled CCV} & \multicolumn{2}{c}{Shuffled Aloi-100} & \multicolumn{2}{c}{Shuffled Caltech101-all} & \multicolumn{2}{c}{Shuffled Animal} \\
\cmidrule(lr){2-3} \cmidrule(lr){4-5}  \cmidrule(lr){6-7}   \cmidrule(lr){8-9}    & ACC   & Time (s) & ACC   & Time (s) & ACC   & Time (s) & ACC   & Time (s) \\
    \midrule
+FFT   & 26.53 & 21.55 + 0 & 72.08 & 65.52 + 0 & 26.17 & 110.59 + 0 & 19.42 & 88.09 + 0 \\
    +SimG  & 27.28 & 95.67 + 10.77 & 74    & 116.41 + 25.83 & 29.1  & 168.56 + 17.92 & 20.53 & 150.44 + 37.23  \\
    +GMC   & 28.11 & 63.75 + 157.38 & 75.06 & 142.09 + 284.13 & 28.72 & 167.91 + 207.36 & 21.57 & 131.58 + 279.72 \\
    +Ideal & \textbf{65.38} & 36.7 + 10.66  & \textbf{82.44} & 126.91 + 26.50 & \textbf{65.75} & 195.85 + 16.89 & \textbf{68.96} & 122.93 + 33.10 \\
    \rowcolor[rgb]{ .851,  .851,  .851} +AGFT  & 26.87 & \textbf{11.68 + 0.62} & 75.06 & \textbf{33.17 + 1.60} & 30.54 & \textbf{37.98 + 12.81} & 20.82 & \textbf{22.48 + 35.66 } \\
    \bottomrule
    \end{tabular}}
\end{table*}

\subsection{Results}
The results of different baselines (with different variants) are shown in Table \ref{tab:main_results}, where the {\bf best} and \underline{second-best} are highlighted in bold and underlined, respectively, excluding the GFT(Ideal). Based on these results, we make the following observations.

\textbf{General Superiority of GFT over FFT}.
The experimental results reveal that, in the vast majority of scenarios, our proposed GFT-based frameworks consistently and substantially outperform the traditional FFT-based framework. Notably, on Shuffled MSRCv1, GFT improves the ACC of the TLSpNM baseline from $47.05\%$ to $74.48\%$. The fundamental reason for this consistent superiority lies in the nature of the transform basis. While the standard FFT employs a fixed trigonometric basis that assumes a rigid, periodic structure, our GFT framework leverages a data-adaptive graph basis. By constructing a graph that encapsulates the intrinsic manifold structure, the GFT "aligns" its spectral atoms with the underlying sample clusters. Consequently, while the FFT suffers from spectral dispersion and the collapse of the low-rank prior when sample relationships are disordered, the GFT maintains a highly compact energy distribution, leading to more robust and accurate clustering.

\textbf{Limitations of FFT(view mode).} Performing FFT along the view dimension avoids sample-order sensitivity, but its fixed sinusoidal basis is not adapted to the heterogeneous dependencies among views. As shown in Table~\ref{tab:main_results}, its effectiveness varies considerably across baseline architectures. For example, on TLSpNM, FFT(view mode) underperforms +GFT(GMC) by 19.53 percentage points in ACC, whereas on ASR-ETR it yields only a 0.22-point NMI improvement over the original FFT variant. These results indicate that FFT(view mode) can be effective in some cases, but does not consistently capture cross-view dependencies as well as a data-adaptive graph-spectral transform.

\textbf{Empirical Performance Lower and Upper Bounds.} The near-ideal performance of GFT(Ideal) defines a high ceiling, proving that the efficacy of the GFT-based framework scales with graph quality. Crucially, our SimG and AGFT serve as the performance floor of this paradigm. Despite using minimalist, non-optimized graphs, these configurations outperform FFT-based methods in most scenarios. This underscores the inherent robustness of the GFT-based framework over order-dependent FFT-based pipelines, suggesting that GFT-based TMC offers a superior, scalable path toward optimal low-rank tensor learning as graph learning evolves.

\textbf{Efficiency and Scalability of AGFT.} As shown in Table \ref{time_table}, AGFT achieves substantially lower runtime than full-graph GFT variants and is competitive with FFT on large-scale datasets. By reducing the transform complexity per tube from $\mathcal{O}(n^2)$ ($\mathcal{O}(n\log n)$ in FFT) to $\mathcal{O}(nk_a)$ ($k_a \ll n$), AGFT effectively eliminates the scalability bottleneck of traditional GFT. Remarkably, AGFT even surpasses FFT in efficiency; for instance, on Caltech101-all, it reduces main execution time from 110.59s to 37.98s—a nearly 3-fold speedup—while achieving the highest non-ideal ACC of 30.54\%. These results demonstrate that AGFT not only bridges the efficiency gap between graph and Fourier-based methods but also provides a highly scalable solution for large-scale multi-view clustering.

\subsection{Ablation Study}

\textbf{Influence of Shuffle Rates on GFT-based Methods.} To evaluate the robustness of our framework against data disorder, we conduct experiments with shuffle rates ranging from 0 to 1. As illustrated in \cref{fig:shuffle_rate}, with additional results
provided in Appendix~\ref{exp_ablation}, the performance of the FFT-based framework deteriorates sharply as the sample order becomes randomized; our GFT(SimG), GFT(GMC), and AGFT variants exhibit remarkable stability. Their clustering metrics remain nearly invariant as the shuffle rate increases, maintaining a near-constant plateau. This empirically validates that the graph-adaptive basis effectively captures the intrinsic topology, making the spectral representation substantially more stable under sample permutations.
\begin{figure}[!h]
    \vspace{-0.5cm}
	\centering
	\subfloat[MSRCv1]{\includegraphics[width=0.23\textwidth]{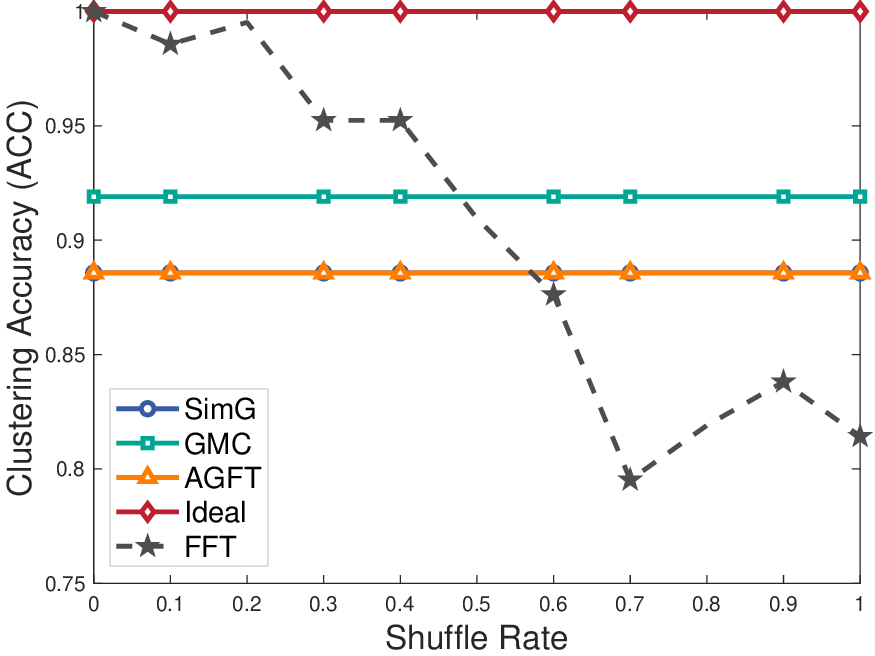}}\hspace{2pt}
	\subfloat[NGs]{\includegraphics[width=0.23\textwidth]{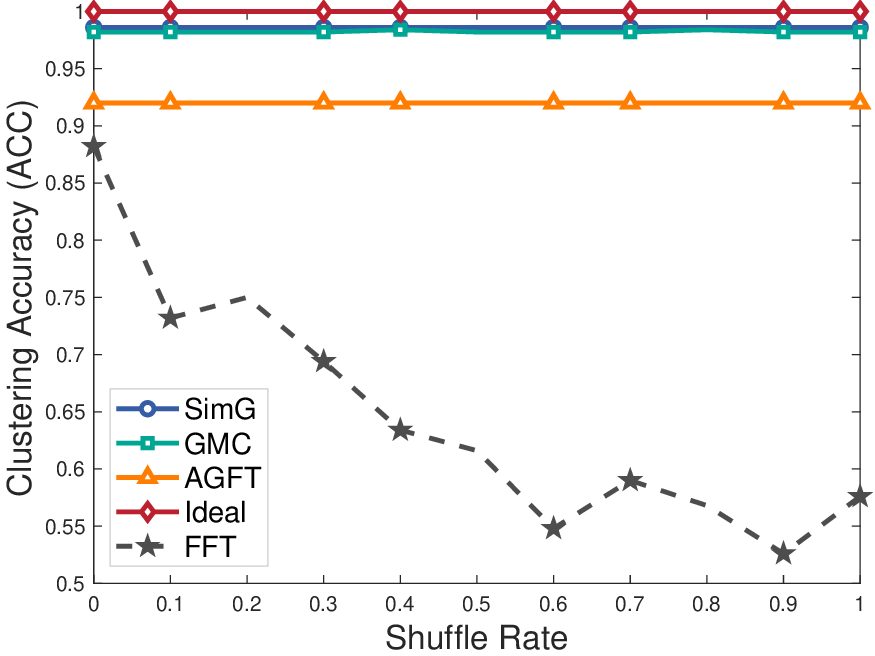}}\hspace{2pt}
    \setlength{\abovecaptionskip}{0.1cm} 
 	\caption{Influence of shuffle rates on different transform variants within the t-SVD-MVC framework.}
	\label{fig:shuffle_rate}
    \vspace{-0.2cm}
\end{figure}

\textbf{Spectral Energy Compaction Analysis.} To investigate how sample reordering affects the learned spectral representation, we evaluate different methods over different shuffle rates and analyze the energy concentration of the transformed tensor. Specifically, the Cumulative Energy Ratio (CER) of the first $K$ non-DC components is defined as
\begin{equation}
\mathrm{CER}(K)
=
\frac{\sum_{i=2}^{K+1} E_i}
{\sum_{i=2}^{n} E_i},
\quad K \in \{1,2,\ldots,n-1\},
\end{equation}
where $E_i=\|\bar{\mathcal{Z}}(:,:, i)\|_F^2$ denotes the energy of the $i$-th frequency slice, $\|\cdot\|_F$ is the Frobenius norm, and the first slice is excluded as the DC component.

As illustrated in \cref{energy_ablation}, with additional results provided in Appendix~\ref{exp_ablation}, the CER profile of FFT changes noticeably as the shuffle rate increases, indicating that its spectral energy allocation is sensitive to the sample order. In particular, sample reordering redistributes energy across a broader range of frequency components, weakening the concentration of energy in the leading components. By contrast, GFT(SimG), GFT(GMC), and GFT(Ideal) exhibit substantially more consistent CER curves across different shuffle rates. This stability suggests that graph-adaptive transforms organize spectral components according to intrinsic sample relationships rather than their sequential indices, thereby preserving a more consistent energy representation under arbitrary permutations.
\begin{figure}[!h]
  \vspace{-0.4cm}
	\centering
	\subfloat[MSRCv1]{\includegraphics[width=0.23\textwidth]{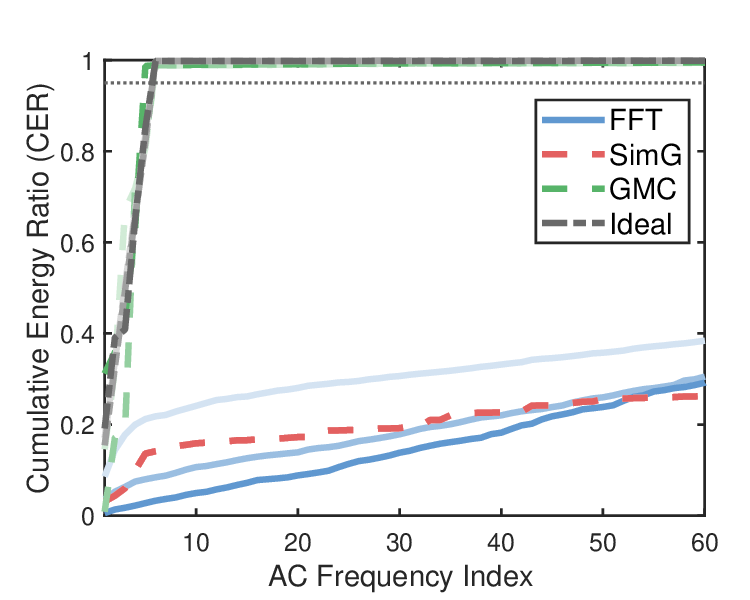}}\hspace{4pt}
	\subfloat[NGs]{\includegraphics[width=0.23\textwidth]{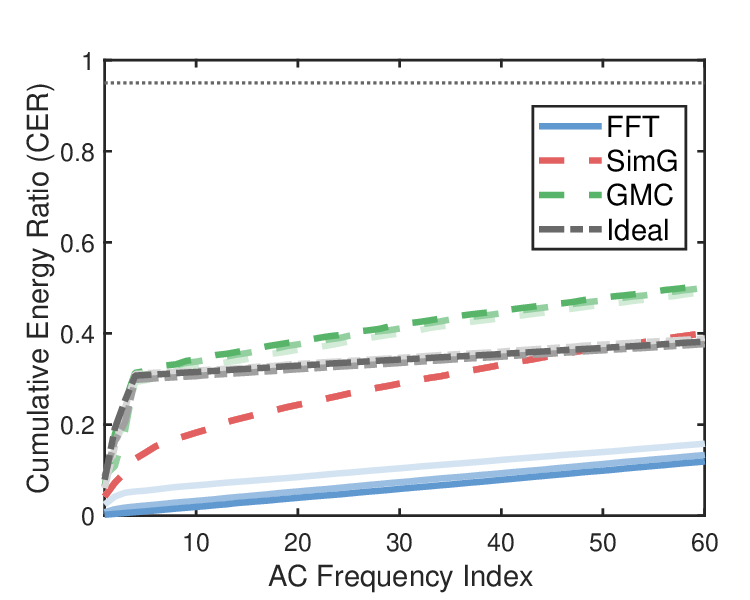}}
    \setlength{\abovecaptionskip}{0.1cm} 
 	\caption{Comparison of the Cumulative Energy Ratio (CER) at shuffle
rates of $0$, $0.5$, and $1$, where darker colors indicate higher
shuffle rates.}
	\label{energy_ablation}

\end{figure}

\textbf{Influence of Different Tensor Transform Bases.}
We visualize the basis matrices on MSRCv1 (\cref{base_ablation}) to examine how different transforms encode sample relationships. For the GFT variants, rows are ordered by ground-truth labels, and the first 30 low-frequency components are shown. The FFT basis (\cref{base_ablation}(a)) exhibits fixed, data-independent periodic patterns that are not aligned with the underlying clusters. As a result, cluster-related variations may spread across a broader spectral range rather than concentrate in the leading components. In contrast, the GFT-based matrices (\cref{base_ablation}(b--d)) display clear block-wise patterns, with several low-frequency vectors approximately aligned with the cluster structure. This alignment can concentrate structural energy within a cluster-relevant subspace. Although the Ideal GFT serves as an oracle reference, SimG and GMC (\cref{base_ablation}(b--c)) recover qualitatively similar patterns, suggesting that GFT remains structure-aware even when built from estimated graphs.
\begin{figure}[!h]

	\centering
	\subfloat[FFT]{\includegraphics[width=0.23\textwidth]{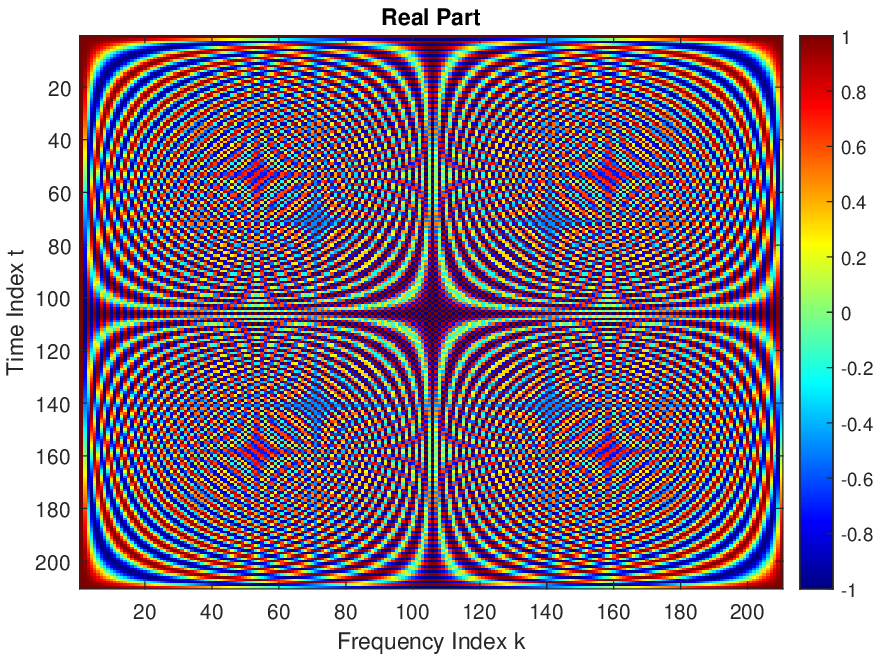}}\hspace{4pt}
	\subfloat[GFT(SimG)]{\includegraphics[width=0.23\textwidth]{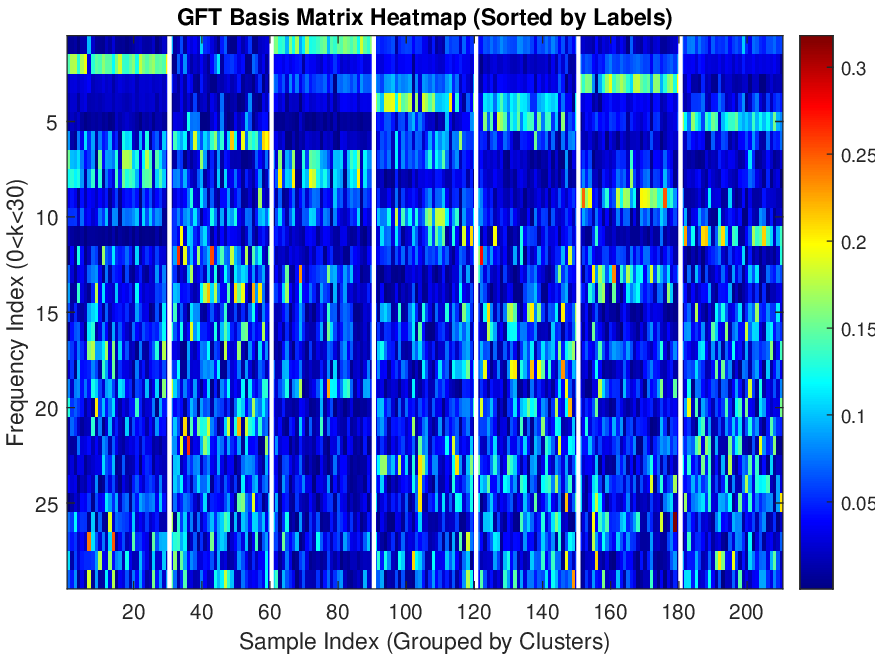}}\hspace{4pt}
    
    \subfloat[GFT(GMC)]
    {\includegraphics[width=0.23\textwidth]{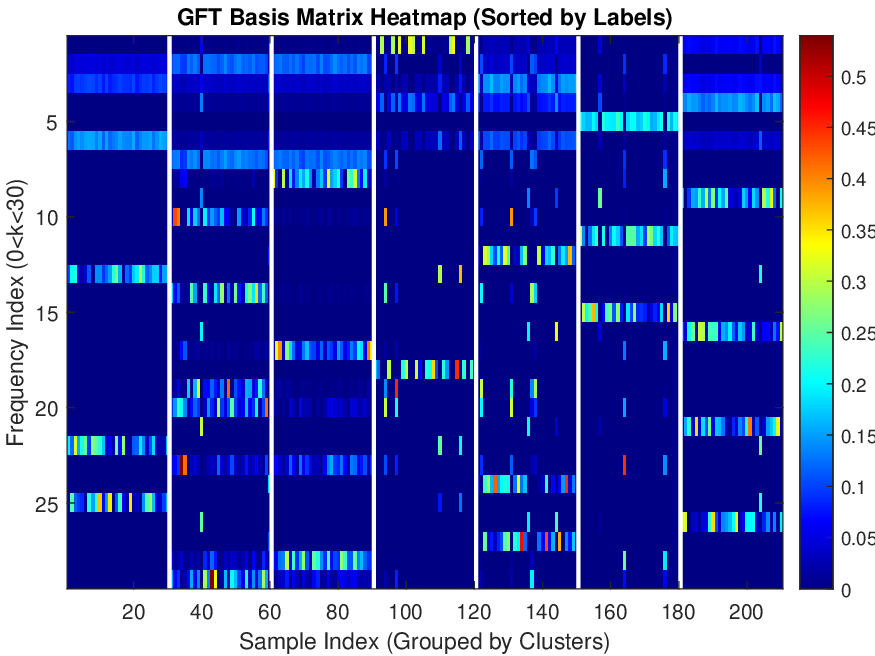}}\hspace{4pt}
    \subfloat[GFT(Ideal)]{\includegraphics[width=0.23\textwidth]{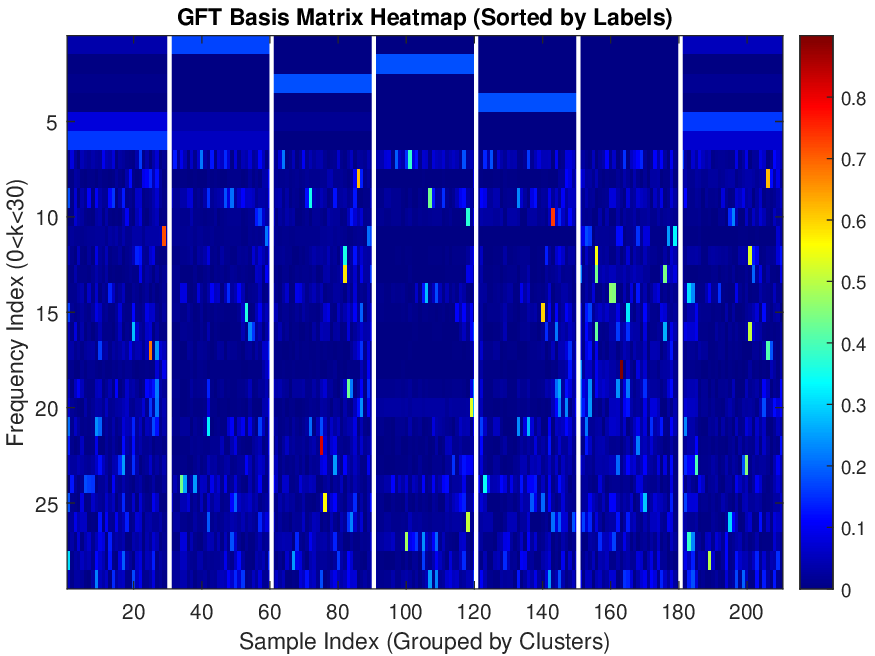}}\hspace{4pt}
    \setlength{\abovecaptionskip}{0.1cm} 
 	\caption{Visualization of the transform basis matrices on the MSRCv1 dataset.}
	\label{base_ablation}

\end{figure}

\textbf{Comparison with Other Transformations.}
To assess the contribution of GFT, we replace it with the random orthogonal basis, FFT, DCT, and wavelet transforms while keeping all other components of ESTMC unchanged. By keeping the optimization framework and all other settings unchanged, this controlled comparison attributes the differences solely to the spectral transform applied along the sample mode. As shown in \cref{tab2}, GFT consistently delivers the best performance across all four datasets. Compared with FFT, it improves ACC by 7.24\%, 10.76\%, 11.43\%, and 12.39\% on MSRCv1, NGs, BBCSport, and Caltech101-20, respectively. By contrast, DCT and wavelet generally underperform FFT, suggesting that replacing FFT with another fixed transform does not improve clustering performance in the shuffled case. The advantage of GFT can be attributed to its data-adaptive basis, which is constructed from the sample graph and thus better preserves the intrinsic relationships among samples under random permutations. These results validate GFT as a more suitable transform for sample-mode low-rank tensor learning.
\begin{table}[t]
\setlength{\abovecaptionskip}{0.1cm} 
\centering
\caption{Comparison of different transforms with the ESTMC framework on shuffled datasets (ACC\%).}
\label{tab2}
  \resizebox{\linewidth}{!}
  {
\begin{tabular}{lcccc}
\toprule
Method & MSRCv1 & NGs & BBCSport & Caltech101-20 \\
\midrule
+Random & 77.62 & 74.80 & 80.33 & 42.46 \\
+FFT & 77.81 & 85.84 & 83.42 & 41.93 \\
+GFT(SimG) & \textbf{85.05} & \textbf{96.60} & \textbf{94.85} & \textbf{54.32} \\
+DCT & 74.76 & 69.24 & 68.64 & 36.48 \\
+Wavelet & 73.33 & 74.80 & 80.33 & 42.85 \\
\bottomrule
\end{tabular}}

\end{table}

\section{Conclusion}
In this paper, we investigated the sample-order sensitivity of t-SVD-based tensorial multi-view clustering frameworks. We showed theoretically that applying a fixed Fourier basis along the sample mode does not preserve the spectral representation under generic sample permutations and can redistribute structural energy across frequency components. To address this issue, we proposed a graph-spectral low-rank tensor learning framework that replaces the fixed Fourier basis with a data-adaptive GFT basis. We further introduced an anchor-based variant that reduces the transform cost to $\mathcal{O}(nk_a)$ and improves scalability. Experiments across multiple datasets and representative TMC baselines demonstrate that the proposed graph-based transforms are substantially more robust to sample reordering while achieving competitive or improved clustering performance. These results highlight the importance of respecting permutation structure when applying transform-domain tensor regularization to unordered samples. This work bridges the gap between graph signal processing and tensor learning, offering a more robust and scalable foundation for the TMC community.

\bibliographystyle{ACM-Reference-Format}
\bibliography{main}

\appendix

\section{Proof of Theorem 3.1}\label{proof1}
\textbf{\cref{fft_ps}}\textbf{(Permutation Sensitivity of FFT-based Tensor Transform)}
Let $\mathcal{T}_{{\rm fft}}(\mathcal{X},\mathbf{F},3)$ denote the Fast Fourier Transform of a representation tensor $\mathcal{X} \in \mathbb{R}^{d \times m \times n}$ along its third dimension. For a generic non-identity permutation matrix $\mathbf{P} \in \{0, 1\}^{n \times n}$, let $\mathcal{X}' = \mathcal{X} \times_3 \mathbf{P}$ be the permuted version of $\mathcal{X}$. For generic $\mathcal{X}$, the following properties hold:

\textbf{1. Spectral Non-Equivariance:} 
\begin{equation}
\begin{aligned}
        &\mathcal{T}_{{\rm fft}}(\mathcal{X}',\mathbf{F},3) \neq\mathcal{T}_{{\rm fft}}(\mathcal{X},\mathbf{F},3),\ \ \\ &\mathcal{T}_{{\rm fft}}(\mathcal{X}',\mathbf{F},3) \neq\mathcal{T}_{{\rm fft}}(\mathcal{X},\mathbf{F},3)\times_3\mathbf{P}, \\
\end{aligned}
\end{equation}

\textbf{2. Energy Redistribution Sensitivity:} Define the non-direct-current (non-DC) energy vector as $\boldsymbol{e}(\mathcal{X}) = [\|\bar{\mathcal{X}}^{(2)}\|^2_F, \dots, \|\bar{\mathcal{X}}^{(n)}\|^2_F]^\top$. The energy distribution is non-invariant under a generic permutation:
\begin{equation}
\boldsymbol{e}(\mathcal{X} \times_3 \mathbf{P}) \neq \boldsymbol{e}(\mathcal{X}).
\end{equation}
This implies that there exists at least one frequency $k \in \{2, \dots, n\}$ such that $\|\bar{\mathcal{X}}^{\prime(k)}\|^2_F \neq \|\bar{\mathcal{X}}^{(k)}\|^2_F$.

\begin{proof} 
The Fast Fourier Transform (FFT) of a tensor $\mathcal{X} \in \mathbb{R}^{d \times m\times n}$ along the third dimension, denoted as $\mathcal{T}_{\text{\rm fft}}(\mathcal{X},\mathbf{F}, 3)$, is mathematically equivalent to the mode-3 product of the tensor with the Discrete Fourier Transform (DFT) matrix $\mathbf{F} \in \mathbb{C}^{n \times n}$,
$\mathcal{T}_{\text{\rm fft}}(\mathcal{X},\mathbf{F}, 3) = \mathcal{X} \times_3 \mathbf{F},$
where the components of $\mathbf{F}$ are defined as: $F_{ab} = \omega^{(a-1)(b-1)}, \quad \text{with } \omega = e^{-{2\pi i}/{n}}$, so the matrix $\mathbf{F}$ can be expressed as:
\begin{equation}
\label{F_matric}
    \mathbf{F} = \begin{bmatrix} 
1 & 1 & 1 & \dots & 1 \\
1 & \omega & \omega^2 & \dots & \omega^{n-1} \\
1 & \omega^2 & \omega^4 & \dots & \omega^{2(n-1)} \\
\vdots & \vdots & \vdots & \ddots & \vdots \\
1 & \omega^{n-1} & \omega^{2(n-1)} & \dots & \omega^{(n-1)(n-1)}
\end{bmatrix}
\end{equation}

\textbf{Proof of spectral non-equivariance.}
Let $\mathcal{X}' = \mathcal{X} \times_3 \mathbf{P}$ be the tensor obtained by permuting the third mode. By the definition of the fast Fourier transform, we have:
\begin{equation}
    \mathcal{T}_{\text{fft}}(\mathcal{X}',\mathbf{F}, 3) = \mathcal{X} \times_3 \mathbf{P} \times_3\mathbf{F} = \mathcal{X} \times_3(\mathbf{FP}).
\end{equation}

So if $\mathbf{P}$ is a non-trivial and generic permutation and $\mathbf{P}\ne \mathbf{I}$, it is easy to derive the following property:
\begin{equation}
    \mathcal{T}_{\text{fft}}(\mathcal{X}',\mathbf{F}, 3) = \mathcal{X} \times_3 \mathbf{P} \times_3\mathbf{F} = \mathcal{X} \times_3(\mathbf{FP})\ne \mathcal{X} \times_3\mathbf{F}=\mathcal{T}_{\rm fft}(\mathcal{X},\mathbf{F},3).
\end{equation}

Then, for the transform to be equivariant in the spectral domain (i.e., 
$\mathcal{T}_{\text{fft}}(\mathcal{X}',\mathbf{F}, 3) = \mathcal{T}_{\text{fft}}(\mathcal{X},\mathbf{F}, 3) \times_3 \mathbf{P}$), 
it requires $\mathbf{FP} = \mathbf{PF}$. Note that $\mathbf{F}$ does not commute with 
$\mathbf{P}$ even when $\mathbf{P}$ is a circular shift $\mathbf{S}$; instead, by the 
DFT shift theorem, $\mathbf{F}$ \emph{diagonalizes} $\mathbf{S}$: 
$\mathbf{F}\mathbf{S}\mathbf{F}^{-1}=\mathbf{D}=\mathrm{diag}(1,\omega,\dots,\omega^{n-1})$. 
Since $\mathbf{D}$ is diagonal with unit-modulus entries, a circular shift only 
rescales each frequency slice by a unit-modulus phase, leaving its energy 
$\|\bar{\mathcal{X}}^{(k)}\|_F$ invariant. For a generic (non-cyclic) permutation 
$\mathbf{P}$, however, $\mathbf{F}\mathbf{P}\mathbf{F}^{-1}$ is a dense unitary 
matrix rather than a diagonal one, so each permuted slice $\bar{\mathcal{X}}'^{(k)}$ 
becomes a mixture of multiple original frequency slices, and $\mathbf{FP}\neq\mathbf{PF}$. 
Consequently:
\begin{equation}\mathcal{T}_{\text{fft}}(\mathcal{X}',\mathbf{F}, 3) \neq \mathcal{T}_{\text{fft}}(\mathcal{X},\mathbf{F}, 3) \times_3\mathbf{P}.
\end{equation}

\textbf{To prove energy redistribution sensitivity.}  Let $\bar{\mathcal{X}} = \mathcal{T}_{\text{fft}}(\mathcal{X}, \mathbf{F}, 3)$ be the spectral tensor. The $k$-th frontal slice $\bar{\mathcal{X}}^{(k)}$ is the $k$-th component of the DFT along the third dimension. For any fiber $\boldsymbol{x} = \mathcal{X}_{i,j,:} \in \mathbb{R}^n$, its $k$-th spectral coefficient is given by the inner product $y_k = \langle \boldsymbol{x}, \boldsymbol{f}_k \rangle$, where $\boldsymbol{f}_k$ is the $k$-th column of the DFT matrix $\mathbf{F}$. The squared Frobenius norm of the $k$-th spectral slice is the sum of the magnitudes of these projections:
\begin{equation}\|\bar{\mathcal{X}}^{(k)}\|_F^2 = \sum_{i,j} \| \langle \mathcal{X}_{i,j,:}, \boldsymbol{f}_k \rangle \|^2.
\end{equation}
When $\mathcal{X}$ is transformed by a generic permutation $\mathbf{P}$, the new fiber becomes $\boldsymbol{x}' = \mathbf{P}\boldsymbol{x}$. The $k$-th spectral component of the permuted fiber is:
\begin{equation}y'_k = \langle \mathbf{P}\boldsymbol{x}, \boldsymbol{f}_k \rangle = \langle \boldsymbol{x}, \mathbf{P}^\top \boldsymbol{f}_k \rangle.
\end{equation}

Let $\boldsymbol{y}=\mathbf{F}\boldsymbol{x}$. After permutation, the Fourier
coefficients become
\[
\boldsymbol{y}'
=
\mathbf{F}\mathbf{P}\boldsymbol{x}
=
\left(\mathbf{F}\mathbf{P}\mathbf{F}^{-1}\right)\boldsymbol{y}.
\]
For a generic permutation matrix $\mathbf{P}$,
$\mathbf{F}\mathbf{P}\mathbf{F}^{-1}$ is not diagonal and generally mixes
multiple Fourier components. Therefore, for generic non-degenerate data, at
least one component-wise energy changes. Summing over all fibers yields
\[
\boldsymbol{e}(\mathcal{X}\times_3\mathbf{P})
\neq
\boldsymbol{e}(\mathcal{X}).
\]
\end{proof}

\section{Proof of Proposition~\ref{FEB}}\label{proof2}

\textbf{Proposition~\ref{FEB} (Frequency Energy Bound)}
Let $\bar{\mathcal{X}} = \mathcal{T}_{\text{\rm fft}}(\mathcal{X},\mathbf{F},3)$ be the Fourier-domain representation of the tensor $\mathcal{X}\in\mathbb{R}^{d \times m\times n}$. The Frobenius norm of the $k$-th alternating-current (AC) frequency component is bounded by the Circular Total Variation (CTV) defined in \cref{TV}:
\begin{equation}\|\bar{\mathcal{X}}^{(k)}\|_F \leq \frac{CTV(\mathcal{X})}{2 \left| \sin(\pi (k-1) / n) \right|}, \quad \forall k=2, \dots, n.
\end{equation}

\begin{proof} We establish the bound by analyzing the relationship between the first-order difference of the tensor and its representation in the Fourier domain. Let $\boldsymbol{x} = \mathcal{X}_{i,j,:}\in\mathbb{R}^n$ denote an arbitrary fiber of $\mathcal{X}$ along the sample mode.

Define the circular difference vector $\Delta \boldsymbol{x} \in \mathbb{R}^n$ such that $(\Delta \boldsymbol{x})_t = x_{t+1} - x_t$ for $t=1, \dots, n-1$, and $(\Delta \boldsymbol{x})_n = x_1 - x_n$. Let $\boldsymbol{y} = \text{fft}(\boldsymbol{x})$ be the Discrete Fourier Transform (DFT) of $\boldsymbol{x}$. By the linearity and the time-shift property of the DFT, the $k$-th component of the Fourier transform of $\Delta \boldsymbol{x}$ is:
\begin{equation}
\mathcal{F}(\Delta \boldsymbol{x})_k = \left( e^{\frac{2\pi i (k-1)}{n}} - 1 \right) y_k,
\end{equation}
where $y_k$ corresponds to the entry $\bar{\mathcal{X}}_{i,j,k}$. Taking the modulus of both sides and applying the identity $|e^{i\theta} - 1| = 2|\sin(\theta/2)|$, we obtain:

\begin{equation}|\mathcal{F}(\Delta \boldsymbol{x})_k| = \left| e^{\frac{2\pi i (k-1)}{n}} - 1 \right| \cdot |y_k| = 2 \left| \sin\left(\frac{\pi (k-1)}{n}\right) \right| \cdot |y_k|.\label{eq:spectral_modulus}
\end{equation}

By the definition of the DFT, the term $\mathcal{F}(\Delta \boldsymbol{x})_k$ is a weighted sum of the differences: $\mathcal{F}(\Delta \boldsymbol{x})_k = \sum_{t=1}^n (\Delta \boldsymbol{x})_t \omega^{(t-1)(k-1)}$. Applying the triangle inequality:
\begin{equation}|\mathcal{F}(\Delta \boldsymbol{x})_k| = \left| \sum_{t=1}^n (x_{t+1} - x_t) e^{-j\frac{2\pi (t-1)(k-1)}{n}} \right| \leq \sum_{t=1}^n |x_{t+1} - x_t|.
\end{equation}

Note that $\sum_{t=1}^n |x_{t+1} - x_t|$ represents the Circular Total Variation. Substituting this into \cref{eq:spectral_modulus} yields the fiber-wise bound:
\begin{equation}
|y_k| = \frac{|\mathcal{F}(\Delta \boldsymbol{x})_k|}{2 \left| \sin(\pi (k-1) / n) \right|} \leq \frac{CTV(\boldsymbol{x})}{2 \left| \sin(\pi (k-1) / n) \right|}.\label{eq:fiber_bound_final}
\end{equation}
The Frobenius norm of the $k$-th spectral slice is defined as the square root of the sum of squared moduli of all fibers at frequency $k$. Setting $\mathcal{X}^{(n+1)}=\mathcal{X}^{(1)}$, we sum the fiber-wise bounds over all spatial indices $(i, j)$:

\begin{equation}
\begin{aligned}
\|\bar{\mathcal{X}}^{(k)}\|_F &= \sqrt{\sum_{i,j} |y_{i,j,k}|^2} \\
&\leq \frac{1}{2 \left| \sin(\pi (k-1) / n) \right|} \sqrt{\sum_{i,j} \left( \sum_{t=1}^n |x_{i,j,t+1} - x_{i,j,t}| \right)^2} \\ 
&\leq \frac{1}{2 \left| \sin(\pi (k-1) / n) \right|} \sum_{t=1}^n \sqrt{\sum_{i,j} |x_{i,j,t+1} - x_{i,j,t}|^2} \\
&= \frac{\sum_{t=1}^n \|\mathcal{X}^{(t+1)} - \mathcal{X}^{(t)}\|_F}{2 \left| \sin(\pi (k-1) / n) \right|} = \frac{CTV(\mathcal{X})}{2 \left| \sin(\pi (k-1) / n) \right|}.
\end{aligned}
\end{equation}
Therefore, the bound holds for all $k \in \{2, \dots, n\}$.
\end{proof}

\section{Additional Empirical Evidence for Theorem 3.1 and Proposition 3.2} 
\label{Aforfft}

In this section, we provide extended empirical evaluations to validate the theoretical claims established in Theorem \ref{fft_ps} (Permutation Sensitivity) and Proposition~\ref{FEB} (Frequency Energy Bound). Our investigation focuses on how the clustering performance of FFT-based tensor models is affected by sample permutations along the third dimension.
 \begin{enumerate}
     \item \textbf{Correlation between CTV and Clustering Performance}. As illustrated in \cref{appendix_ratio1} and \cref{appendix_ratio2}, there is a clear and consistent inverse correlation between clustering performance (NMI/ARI) and the Total Variation ($CTV(\mathcal{X})$) of the representation tensor. In the baseline case (shuffle rate $ = 0$), where samples are ordered by category, the tensor $\mathcal{X}$ maintains high local continuity. This alignment minimizes $CTV(\mathcal{X})$, which, according to the Frequency Energy Bound (FEB), ensures that spectral energy is compactly concentrated in a few low-frequency slices. Such spectral sparsity is the fundamental prerequisite for FFT-based low-rank regularizers to succeed.
     \item \textbf{Impact of Stochastic Shuffling.} As the shuffle rate increases, the random permutation of sample indices disrupts the spatial coherence of the tensor, leading to a sharp rise in $CTV(\mathcal{X})$. This disruption violates the low-rank prior essential for FFT-based models. Specifically, the resulting spectral dispersion increases the effective spectral complexity and causes singular components to be distributed across more frequency slices.
\end{enumerate}

\begin{figure*}[!h]
	\centering
	\subfloat[MSRCv1]{\includegraphics[width=0.23\textwidth]{figure/MSRCv1_TLSpNM_ratio.eps}}\hspace{2pt}
	\subfloat[NGs]{\includegraphics[width=0.23\textwidth]{figure/NGs_TLSpNM_ratio.eps}}\hspace{2pt}
    \subfloat[BBCSport]{\includegraphics[width=0.23\textwidth]{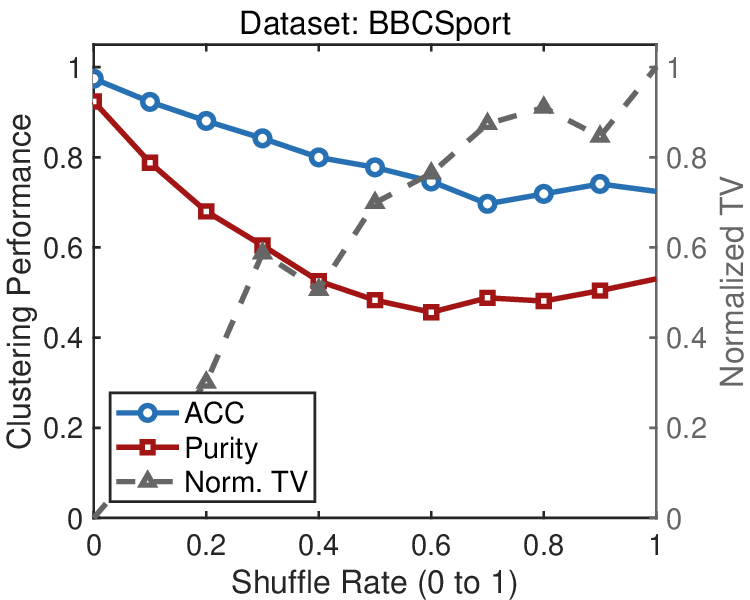}}\hspace{2pt}
	\subfloat[Caltech101-20]{\includegraphics[width=0.23\textwidth]{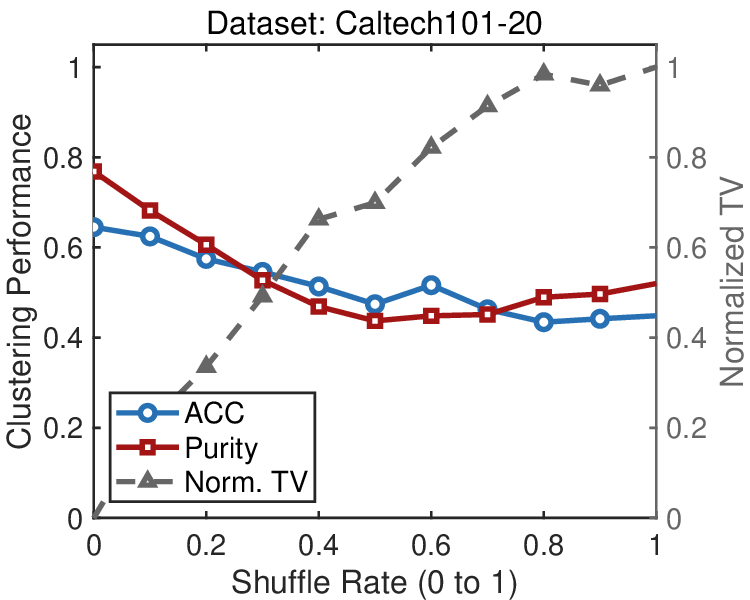}}\hspace{2pt}

 	\caption{Evaluation of the TLSpNM method on four datasets with different shuffle rates. A ratio of 0 indicates the baseline, where samples are ordered by category. }
	\label{appendix_ratio1}
\end{figure*}
\begin{figure*}[!h]
	\centering
	\subfloat[MSRCv1]{\includegraphics[width=0.23\textwidth]{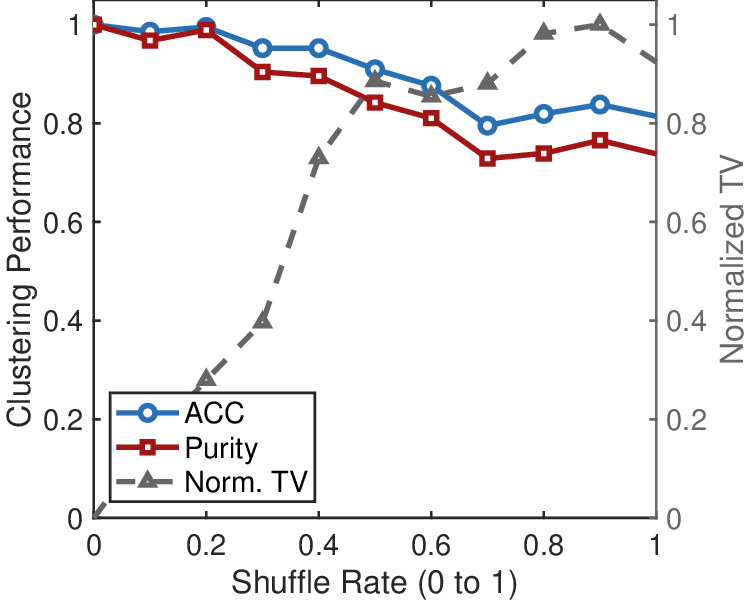}}\hspace{2pt}
	\subfloat[NGs]{\includegraphics[width=0.23\textwidth]{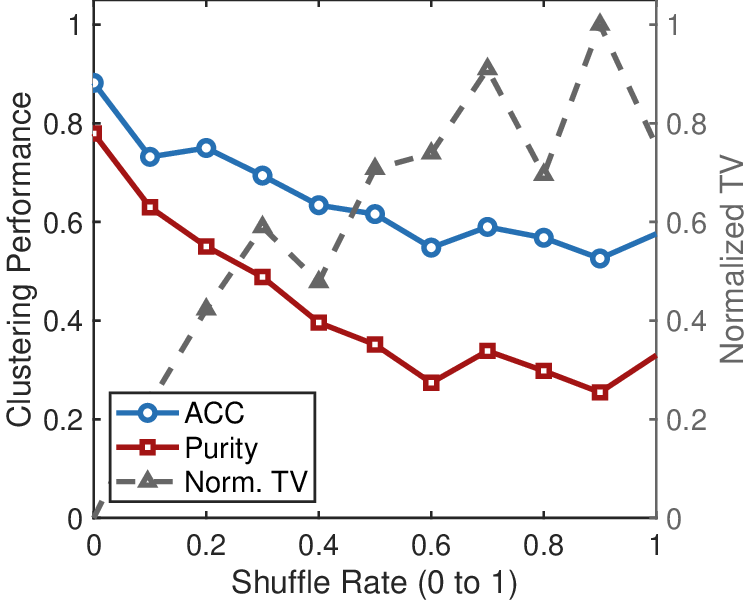}}\hspace{2pt}
    \subfloat[BBCSport]{\includegraphics[width=0.23\textwidth]{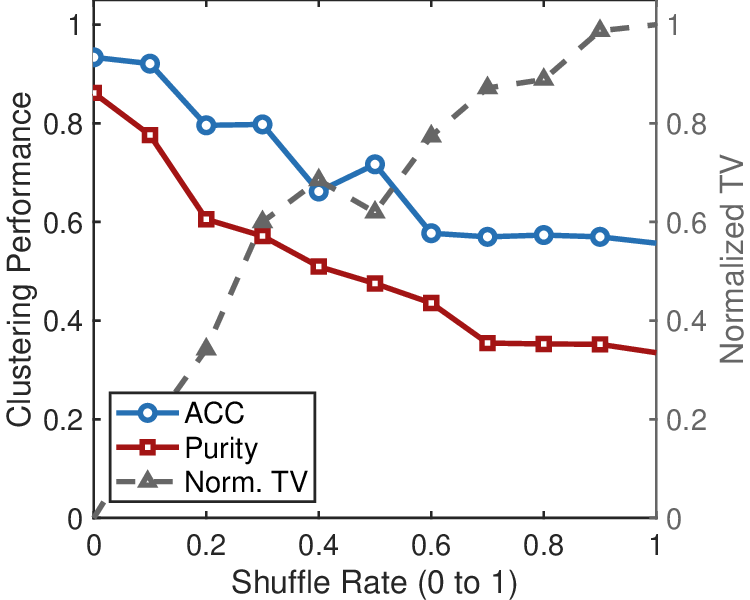}}\hspace{2pt}
	\subfloat[Caltech101-20]{\includegraphics[width=0.23\textwidth]{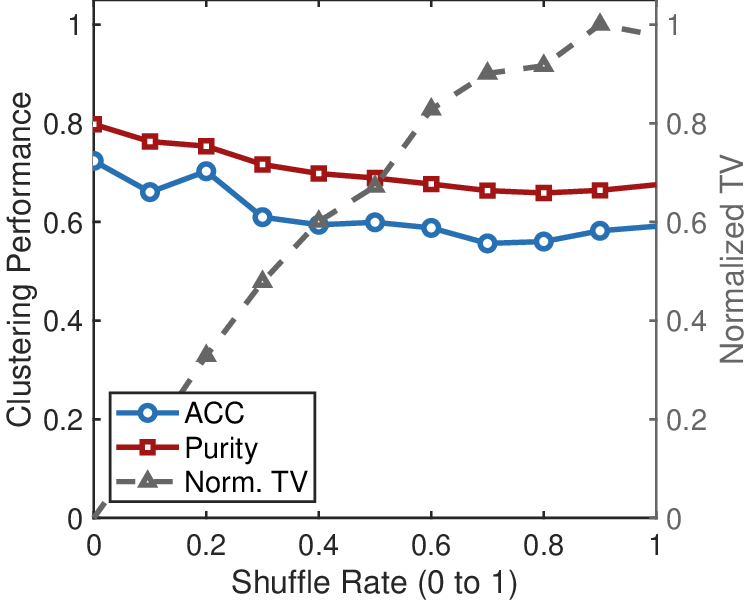}}\hspace{2pt}
 	\caption{Evaluation of the t-SVD-MVC method on four datasets with different shuffle rates. A ratio of 0 indicates the baseline, where samples are ordered by category. }
	\label{appendix_ratio2}
\end{figure*}

 \textbf{Visualizing the Frequency Energy Distribution.} To further investigate the mechanism behind performance degradation, we visualize the Frobenius norm of the spectral slices $\|\bar{\mathcal{X}}^{(k)}\|_F$ for AC frequency indices $1 < k < 50$. The results across four benchmark datasets (MSRCv1, NGs, BBCSport, Caltech101-20) are presented in \cref{fig:ratio1_appendix}. Specifically, for multi-view data $\{\mathbf{X}^{(v)}\}_{v=1}^m$ with $\mathbf{X}^{(v)} \in \mathbb{R}^{d_v \times n}$, we construct the self-representation tensor $\mathcal{X} \in \mathbb{R}^{n \times m \times n}$ via $\mathcal{X} = \Phi(\mathbf{Z}^{(1)}, \dots, \mathbf{Z}^{(m)})$, where $\mathbf{Z}^{(v)} = (\mathbf{X}^{(v)})^\top \mathbf{X}^{(v)}$ and $\Phi(\cdot)$ denotes the merging and orienting operation \cite{xie2018unifying}. From the visualization, several key insights can be drawn:
 \begin{enumerate}
     \item \textbf{Spectral Energy Compaction:} In the baseline case (shuffle rate = 0, ordered by category), the spectral energy is highly concentrated in the low-frequency bins. The magnitude $\|\bar{\mathcal{X}}^{(k)}\|_F$ exhibits a precipitous decay as $k$ increases, which is consistent with our Frequency Energy Bound (FEB) theorem. This energy compaction makes many high-frequency slices have relatively small singular values, allowing low-rank shrinkage to attenuate these components while preserving the dominant spectral structure.
     \item \textbf{Spectral Leakage and Dispersion:} As the shuffle rate increases, a significant "spectral leakage" phenomenon is observed. The energy previously concentrated in the fundamental frequencies is redistributed across the high-frequency spectrum. When the shuffle rate equals 1.0, the spectrum becomes remarkably flat, resembling white noise.
 \end{enumerate}
\begin{figure*}[!h]
	\centering
	\subfloat[MSRCv1]{\includegraphics[width=0.23\textwidth]{figure/AC_MSRCv1.eps}}\hspace{2pt}
	\subfloat[NGs]{\includegraphics[width=0.23\textwidth]{figure/AC_NGs.eps}}\hspace{2pt}
    \subfloat[BBCSport]{\includegraphics[width=0.23\textwidth]{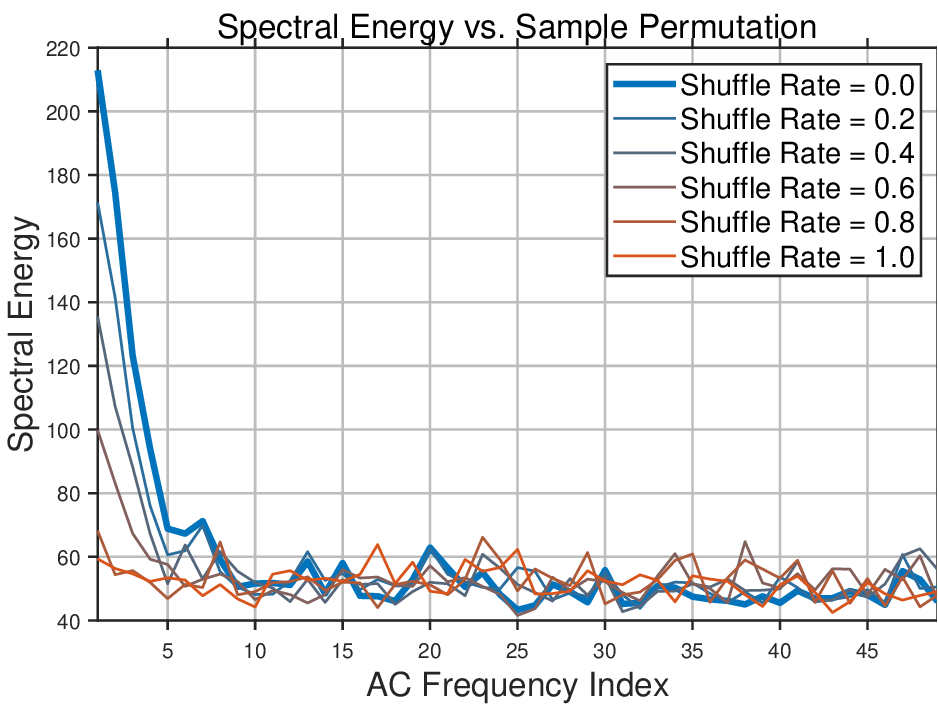}}\hspace{2pt}
	\subfloat[Caltech101-20]{\includegraphics[width=0.23\textwidth]{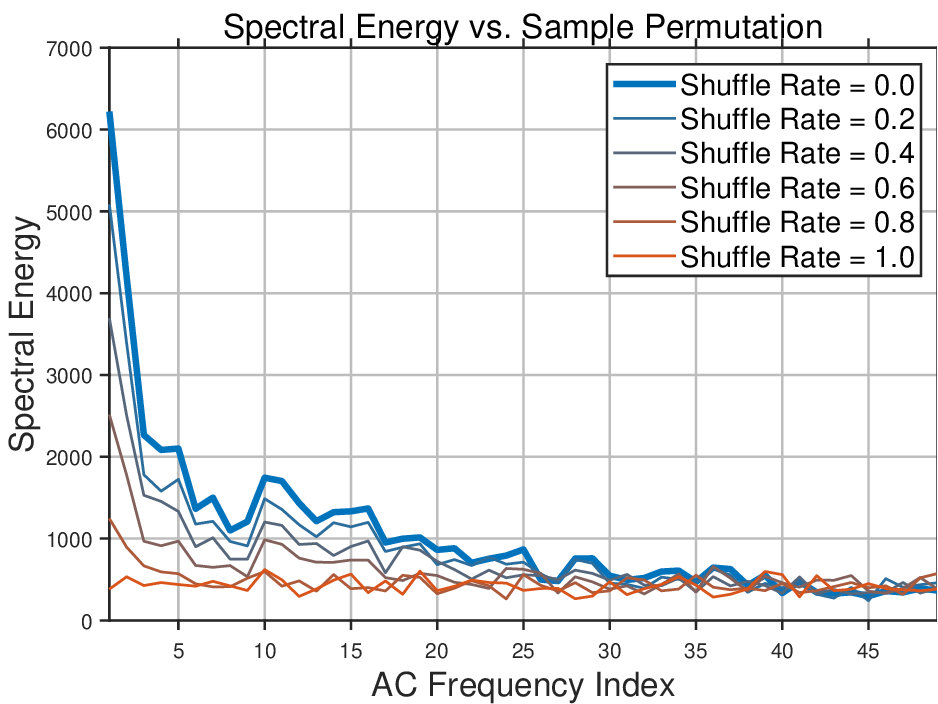}}\hspace{2pt}
 	\caption{Spectral magnitude $\|\bar{\mathcal{X}}^{(k)}\|_F$ for AC frequencies ($k>1$) on four datasets.}
	\label{fig:ratio1_appendix}

\end{figure*}

\section{Proof of \cref{theorem_gft}}\label{proof3}

\textbf{\cref{theorem_gft} (Permutation Equivariance of GFT-based Tensor Transform)}
Let $\mathcal{X} \in \mathbb{R}^{d \times m \times n}$ be a third-order tensor and $\mathbf{L} = \mathbf{U}\mathbf{\Lambda}\mathbf{U}^\top \in \mathbb{R}^{n \times n}$ be the graph Laplacian characterizing the relationships along the third mode. For any permutation matrix $\mathbf{P} \in \{0,1\}^{n \times n}$, let $\mathcal{X}' = \mathcal{X} \times_3 \mathbf{P}$ and $\mathbf{L}' = \mathbf{P}\mathbf{L}\mathbf{P}^\top = \mathbf{U}'\mathbf{\Lambda}\mathbf{U}'^\top$ be the permuted tensor and Laplacian, respectively, where the eigenvalues are ordered consistently. The following properties hold:

\textbf{1. Spectral Subspace Equivariance:} There exists a block-diagonal orthogonal matrix $\mathbf{Q}$, whose blocks correspond to the eigenspaces associated with repeated eigenvalues, such that $\mathbf{U}' = \mathbf{P}\mathbf{U}\mathbf{Q}$ and
\begin{equation}
\mathcal{T}_{\rm gft}(\mathcal{X}',\mathbf{U}'^\top,3)
=
\mathcal{T}_{\rm gft}(\mathcal{X},\mathbf{U}^\top,3)
\times_3 \mathbf{Q}^\top.
\end{equation}

\textbf{2. GTR Rank Stability:} If $\mathbf{L}$ has a simple spectrum, the following equality holds:
\begin{equation}
\left\|\mathcal{X}'\right\|_{\rm gft\text{-}GTR}
=
\left\|\mathcal{X}\right\|_{\rm gft\text{-}GTR}.
\end{equation}

\begin{proof}
\textbf{Proof of Property 1:}
According to Definition~\ref{gft_de}, the GFT-based transform is computed through the mode-$3$ product:
\begin{equation}
\mathcal{T}_{\rm gft}(\mathcal{X},\mathbf{U}^\top,3)
=
\mathcal{X}\times_3\mathbf{U}^\top,
\end{equation}
where $\mathbf{U}\in\mathbb{R}^{n\times n}$ is an orthogonal eigenvector matrix of
$\mathbf{L}=\mathbf{U}\mathbf{\Lambda}\mathbf{U}^\top$.

For the permuted Laplacian, we have
\begin{equation}
\mathbf{L}'
=
\mathbf{P}\mathbf{L}\mathbf{P}^\top
=
(\mathbf{P}\mathbf{U})\mathbf{\Lambda}(\mathbf{P}\mathbf{U})^\top.
\end{equation}
Thus, $\mathbf{P}\mathbf{U}$ is an orthogonal eigenvector matrix of $\mathbf{L}'$. When repeated eigenvalues are present, the orthonormal basis within each corresponding eigenspace is not unique. Since $\mathbf{U}'$ and $\mathbf{P}\mathbf{U}$ are orthonormal eigenvector matrices associated with the same consistently ordered eigenvalues, there exists a block-diagonal orthogonal matrix $\mathbf{Q}$, with one block for each eigenspace, such that
\begin{equation}
\mathbf{U}'
=
\mathbf{P}\mathbf{U}\mathbf{Q}.
\end{equation}
Each block of $\mathbf{Q}$ acts only within an eigenspace associated with the same eigenvalue; hence, $\mathbf{Q}\mathbf{\Lambda}\mathbf{Q}^\top=\mathbf{\Lambda}$.

Using $\mathcal{X}'=\mathcal{X}\times_3\mathbf{P}$ and the mode-$3$ product identity
\begin{equation}
(\mathcal{A}\times_3\mathbf{B})\times_3\mathbf{C}
=
\mathcal{A}\times_3(\mathbf{C}\mathbf{B}),
\end{equation}
we obtain
\begin{align}
\mathcal{T}_{\rm gft}(\mathcal{X}',\mathbf{U}'^\top,3)
&=
\mathcal{X}'\times_3\mathbf{U}'^\top \\
&=
(\mathcal{X}\times_3\mathbf{P})
\times_3
(\mathbf{P}\mathbf{U}\mathbf{Q})^\top \\
&=
\mathcal{X}\times_3
\left(
\mathbf{Q}^\top\mathbf{U}^\top\mathbf{P}^\top\mathbf{P}
\right).
\end{align}
Because $\mathbf{P}$ is orthogonal, $\mathbf{P}^\top\mathbf{P}=\mathbf{I}$, and therefore
\begin{align}
\mathcal{T}_{\rm gft}(\mathcal{X}',\mathbf{U}'^\top,3)
&=
\mathcal{X}\times_3
\left(
\mathbf{Q}^\top\mathbf{U}^\top
\right) \\
&=
\left(
\mathcal{X}\times_3\mathbf{U}^\top
\right)
\times_3\mathbf{Q}^\top \\
&=
\mathcal{T}_{\rm gft}(\mathcal{X},\mathbf{U}^\top,3)
\times_3\mathbf{Q}^\top.
\end{align}
Hence, a mode-$3$ permutation preserves the GFT representation up to an orthogonal change of basis within each repeated-eigenvalue eigenspace.

By contrast, consider a fixed transform basis $\mathbf{V}$ that does not adapt to the sample permutation, with
\begin{equation}
\mathcal{T}_{\rm fix}(\mathcal{X})
=
\mathcal{X}\times_3\mathbf{V}^\top.
\end{equation}
After permuting the third mode, its transform becomes
\begin{equation}
\mathcal{T}_{\rm fix}(\mathcal{X}\times_3\mathbf{P})
=
\mathcal{X}\times_3
\left(
\mathbf{V}^\top\mathbf{P}
\right).
\end{equation}
Since $\mathbf{V}$ does not adapt to $\mathbf{P}$, this expression generally differs from
$\mathcal{X}\times_3\mathbf{V}^\top$. Therefore, a fixed basis such as the DFT basis generally produces order-dependent spectral coefficients and spectral energy distributions.

\textbf{Proof of Property 2:}
According to Definition~\ref{GT-rank}, the Generalized Tensor Rank is defined as the average of the matrix ranks of the frontal slices in the GFT domain:
\begin{equation}
\left\|\mathcal{X}\right\|_{\rm gft\text{-}GTR}
=
\frac{1}{n}
\sum_{k=1}^{n}
\operatorname{rank}
\left(
\mathcal{T}_{\rm gft}(\mathcal{X},\mathbf{U}^\top,3)^{(k)}
\right).
\end{equation}

If $\mathbf{L}$ has a simple spectrum, every eigenspace is one-dimensional. Consequently, the block-diagonal orthogonal matrix $\mathbf{Q}$ in Property~1 reduces to a diagonal sign matrix:
\begin{equation}
\mathbf{Q}
=
\operatorname{diag}(s_1,\ldots,s_n),
\qquad
s_k\in\{-1,1\}.
\end{equation}

Let
\begin{equation}
\bar{\mathcal{X}}
=
\mathcal{T}_{\rm gft}(\mathcal{X},\mathbf{U}^\top,3),
\qquad
\bar{\mathcal{X}}'
=
\mathcal{T}_{\rm gft}(\mathcal{X}',\mathbf{U}'^\top,3).
\end{equation}
From Property~1,
\begin{equation}
\bar{\mathcal{X}}'
=
\bar{\mathcal{X}}\times_3\mathbf{Q}^\top.
\end{equation}
Since $\mathbf{Q}^\top=\mathbf{Q}$ is diagonal, the $k$-th frontal slice satisfies
\begin{equation}
\bar{\mathcal{X}}'^{(k)}
=
s_k\bar{\mathcal{X}}^{(k)},
\qquad
k=1,\ldots,n.
\end{equation}
Because multiplication by a nonzero scalar does not change matrix rank,
\begin{equation}
\operatorname{rank}
\left(
\bar{\mathcal{X}}'^{(k)}
\right)
=
\operatorname{rank}
\left(
\bar{\mathcal{X}}^{(k)}
\right),
\qquad
k=1,\ldots,n.
\end{equation}
Summing over all frontal slices and dividing by $n$ gives
\begin{equation}
\frac{1}{n}
\sum_{k=1}^{n}
\operatorname{rank}
\left(
\bar{\mathcal{X}}'^{(k)}
\right)
=
\frac{1}{n}
\sum_{k=1}^{n}
\operatorname{rank}
\left(
\bar{\mathcal{X}}^{(k)}
\right).
\end{equation}
Therefore,
\begin{equation}
\left\|\mathcal{X}'\right\|_{\rm gft\text{-}GTR}
=
\left\|\mathcal{X}\right\|_{\rm gft\text{-}GTR}.
\end{equation}
Thus, the GTR is invariant to mode-$3$ permutations under the simple spectrum condition. The same argument applies to any slice-wise rank surrogate that depends only on the singular values of each spectral slice, since multiplying a slice by
$-1$ does not change its singular values.
\end{proof}

\section{Bounded Equivariance Error under a Non-simple Spectrum}
\label{app:bounded_equivariance_error}
We further analyze the permutation behavior of the GFT-based low-rank learning procedure in \cref{alg:gft_low_rank} when the graph Laplacian has repeated eigenvalues. Let
$\mathcal{D}_{\lambda,\psi}$ denote the matrix spectral shrinkage operator
\begin{equation}
\mathcal{D}_{\lambda,\psi}(\mathbf{A})
=
\mathbf{R}
\operatorname{Prox}_{\psi,\lambda}({\Sigma})
\mathbf{V}^{\top},
\end{equation}
where
$\mathbf{A}=\mathbf{R}{\Sigma}\mathbf{V}^{\top}$
is an SVD of $\mathbf{A}$. Let
$\mathscr{D}_{\lambda,\psi}$ denote its tensor extension that applies
$\mathcal{D}_{\lambda,\psi}$ independently to every frontal slice. Given a
graph Fourier basis $\mathbf{U}$, define the GFT-based low-rank mapping as
\begin{equation}
\mathcal{F}_{\lambda,\psi}^{\mathbf{U}}(\mathcal{Y})
=
\mathscr{D}_{\lambda,\psi}
\left(
\mathcal{Y}\times_3\mathbf{U}^{\top}
\right)
\times_3\mathbf{U}.
\label{eq:gft_low_rank_mapping}
\end{equation}
Accordingly, Algorithm~\ref{alg:gft_low_rank} can be written as
$\mathcal{Z}=\mathcal{F}_{\lambda,\psi}^{\mathbf{U}}(\mathcal{Y})$.

For a permutation matrix $\mathbf{P}$, let
$\mathcal{Y}'=\mathcal{Y}\times_3\mathbf{P}$ and
$\mathbf{U}'=\mathbf{P}\mathbf{U}\mathbf{Q}$, where $\mathbf{Q}$ is the
block-diagonal orthogonal matrix induced by the non-unique choice of
orthonormal bases within repeated-eigenvalue eigenspaces. We define the
permutation-equivariance error as
\begin{equation}
\mathcal{E}_{\rm equiv}(\mathcal{Y},\mathbf{P})
=
\left\|
\mathcal{F}_{\lambda,\psi}^{\mathbf{U}'}
\left(
\mathcal{Y}\times_3\mathbf{P}
\right)
-
\mathcal{F}_{\lambda,\psi}^{\mathbf{U}}(\mathcal{Y})
\times_3\mathbf{P}
\right\|_F.
\label{eq:gft_equivariance_error}
\end{equation}

\begin{theorem}
\textbf{(Repeated-Eigenspace Energy Bound for the Equivariance Error)}
\label{theorem_gft_bounded_equivariance}

Let $\mathcal{Y}\in\mathbb{R}^{d\times m\times n}$ and let
$\mathcal{Y}'=\mathcal{Y}\times_3\mathbf{P}$. Suppose that the corresponding
graph Fourier bases satisfy
\begin{equation}
\mathbf{U}'
=
\mathbf{P}\mathbf{U}\mathbf{Q},
\end{equation}
where
\begin{equation}
\mathbf{Q}
=
\operatorname{blkdiag}
\left(
\mathbf{Q}_1,\ldots,\mathbf{Q}_s
\right)
\end{equation}
and each orthogonal block
$\mathbf{Q}_{\ell}\in\mathbb{R}^{r_{\ell}\times r_{\ell}}$
acts within the eigenspace associated with the distinct eigenvalue
$\lambda_{\ell}$. Let $\mathcal{I}_{\ell}$ denote the corresponding spectral
index set, with
$r_{\ell}=|\mathcal{I}_{\ell}|$, and define
\begin{equation}
\mathcal{R}
=
\left\{
\ell:
r_{\ell}>1
\right\}
\end{equation}
as the set of repeated-eigenvalue eigenspaces.

Let
\begin{equation}
\bar{\mathcal{Y}}
=
\mathcal{Y}\times_3\mathbf{U}^{\top}
\end{equation}
and define the spectral energy contained in the repeated-eigenvalue
eigenspaces as
\begin{equation}
E_{\mathcal{R}}(\mathcal{Y})
=
\sum_{\ell\in\mathcal{R}}
\left\|
\bar{\mathcal{Y}}(:,:, \mathcal{I}_{\ell})
\right\|_F^2
=
\sum_{\ell\in\mathcal{R}}
\sum_{k\in\mathcal{I}_{\ell}}
\left\|
\bar{\mathcal{Y}}^{(k)}
\right\|_F^2.
\label{eq:repeated_eigenspace_energy}
\end{equation}

Assume that
$\mathcal{D}_{\lambda,\psi}(\mathbf{0})=\mathbf{0}$,
that $\mathcal{D}_{\lambda,\psi}$ is $L_{\psi}$-Lipschitz continuous with
respect to the Frobenius norm, and that it is equivariant to sign changes:
\begin{equation}
\mathcal{D}_{\lambda,\psi}(s\mathbf{A})
=
s\mathcal{D}_{\lambda,\psi}(\mathbf{A}),
\qquad
s\in\{-1,1\}.
\end{equation}
Then the permutation-equivariance error defined in
\cref{eq:gft_equivariance_error} satisfies
\begin{equation}
\begin{aligned}
\mathcal{E}_{\rm equiv}(\mathcal{Y},\mathbf{P})
&\leq
2L_{\psi}
\left[
\sum_{\ell\in\mathcal{R}}
\left\|
\bar{\mathcal{Y}}(:,:, \mathcal{I}_{\ell})
\right\|_F^2
\min
\left\{
1,
\left\|
\mathbf{Q}_{\ell}
-
\mathbf{I}_{r_{\ell}}
\right\|_2^2
\right\}
\right]^{1/2}
\\
&\leq
2L_{\psi}
\sqrt{
E_{\mathcal{R}}(\mathcal{Y})
}.
\end{aligned}
\label{eq:repeated_eigenspace_equivariance_bound}
\end{equation}
For standard singular value soft-thresholding, $L_{\psi}=1$. In particular,
if the graph Laplacian has a simple spectrum, then
$\mathcal{R}=\varnothing$ and
\begin{equation}
\mathcal{E}_{\rm equiv}(\mathcal{Y},\mathbf{P})
=
0.
\end{equation}
\end{theorem}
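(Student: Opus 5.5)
The plan is to reduce the equivariance error to a quantity that lives entirely in the graph-spectral domain, and then to bound it block by block over the eigenspaces.

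\textbf{Step 1: algebraic reduction.} Set $\bar{\mathcal{Y}}=\mathcal{Y}\times_3\mathbf{U}^\top$ and substitute $\mathbf{U}'=\mathbf{P}\mathbf{U}\mathbf{Q}$. Using the mode-$3$ identity from the proof of \cref{theorem_gft} and $\mathbf{P}^\top\mathbf{P}=\mathbf{I}$, we get $(\mathcal{Y}\times_3\mathbf{P})\times_3\mathbf{U}'^\top=\bar{\mathcal{Y}}\times_3\mathbf{Q}^\top$. This gives
\[
\mathcal{F}^{\mathbf{U}'}_{\lambda,\psi}(\mathcal{Y}\times_3\mathbf{P})=\mathscr{D}_{\lambda,\psi}(\bar{\mathcal{Y}}\times_3\mathbf{Q}^\top)\times_3(\mathbf{P}\mathbf{U}\mathbf{Q}).
\]
Similarly, $\mathcal{F}^{\mathbf{U}}_{\lambda,\psi}(\mathcal{Y})\times_3\mathbf{P}=\mathscr{D}_{\lambda,\psi}(\bar{\mathcal{Y}})\times_3(\mathbf{P}\mathbf{U})$. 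The mode-$3$ product with the orthogonal matrix $\mathbf{P}\mathbf{U}\mathbf{Q}$ preserves the Frobenius norm, because it acts orthogonally on each tube. Hence
\[
\mathcal{E}_{\rm equiv}=\big\|\mathscr{D}_{\lambda,\psi}(\bar{\mathcal{Y}}\times_3\mathbf{Q}^\top)-\mathscr{D}_{\lambda,\psi}(\bar{\mathcal{Y}})\times_3\mathbf{Q}^\top\big\|_F .
\]

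\textbf{Step 2: splitting over eigenspaces.} Since $\mathbf{Q}$ is block diagonal, $\bar{\mathcal{Y}}\times_3\mathbf{Q}^\top$ mixes frontal slices only within each index set $\mathcal{I}_\ell$. The operator $\mathscr{D}_{\lambda,\psi}$ acts slice-wise. Therefore the squared error decomposes as $\sum_\ell e_\ell^2$, where
\[
e_\ell=\|\mathscr{D}(\mathcal{A}_\ell\times_3\mathbf{Q}_\ell^\top)-\mathscr{D}(\mathcal{A}_\ell)\times_3\mathbf{Q}_\ell^\top\|_F,\qquad \mathcal{A}_\ell=\bar{\mathcal{Y}}(:,:,\mathcal{I}_\ell).
\]
For $\ell\notin\mathcal{R}$ we have $r_\ell=1$, so $\mathbf{Q}_\ell=s\in\{\pm1\}$. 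Sign equivariance then gives $e_\ell=0$. This already proves the simple-spectrum case $\mathcal{E}_{\rm equiv}=0$.

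\textbf{Step 3: bounding each repeated block by two estimates.} The slice-wise Lipschitz property sums (in squares) over slices, so $\mathscr{D}$ is $L_\psi$-Lipschitz on tensors. Together with $\mathscr{D}(\mathbf{0})=\mathbf{0}$ this gives $\|\mathscr{D}(\mathcal{A})\|_F\le L_\psi\|\mathcal{A}\|_F$.

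\emph{Crude estimate.} Apply the triangle inequality and the orthogonal invariance of $\times_3\mathbf{Q}_\ell^\top$:
\[
e_\ell\le L_\psi\|\mathcal{A}_\ell\times_3\mathbf{Q}_\ell^\top\|_F+L_\psi\|\mathcal{A}_\ell\|_F=2L_\psi\|\mathcal{A}_\ell\|_F .
\]

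\emph{Refined estimate.} Insert $\pm\mathscr{D}(\mathcal{A}_\ell)$ to get
\[
e_\ell\le \|\mathscr{D}(\mathcal{A}_\ell\times_3\mathbf{Q}_\ell^\top)-\mathscr{D}(\mathcal{A}_\ell)\|_F+\|\mathscr{D}(\mathcal{A}_\ell)\times_3(\mathbf{I}-\mathbf{Q}_\ell^\top)\|_F .
\]
The first term is at most $L_\psi\|\mathcal{A}_\ell\times_3(\mathbf{Q}_\ell^\top-\mathbf{I})\|_F\le L_\psi\|\mathbf{Q}_\ell-\mathbf{I}\|_2\|\mathcal{A}_\ell\|_F$. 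Here we use that a mode-$3$ product with $\mathbf{M}$ scales the Frobenius norm by at most $\|\mathbf{M}\|_2$, applied tube-wise, and that $\|\mathbf{M}^\top\|_2=\|\mathbf{M}\|_2$. The second term is at most $\|\mathbf{Q}_\ell-\mathbf{I}\|_2 L_\psi\|\mathcal{A}_\ell\|_F$.

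Taking the smaller of the two estimates gives
\[
e_\ell\le 2L_\psi\|\mathcal{A}_\ell\|_F\min\{1,\|\mathbf{Q}_\ell-\mathbf{I}\|_2\}.
\]
Squaring and summing over $\ell\in\mathcal{R}$ yields the first inequality. The second inequality follows because the $\min$ is at most $1$, which leaves exactly the sum defining $E_{\mathcal{R}}(\mathcal{Y})$. Finally, singular value soft-thresholding is the proximal map of the nuclear norm, which is firmly nonexpansive, so $L_\psi=1$.

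\textbf{Main obstacle.} The delicate point is Step 2. One must check that $\times_3\mathbf{Q}^\top$ genuinely commutes with the block partition, and that the slice-wise operator $\mathscr{D}$ does not couple different index sets, so that the squared error splits exactly over $\ell$. This holds because both operations act only within each $\mathcal{I}_\ell$, but the index bookkeeping is where care is needed. Everything else reduces to the triangle inequality and orthogonal invariance.
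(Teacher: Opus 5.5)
Your proposal is correct and follows essentially the same route as the paper. Both arguments reduce the error to the graph-spectral domain via $\mathbf{U}'=\mathbf{P}\mathbf{U}\mathbf{Q}$ and split it orthogonally over the eigenspaces $\mathcal{I}_\ell$. Both then annihilate the one-dimensional blocks by sign equivariance, and take the minimum of the add-and-subtract Lipschitz estimate $2L_\psi\|\bar{\mathcal{Y}}_\ell\|_F\|\mathbf{Q}_\ell-\mathbf{I}_{r_\ell}\|_2$ and the crude estimate $2L_\psi\|\bar{\mathcal{Y}}_\ell\|_F$.

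The differences are cosmetic. You move $\mathbf{Q}^\top$ onto the second term where the paper moves $\mathbf{Q}$ onto the first, which is equivalent by orthogonality. You also justify $L_\psi=1$ via firm nonexpansiveness of the nuclear-norm proximal map, which the paper simply asserts.
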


\begin{proof}
Let
\begin{equation}
\bar{\mathcal{Y}}
=
\mathcal{Y}\times_3\mathbf{U}^{\top}
\end{equation}
and
\begin{equation}
\bar{\mathcal{Y}}'
=
\left(
\mathcal{Y}\times_3\mathbf{P}
\right)
\times_3\mathbf{U}'^{\top}.
\end{equation}
Using
$\mathbf{U}'=\mathbf{P}\mathbf{U}\mathbf{Q}$,
$\mathbf{P}^{\top}\mathbf{P}=\mathbf{I}$, and the mode-$3$ product identity,
we obtain
\begin{align}
\bar{\mathcal{Y}}'
&=
\left(
\mathcal{Y}\times_3\mathbf{P}
\right)
\times_3
\left(
\mathbf{P}\mathbf{U}\mathbf{Q}
\right)^{\top}
\\
&=
\mathcal{Y}
\times_3
\left(
\mathbf{Q}^{\top}
\mathbf{U}^{\top}
\mathbf{P}^{\top}
\mathbf{P}
\right)
\\
&=
\bar{\mathcal{Y}}
\times_3\mathbf{Q}^{\top}.
\end{align}

Using the definition in \cref{eq:gft_equivariance_error}, together with
$\bar{\mathcal{Y}}'
=
\bar{\mathcal{Y}}\times_3\mathbf{Q}^{\top}$
and the orthogonality of $\mathbf{P}\mathbf{U}$, the equivariance error can be
written in the graph-spectral domain as
\begin{equation}
\mathcal{E}_{\rm equiv}(\mathcal{Y},\mathbf{P})
=
\left\|
\mathscr{D}_{\lambda,\psi}
\left(
\bar{\mathcal{Y}}\times_3\mathbf{Q}^{\top}
\right)
\times_3\mathbf{Q}
-
\mathscr{D}_{\lambda,\psi}
\left(
\bar{\mathcal{Y}}
\right)
\right\|_F.
\label{eq:repeated_error_spectral_form}
\end{equation}

For each distinct eigenvalue $\lambda_{\ell}$, define
\begin{equation}
\bar{\mathcal{Y}}_{\ell}
=
\bar{\mathcal{Y}}(:,:, \mathcal{I}_{\ell}).
\end{equation}
Because $\mathbf{Q}$ is block diagonal and
$\mathscr{D}_{\lambda,\psi}$ acts independently on frontal slices, the error
in \cref{eq:repeated_error_spectral_form} decomposes orthogonally across the
eigenspaces:
\begin{equation}
\mathcal{E}_{\rm equiv}^2(\mathcal{Y},\mathbf{P})
=
\sum_{\ell=1}^{s}
\varepsilon_{\ell}^2,
\label{eq:eigenspace_error_decomposition}
\end{equation}
where
\begin{equation}
\varepsilon_{\ell}
=
\left\|
\mathscr{D}_{\lambda,\psi}
\left(
\bar{\mathcal{Y}}_{\ell}
\times_3
\mathbf{Q}_{\ell}^{\top}
\right)
\times_3
\mathbf{Q}_{\ell}
-
\mathscr{D}_{\lambda,\psi}
\left(
\bar{\mathcal{Y}}_{\ell}
\right)
\right\|_F.
\label{eq:block_equivariance_error}
\end{equation}

If $r_{\ell}=1$, then
$\mathbf{Q}_{\ell}=[s_{\ell}]$ with
$s_{\ell}\in\{-1,1\}$. By sign equivariance,
\begin{equation}
\mathscr{D}_{\lambda,\psi}
\left(
s_{\ell}\bar{\mathcal{Y}}_{\ell}
\right)
=
s_{\ell}
\mathscr{D}_{\lambda,\psi}
\left(
\bar{\mathcal{Y}}_{\ell}
\right).
\end{equation}
Since $s_{\ell}^2=1$, it follows that
\begin{equation}
\varepsilon_{\ell}
=
0,
\qquad
r_{\ell}=1.
\label{eq:simple_eigenspace_zero_error}
\end{equation}
Therefore, only eigenspaces associated with repeated eigenvalues can
contribute to the equivariance error.

We next bound $\varepsilon_{\ell}$ for
$\ell\in\mathcal{R}$. Adding and subtracting
$\mathscr{D}_{\lambda,\psi}
(\bar{\mathcal{Y}}_{\ell})
\times_3\mathbf{Q}_{\ell}$
in \cref{eq:block_equivariance_error} and applying the triangle inequality
give
\begin{align}
\varepsilon_{\ell}
&\leq
\left\|
\left[
\mathscr{D}_{\lambda,\psi}
\left(
\bar{\mathcal{Y}}_{\ell}
\times_3
\mathbf{Q}_{\ell}^{\top}
\right)
-
\mathscr{D}_{\lambda,\psi}
\left(
\bar{\mathcal{Y}}_{\ell}
\right)
\right]
\times_3
\mathbf{Q}_{\ell}
\right\|_F
\\
&\quad+
\left\|
\mathscr{D}_{\lambda,\psi}
\left(
\bar{\mathcal{Y}}_{\ell}
\right)
\times_3
\left(
\mathbf{Q}_{\ell}
-
\mathbf{I}_{r_{\ell}}
\right)
\right\|_F.
\end{align}
Since $\mathbf{Q}_{\ell}$ is orthogonal and
$\mathcal{D}_{\lambda,\psi}$ is $L_{\psi}$-Lipschitz,
\begin{align}
&
\left\|
\left[
\mathscr{D}_{\lambda,\psi}
\left(
\bar{\mathcal{Y}}_{\ell}
\times_3
\mathbf{Q}_{\ell}^{\top}
\right)
-
\mathscr{D}_{\lambda,\psi}
\left(
\bar{\mathcal{Y}}_{\ell}
\right)
\right]
\times_3
\mathbf{Q}_{\ell}
\right\|_F
\\
&=
\left\|
\mathscr{D}_{\lambda,\psi}
\left(
\bar{\mathcal{Y}}_{\ell}
\times_3
\mathbf{Q}_{\ell}^{\top}
\right)
-
\mathscr{D}_{\lambda,\psi}
\left(
\bar{\mathcal{Y}}_{\ell}
\right)
\right\|_F
\\
&\leq
L_{\psi}
\left\|
\bar{\mathcal{Y}}_{\ell}
\times_3
\left(
\mathbf{Q}_{\ell}^{\top}
-
\mathbf{I}_{r_{\ell}}
\right)
\right\|_F
\\
&\leq
L_{\psi}
\left\|
\bar{\mathcal{Y}}_{\ell}
\right\|_F
\left\|
\mathbf{Q}_{\ell}
-
\mathbf{I}_{r_{\ell}}
\right\|_2.
\end{align}
Moreover, because
$\mathcal{D}_{\lambda,\psi}(\mathbf{0})=\mathbf{0}$,
\begin{equation}
\left\|
\mathscr{D}_{\lambda,\psi}
\left(
\bar{\mathcal{Y}}_{\ell}
\right)
\right\|_F
\leq
L_{\psi}
\left\|
\bar{\mathcal{Y}}_{\ell}
\right\|_F.
\end{equation}
Hence,
\begin{align}
&
\left\|
\mathscr{D}_{\lambda,\psi}
\left(
\bar{\mathcal{Y}}_{\ell}
\right)
\times_3
\left(
\mathbf{Q}_{\ell}
-
\mathbf{I}_{r_{\ell}}
\right)
\right\|_F
\\
&\leq
L_{\psi}
\left\|
\bar{\mathcal{Y}}_{\ell}
\right\|_F
\left\|
\mathbf{Q}_{\ell}
-
\mathbf{I}_{r_{\ell}}
\right\|_2.
\end{align}
Combining these inequalities yields
\begin{equation}
\varepsilon_{\ell}
\leq
2L_{\psi}
\left\|
\bar{\mathcal{Y}}_{\ell}
\right\|_F
\left\|
\mathbf{Q}_{\ell}
-
\mathbf{I}_{r_{\ell}}
\right\|_2.
\label{eq:block_q_dependent_bound}
\end{equation}

A uniform bound follows directly from
\cref{eq:block_equivariance_error}:
\begin{align}
\varepsilon_{\ell}
&\leq
\left\|
\mathscr{D}_{\lambda,\psi}
\left(
\bar{\mathcal{Y}}_{\ell}
\times_3
\mathbf{Q}_{\ell}^{\top}
\right)
\right\|_F
+
\left\|
\mathscr{D}_{\lambda,\psi}
\left(
\bar{\mathcal{Y}}_{\ell}
\right)
\right\|_F
\\
&\leq
2L_{\psi}
\left\|
\bar{\mathcal{Y}}_{\ell}
\right\|_F,
\end{align}
where the orthogonality of $\mathbf{Q}_{\ell}$ has been used. Combining this
uniform bound with \cref{eq:block_q_dependent_bound} gives
\begin{equation}
\varepsilon_{\ell}
\leq
2L_{\psi}
\left\|
\bar{\mathcal{Y}}_{\ell}
\right\|_F
\min
\left\{
1,
\left\|
\mathbf{Q}_{\ell}
-
\mathbf{I}_{r_{\ell}}
\right\|_2
\right\}.
\label{eq:block_combined_bound}
\end{equation}

Finally, substituting
\cref{eq:simple_eigenspace_zero_error,eq:block_combined_bound}
into \cref{eq:eigenspace_error_decomposition} yields
\begin{align}
\mathcal{E}_{\rm equiv}(\mathcal{Y},\mathbf{P})
&\leq
2L_{\psi}
\left[
\sum_{\ell\in\mathcal{R}}
\left\|
\bar{\mathcal{Y}}_{\ell}
\right\|_F^2
\min
\left\{
1,
\left\|
\mathbf{Q}_{\ell}
-
\mathbf{I}_{r_{\ell}}
\right\|_2^2
\right\}
\right]^{1/2}
\\
&\leq
2L_{\psi}
\left[
\sum_{\ell\in\mathcal{R}}
\left\|
\bar{\mathcal{Y}}_{\ell}
\right\|_F^2
\right]^{1/2}
\\
&=
2L_{\psi}
\sqrt{
E_{\mathcal{R}}(\mathcal{Y})
}.
\end{align}
This completes the proof.
\end{proof}

\section{Detailed Description of Experimental Setup}\label{exp_data}
To empirically validate the efficacy of our GFT-based low-rank tensor framework, we present a comparative analysis with state-of-the-art (SOTA) clustering methods across various benchmark datasets. All algorithms are implemented in MATLAB R2020b and executed on a workstation equipped with an Intel Core i7-11700 CPU (2.50GHz) and 64GB of RAM. The code is available at \\ \href{https://github.com/jijintian/GFT-TMVC}{https://github.com/jijintian/GFT-TMVC}.

\textbf{Datasets:}
To verify the effectiveness of our proposed methods, eight challenging datasets with diverse properties are used and summarized in Table~\ref{table_appendix_dataset}. More detailed information about these datasets is provided below. To construct each shuffled dataset, we generate a random permutation of the sample indices and apply the same permutation consistently to all views and the corresponding ground-truth labels. In this way, the cross-view correspondence of each sample and its label information are strictly preserved, while only the global sample ordering is changed.
 \begin{itemize} 
\item[$\bullet$]{\bf MSRCv1}\footnote{\href{http://research.microsoft.com/en~us/projects/objectclassrecognition}{http://research.microsoft.com/en~us/projects/objectclassrecognition}}. It consists of 210 scene recognition images and is divided into 7 categories. Each image is represented by five different feature sets.

  \begin{table}[!htb]
\centering
		\caption{
			Summary of the benchmark datasets.
		}
		\label{table_appendix_dataset}
          \resizebox{\linewidth}{!}
  {
		\begin{tabular}{ccc}
            \toprule
			{\bf Dataset} & {\bf Sam./Clu.} & {\bf Dimensions}\\
            \midrule
			\noalign{\smallskip}
            {\bf MSRCv1}  & 210 / 7  & 24/576/512/256/254\\
            {\bf NGs}  & 500 / 5  & 2000/2000/2000\\
		      {\bf BBCSport}  & 544 / 5 &3183/3203 \\
            {\bf Caltech101-20}  & 2386 /20 & 48/40/254/1984/512/928\\
            {\bf CCV}  & 6773 / 20  & 20/20/20\\
            {\bf Caltech101-all}  & 9144 / 102  & 48/40/254/1984/512/928\\
            {\bf Aloi-100}  & 11025 / 100  & 77/13/64/125\\
            {\bf Animal}  & 11673 / 20  & 2689/2000/2001/2000\\
			\bottomrule
		\end{tabular}}
\end{table}
 \item[$\bullet$]{\bf NGs}\footnote{\href{https://lig-membres.imag.fr/grimal/data.html}{https://lig-membres.imag.fr/grimal/data.html}}. It is a subset of the 20 Newsgroups dataset, which consists of 500 newsgroup documents. Each raw document is pre-processed with three different methods (three views) and is assigned to one of five categories.
   \item[$\bullet$]{\bf BBCSport}\footnote{\href{http://mlg.ucd.ie/datasets/segment.html}{http://mlg.ucd.ie/datasets/segment.html}}. It consists of 544 sports news articles in five topical areas (athletics, cricket, football, rugby, and tennis). Each sample is described by two views, and their dimensions are 3183 and 3203, respectively.

  \item[$\bullet$]{\bf Caltech101-20}. It is a subset of the challenging object database Caltechall \cite{fei2004learning} and contains 2386 samples and 20 classes. Each sample has six types of features, i.e., 48-dim Gabor, 40-dim wavelet moments, 254-dim CENTRIST, 1984-dim HOG, 512-dim GIST, and 928-dim LBP.
  
\item[$\bullet$]{\bf CCV}\footnote{\href{http://www.ee.columbia.edu/ln/dvmm/CCV/}{http://www.ee.columbia.edu/ln/dvmm/CCV/}}. It consists of 6773 samples belonging to 20 semantic categories of YouTube videos. Each sample has three views, and the dimensions are all 20.

\item[$\bullet$]{\bf Caltech101-all}. It is a subset of the challenging object database Caltechall \cite{fei2004learning} and contains 9144 images belonging to 102 categories. Each sample has six types of features, i.e., 48-dim Gabor, 40-dim wavelet moments, 254-dim CENTRIST, 1984-dim HOG, 512-dim GIST, and 928-dim LBP.

\item[$\bullet$]{\bf Aloi-100}\footnote{\href{http://elki.dbs.ifi.lmu.de/wiki/DataSets/MultiView}{http://elki.dbs.ifi.lmu.de/wiki/DataSets/MultiView}}. It is a widely used small-object dataset, which consists of 11025 images of 100 small objects. Each sample is represented by four types of features: RGB, HSV, Color similarity, and Haralick features.
 
 \item[$\bullet$]{\bf Animal}\footnote{\href{https://github.com/wangsiwei2010/large_scale_multi-view_clustering_datasets}{https://github.com/wangsiwei2010/}}. It consists of 11673 animal images from 20 categories, with each sample represented by four views.
 
\end{itemize}

\textbf{Baselines:}
We select five representative state-of-the-art FFT-based low-rank tensor learning methods:
\begin{itemize} 
\item[$\bullet$]\textbf{t-SVD-MVC (IJCV 2018)} \cite{xie2018unifying} is the first to apply FFT-based low-rank tensor learning to multi-view clustering. Applying FFT along the sample mode significantly improved clustering performance.
\item[$\bullet$]\textbf{TLSpNM (TPAMI 2023)} \cite{guo2022logarithmic} proposes a novel low-rank constraint for FFT-based low-rank tensor learning, demonstrating that nonconvex rank approximation functions outperform traditional Tensor Nuclear Norm (TNN).
\item[$\bullet$]\textbf{ASR-ETR (ICCV 2023)} \cite{ji2023anchor} proposes a fast version of the t-SVD-based low-rank tensor learning framework, which greatly improves the efficiency of traditional FFT-based frameworks when handling large-scale data.
\item[$\bullet$]\textbf{ESTMC (TPAMI 2025)} \cite{ji2025anchors} integrates various non-convex rank approximations into a fast t-SVD-based low-rank tensor learning framework.
\item[$\bullet$]\textbf{TLRLF4MVC (TPAMI 2025)} \cite{long2025tlrlf4mvc} introduces a novel tensor low-frequency component to smooth the representation across samples, which improves the performance of the t-SVD-based low-rank tensor learning framework.
\end{itemize}
 To ensure a rigorous and fair comparison, all experiments are conducted using the official implementations provided by the respective authors. We strictly follow the parameter tuning protocols suggested in the original publications to report the best achievable results for each baseline.

\textbf{Implementation Details:} To ensure a rigorous and fair comparison, we seamlessly integrate the proposed GFT-based framework into the baseline architectures by substituting the standard sample-mode FFT components with our GFT-based spectral transforms.
 For the GFT-based variants, we evaluate three distinct graph construction methodologies: (1) our proposed SimG strategy (10 neighbors); (2) the GMC algorithm \cite{wang2019gmc}; and (3) an ideal graph (using ground-truth labels) to establish the performance upper bound. Additionally, we implement the proposed AGFT (\cref{def:A-GFT}) with Balanced $k$-means-based Hierarchical $k$-means (BKHK) \cite{Nie2024Fast} to select anchors, and the graph construction method follows this work \cite{long2025tlrlf4mvc}. The number of anchors is set to $k_a=2^{\min(c,10)}$ on all test datasets, where $c$ denotes the number of clusters. Specifically, $c$ determines the depth of the recursive binary partition when $c<10$, while the maximum partition depth is capped at 10 to control the computational cost. Consequently, the number of anchors increases with the complexity of the underlying clustering structure but is bounded above by $2^{10}=1024$. All datasets considered in our experiments satisfy $n\ge k_a$.
 To specifically benchmark against existing methods that attempt to mitigate the permutation sensitivity of standard FFT, we include a variant denoted as FFT(view mode). Following the CVPR 2025 method in \cite{liu2025large}, this variant performs the FFT along the view mode instead of the sample mode to circumvent sample-ordering issues.
 All experiments are conducted over five independent trials. We report the mean performance along with the standard deviation across all metrics. Hyperparameters for each method are optimized via grid search according to their respective original publications to ensure peak performance.

\textbf{Parameter Setting of $k_n$:}
For SimG, we set the number of nearest neighbors to $k_n=10$ by default, following common practice in graph-based methods. As shown in Table~\ref{tab:kn_sensitivity}, the performance remains stable when $k_n$ varies within $\{5,10,15,20\}$, indicating that the proposed framework is not sensitive to this parameter.

\begin{table}[H]
\centering
\setlength{\abovecaptionskip}{0.1cm} 
\caption{Clustering accuracy (ACC\%) of ESTMC with GFT under different values of $k_n$.}
\label{tab:kn_sensitivity}
\begin{tabular}{lcccc}
\toprule
\textbf{Dataset} & $k_n=5$ & $k_n=10$ & $k_n=15$ & $k_n=20$ \\
\midrule
MSRCv1   & 86.19 & 85.05 & 84.29 & 84.38 \\
NGs      & 96.40 & 96.60 & 96.80 & 96.00 \\
BBCSport & 95.04 & 94.85 & 94.85 & 94.30 \\
\bottomrule
\end{tabular}
\end{table}

\textbf{Parameter Setting of $k_a$:}
Since BKHK \cite{Nie2024Fast} recursively performs binary partitioning, the number of anchors is restricted to powers of two. We therefore set $k_a=2^{\min(c,10)}$, where $c$ is the number of clusters, yielding $k_a=128$ for MSRCv1 and $k_a=32$ for BBCSport and NGs. For datasets with more than $10$ clusters, $k_a$ is capped at $2^{10}=1024$. As shown in Table~\ref{tab_ka}, the default setting achieves competitive or superior performance across the three datasets.

\begin{table}[H]
\centering
\setlength{\abovecaptionskip}{0.1cm} 
\caption{Clustering accuracy (ACC\%) of ESTMC with AGFT under different values of $k_a$.}
\label{tab_ka}
\begin{tabular}{lccccc}
\toprule
\textbf{Dataset} & $k_a=8$ & $k_a=16$ & $k_a=32$ & $k_a=64$ & $k_a=128$ \\
\midrule
MSRCv1   & 74.95 & 75.33 & 83.81 & 81.43 & 87.81 \\
BBCSport & 71.51 & 88.42 & 88.05 & 87.87 & 86.40 \\
NGs      & 69.40 & 70.60 & 87.80 & 86.80 & 86.80 \\
\bottomrule
\end{tabular}
\end{table}

\begin{table*}[t]
\centering
\caption{Clustering performance (mean(std)) of five baseline architectures across different low-rank tensor learning frameworks on the BBCSport and Caltech101-20 datasets. +FFT denotes the original framework; +FFT (view) refers to FFT performed along the view dimension; +GFT(SimG), +GFT(GMC), and +GFT(Ideal) represent our proposed GFT framework using SimG, GMC, and ground-truth graphs, respectively; +AGFT denotes the proposed AGFT framework.}
\label{tab:appendix_results}
\small
  \resizebox{\textwidth}{!}
  {
\begin{tabular}{l l ccccc ccccc}
\toprule
\multirow{2}{*}{\textbf{Baselines}} & \multirow{2}{*}{\textbf{Variants}} & \multicolumn{5}{c}{\textbf{Shuffled BBCSport}} & \multicolumn{5}{c}{\textbf{Shuffled Caltech101-20}} \\
\cmidrule(lr){3-7} \cmidrule(lr){8-12}
& & ACC& NMI & Purity& F-score &ARI & ACC& NMI & Purity&  F-score &ARI \\
\midrule

  & +FFT   & 55.29(5.46) & 35.95(5.42) & 69.30(4.33) & 46.25(4.56) & 31.24(5.82) & \underline{57.28(1.39)} & 66.45(0.78) & 79.72(1.12) & \underline{51.73(0.91)} & \underline{46.99(1.01)} \\
          & +FFT(view mode) & 86.21(0) & 75.98(0) & 86.21(0) & 83.32(0) & 77.97(0) & 56.25(0.89) & \underline{67.11(0.49)} & \underline{80.18(0.74)} & 50.59(0.85) & 45.09(0.89) \\
          \rowcolor{gray!15} \cellcolor{white} & +GFT(SimG) & \textbf{93.20(0)} & \textbf{81.05(0.19)} & \textbf{93.20(0)} & \textbf{86.67(0.18)} & \textbf{82.51(0.10)} & 53.33(0.49) & 65.10(0.62) & 80.17(0.79) & 44.27(0.63) & 38.50(0.65) \\
          \rowcolor{gray!15} \cellcolor{white} & +GFT(GMC) & 88.60(0.68) & \underline{78.33(1.55)} & 88.60(0.69) & \underline{84.60(1.32)} & \underline{79.82(1.61)} & \textbf{59.38(1.37)} & \textbf{68.86(0.35)} & \textbf{81.60(0.86)} & \textbf{53.60(0.86)} & \textbf{48.08(0.89)} \\
          \rowcolor{gray!15} \cellcolor{white} & +AGFT & \underline{90.29(0.08)} & 73.90(0.43) & \underline{90.29(0.08)} & 82.20(0.01) & 76.73(0.24) & 56.00(0.32) & 66.33(0.20) & 79.59(0.17) & 48.27(0.55) & 43.14(0.61) \\
          \rowcolor{gray!30} \multirow{-6}{*}{\cellcolor{white} t-SVD-MVC} & +GFT(Ideal) & 100(0) & 100(0) & 100(0) & 100(0) & 100(0) & 78.90(1.68) & 88.98(1.31) & 95.41(0.78) & 69.50(1.84) & 65.51(1.98) \\
\midrule
\addlinespace[0.5em] 

  & +FFT   & 71.86(3.21) & 52.10(1.68) & 77.00(1.11) & 60.99(1.68) & 49.95(2.11) & 44.17(1.00) & 51.13(0.90) & 74.95(0.81) & 35.80(1.22) & 29.92(1.26) \\
          & +FFT(view mode) & 76.10(0) & 61.94(0.05) & 81.25(0.03) & 67.23(0.04) & 58.01(0.05) & 37.31(1.18) & 48.76(0.54) & 72.51(0.70) & 32.82(1.06) & 27.01(1.10) \\
          \rowcolor{gray!15} \cellcolor{white} & +GFT(SimG) & \underline{83.31(0.08)} & \underline{71.06(0.29)} & \underline{84.96(0.08)} & \underline{74.36(0.14)} & \underline{67.02(0.18)} & 47.39(2.60) & 55.47(1.08) & 77.66(0.48) & \underline{41.01(2.11)} & \underline{35.07(2.11)} \\
          \rowcolor{gray!15} \cellcolor{white} & +GFT(GMC) & \textbf{89.34(2.26)} & \textbf{76.33(1.40)} & \textbf{89.34(2.26)} & \textbf{84.22(1.74)} & \textbf{79.33(2.17)} & \textbf{47.60(1.07)} & \underline{56.58(0.91)} & \underline{78.07(0.99)} & 39.81(0.72) & 33.96(0.71) \\
          \rowcolor{gray!15} \cellcolor{white} & +AGFT & 78.97(0.10) & 61.23(0.08) & 80.15(0.02) & 66.42(0.04) & 56.72(0.04) & \underline{47.50(1.52)} & \textbf{56.89(0.77)} & \textbf{78.09(0.81)} & \textbf{42.02(1.12)} & \textbf{36.35(1.17)} \\
          \rowcolor{gray!30} \multirow{-6}{*}{\cellcolor{white} TLSpNM} & +GFT(Ideal) & 99.60(0.08) & 98.55(0.23) & 99.60(0.08) & 99.21(0.16) & 98.96(0.21) & 65.52(4.36) & 77.05(1.77) & 92.36(0.99) & 57.43(3.55) & 52.76(3.46) \\
\midrule
\addlinespace[0.5em] 

  & +FFT   & 83.48(0.64) & 67.29(1.41) & 83.46(0.64) & 73.58(3.16) & 65.56(3.83) & 51.89(0.43) & 56.34(0.58) & 76.34(1.00) & 46.69(1.03) & 40.76(1.11) \\
          & +FFT(view mode) & 85.11(0) & 68.72(0) & 85.11(0) & 75.52(0) & 68.25(0) & 51.85(0.55) & 56.74(0.80) & \underline{77.41(0.55)} & 45.05(1.45) & 38.94(1.51) \\
          \rowcolor{gray!15} \cellcolor{white} & +GFT(SimG) & \underline{94.63(0.49)} & \underline{84.15(1.40)} & \underline{94.63(0.49)} & \underline{89.19(0.99)} & \underline{85.73(1.33)} & 54.12(2.18) & 57.73(0.81) & 75.14(0.91) & 48.05(2.26) & 41.26(2.70) \\
          \rowcolor{gray!15} \cellcolor{white} & +GFT(GMC) & \textbf{94.85(0)} & \textbf{85.09(0.01)} & \textbf{94.85(0.04)} & \textbf{89.72(0.08)} & \textbf{86.50(0.05)} & \textbf{59.72(2.36)} & \textbf{63.28(1.63)} & 77.21(1.99) & \textbf{54.78(2.39)} & \textbf{48.22(2.63)} \\
          \rowcolor{gray!15} \cellcolor{white} & +AGFT & 88.53(0.10) & 70.64(0.35) & 88.35(0.10) & 77.64(0.16) & 70.25(0.18) & \underline{56.89(0.91)} & \underline{60.20(1.13)} & \textbf{77.66(0.98)} & \underline{51.18(3.10)} & \underline{45.14(3.47)} \\
          \rowcolor{gray!30} \multirow{-6}{*}{\cellcolor{white} ASR-ETR} & +GFT(Ideal) & 100(0) & 100(0) & 100(0) & 100(0) & 100(0) & 97.52(1.50) & 98.37(0.65) & 98.12(1.27) & 99.22(0.54) & 99.07(0.64) \\
\midrule
\addlinespace[0.5em] 
   & +FFT   & 83.42(0.85) & 64.72(1.62) & 83.42(0.85) & 71.75(1.38) & 63.59(1.75) & 41.93(1.17) & 51.72(0.74) & 75.02(0.73) & 35.85(1.39) & 30.11(1.43) \\
          & +FFT(view mode) & 83.82(0) & 64.08(0) & 83.82(0) & 71.72(0) & 63.56(0) & 42.08(1.94) & 51.55(1.58) & \underline{75.24(1.45)} & 35.95(1.42) & 30.24(1.49) \\
          \rowcolor{gray!15} \cellcolor{white} & +GFT(SimG) & \textbf{94.85(0)} & \textbf{85.66(0)} & \textbf{94.85(0)} & \textbf{89.32(0)} & \textbf{86.01(0)} & 54.32(1.08) & 54.95(0.52) & 74.73(1.17) & 49.85(1.09) & 43.64(1.02) \\
          \rowcolor{gray!15} \cellcolor{white} & +GFT(GMC) & \underline{90.96(3.18)} & \underline{80.34(3.21)} & \underline{90.96(3.18)} & \underline{87.13(1.73)} & \underline{83.04(2.32)} & \textbf{58.30(2.59)} & \textbf{59.18(2.12)} & 75.05(2.96) & \underline{52.24(3.22)} & \underline{45.76(3.22)} \\
          \rowcolor{gray!15} \cellcolor{white} & +AGFT & 88.05(0) & 71.03(0) & 88.05(0) & 80.13(0) & 74.16(0) & \underline{55.55(1.30)} & \underline{58.33(0.55)} & \textbf{76.96(1.11)} & \textbf{52.83(1.29)} & \textbf{47.19(1.42)} \\
          \rowcolor{gray!30} \multirow{-6}{*}{\cellcolor{white} ESTMC} & +GFT(Ideal) & 100(0) & 100(0) & 100(0) & 100(0) & 100(0) & 96.68(4.41) & 98.57(1.81) & 98.04(2.35) & 99.27(1.11) & 99.13(1.32) \\
\midrule
\addlinespace[0.5em] 
  & +FFT   & 75.55(0) & 51.53(0) & 75.55(0) & 63.70(0) & 52.87(0) & 57.21(1.35) & 60.54(2.46) & 74.48(2.80) & 54.05(2.30) & 47.74(2.42) \\
          & +FFT(view mode) & 73.35(1.23) & 52.35(0) & 75.51(0.33) & 60.37(0.53) & 47.88(1.01) & 56.50(0.66) & 63.47(0.72) & \textbf{77.75(0.47)} & 55.23(0.82) & 49.54(0.92) \\
          \rowcolor{gray!15} \cellcolor{white} & +GFT(SimG) & \underline{81.88(2.71)} & \underline{75.52(0.48)} & \underline{84.04(0.49)} & \underline{76.27(0.93)} & \underline{67.74(1.24)} & 57.57(0.90) & 59.88(2.37) & 73.70(2.66) & 54.80(1.23) & 48.34(1.69) \\
          \rowcolor{gray!15} \cellcolor{white} & +GFT(GMC) & \textbf{84.74(0)} & \textbf{76.24(0)} & \textbf{84.74(0)} & \textbf{78.28(0)} & \textbf{70.22(0)} & \textbf{61.94(0)} & \textbf{65.80(0)} & 74.10(0) & \textbf{61.34(0)} & \textbf{54.72(0)} \\
          \rowcolor{gray!15} \cellcolor{white} & +AGFT & 80.33(0) & 65.54(0) & 80.88(0) & 72.10(0) & 61.98(0) & \underline{60.44(0)} & \underline{64.51(0)} & \underline{76.74(0)} & \underline{57.84(0)} & \underline{52.11(0)} \\
          \rowcolor{gray!30} \multirow{-6}{*}{\cellcolor{white} TLRLF4MVC} & +GFT(Ideal) & 88.79(0) & 94.54(0) & 88.79(0) & 94.91(0) & 93.20(0) & 88.60(0) & 95.24(0) & 90.61(0) & 97.34(0) & 96.82(0) \\
\bottomrule
\end{tabular}}
\end{table*}

\section{Additional Experimental Results}\label{exp_result}
To further validate the generalization of the proposed frameworks, we provide extended experimental evaluations on the BBCSport and Caltech101-20 datasets. The extended results in Table~\ref{tab:appendix_results} further demonstrate the broad effectiveness and robustness of the proposed GFT-based variants across different baseline architectures.

\textbf{Efficiency and Scalability of AGFT.} To evaluate the scalability of the proposed AGFT, we compare its main execution time + transform matrix construction time and clustering accuracy (ACC) against two fast TMC methods (ESTMC and TLRLF4MVC) across four large-scale datasets: CCV, Caltech101-all, Aloi-100, and Animal. As shown in Table~\ref{time_table} and \ref{time_table_appendix1}, AGFT significantly outperforms full-graph spectral methods such as GMC and GFT. This efficiency gain is theoretically grounded in the reduction in transform complexity from $\mathcal{O}(n^2)$ to $\mathcal{O}(nk_a)$, where $k_a$ represents the number of anchors ($k_a \ll n$). While the Fast Fourier Transform (FFT) remains efficient at $\mathcal{O}(n \log n)$ by exploiting structural symmetry, AGFT provides a highly scalable alternative for non-structured graph signals. Notably, AGFT may occasionally be slower than FFT. This is mainly due to two factors: (i) the additional overhead of transform matrix construction, and (ii) slower convergence of the iterative optimization after spectral truncation. For example, on the Animal dataset, FFT converges within 30 iterations, whereas AGFT requires 45 iterations. Despite these cases, AGFT remains efficient overall and effectively narrows the efficiency gap, making it a scalable solution for large-scale tasks.

\begin{table*}[h]
\centering
\caption{Runtime (main execution + transform matrix construction) vs. Accuracy of ESTMC with different transforms.}
\label{time_table_appendix1}
\small
  \resizebox{\textwidth}{!}
  {
\begin{tabular}{ccl|cl|cl|cl}
    \toprule
    \multirow{2}[4]{*}{Methods} & \multicolumn{2}{c}{Shuffled CCV} & \multicolumn{2}{c}{Shuffled Aloi-100} & \multicolumn{2}{c}{Shuffled Caltech101-all} & \multicolumn{2}{c}{Shuffled Animal} \\
\cmidrule(lr){2-3} \cmidrule(lr){4-5}  \cmidrule(lr){6-7}   \cmidrule(lr){8-9}    & ACC   & Time (s) & ACC   & Time (s) & ACC   & Time (s) & ACC   & Time (s) \\
    \midrule
+FFT   & 19.23 & 7.13 + 0  & 62.72 & 50.96 + 0 & 26.82 & 100.93 + 0 & 14.82 & \textbf{166.85 + 0} \\
    +SimG  & 19.75 & 14.61 + 10.77 & 65.13 & 149.89 + 25.83 & 24.14 & 183.03 + 17.92 & 18.33 & 218.05 + 37.23  \\
    +GMC   & 26.96 & 14.08 + 157.38 & 63.89 & 104.32 + 284.13  & 25.63 & 209.99 + 207.36 & 19.12 & 223.90 + 279.72\\
    +Ideal & \textbf{96.37} & 14.9 + 10.66  & \textbf{88.58} & 126.53 + 26.50 & \textbf{88.01} & 204.11 + 16.89 & 100   & 216.29 + 33.10 \\
    \rowcolor[rgb]{ .851,  .851,  .851} +AGFT  & 19.98 &\textbf{4.47 + 0.62} & 64.89 & \textbf{12.92 + 1.60} & 26.19 & \textbf{46.98  + 12.81} & 17.21 & 196.55 + 35.66 \\
    \bottomrule
    \end{tabular}}
\end{table*}

\section{Additional Ablation Study}\label{exp_ablation}

\textbf{Influence of Different Transform Bases.}
To demonstrate the generalizability of our observations, we provide additional visualizations of the transform bases for NGs, BBCSport, and Caltech101-20 in \cref{base_ablation1}. Across these diverse data types (ranging from text to images), the results consistently confirm that the GFT basis matrices exhibit clearer block-diagonal structures than the FFT basis matrix. This further validates that the manifold-adaptive nature of GFT is robust to varying data distributions, consistently ensuring class-discriminative energy concentration in the spectral domain.

\begin{figure*}[!h]
	\centering
	\subfloat[FFT]{\includegraphics[width=0.23\textwidth]{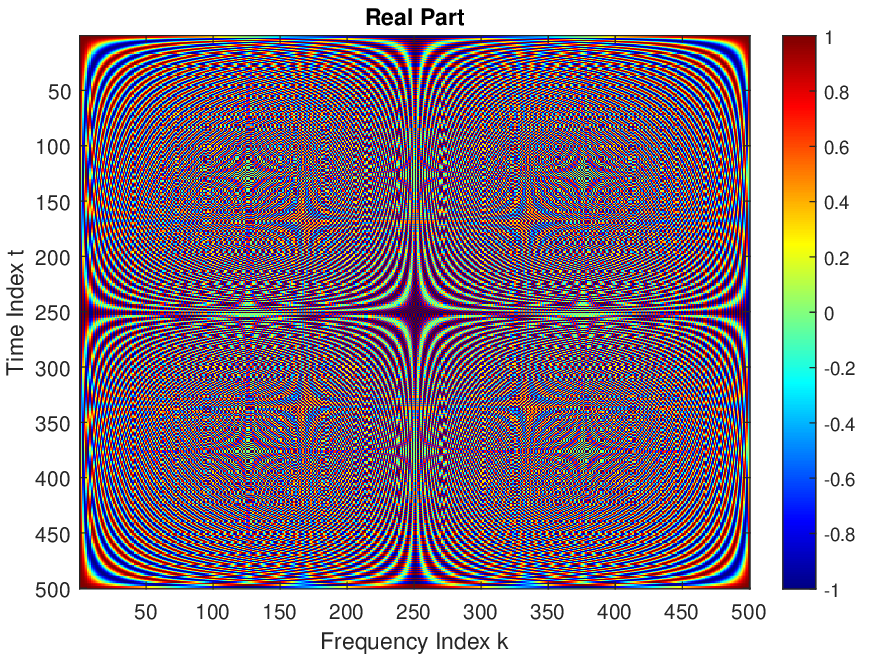}}\hspace{2pt}
	\subfloat[GFT(SimG)]{\includegraphics[width=0.23\textwidth]{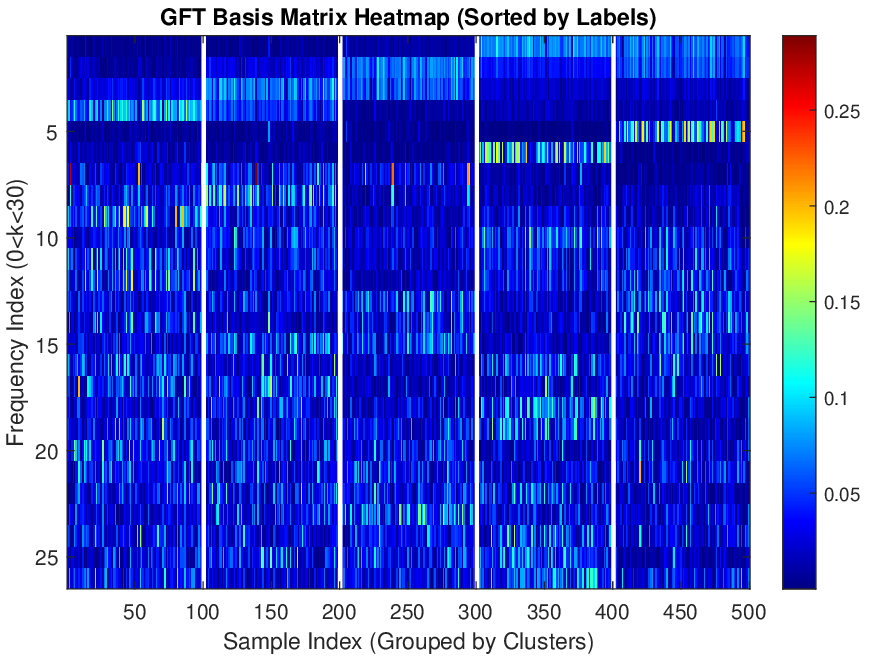}}\hspace{2pt}
    	\subfloat[GFT(GMC)]{\includegraphics[width=0.23\textwidth]{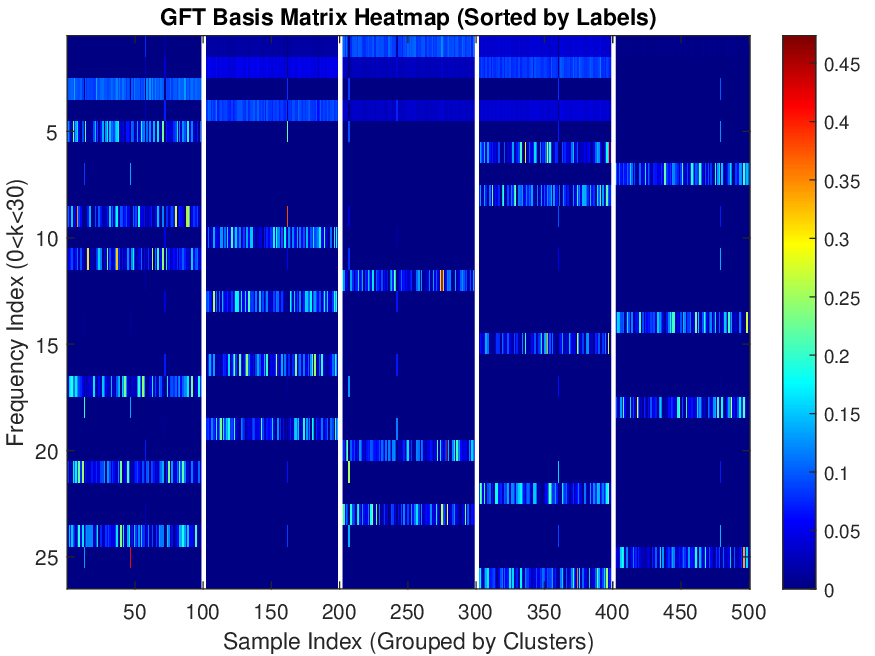}}\hspace{2pt}
        	\subfloat[GFT(Ideal)]{\includegraphics[width=0.23\textwidth]{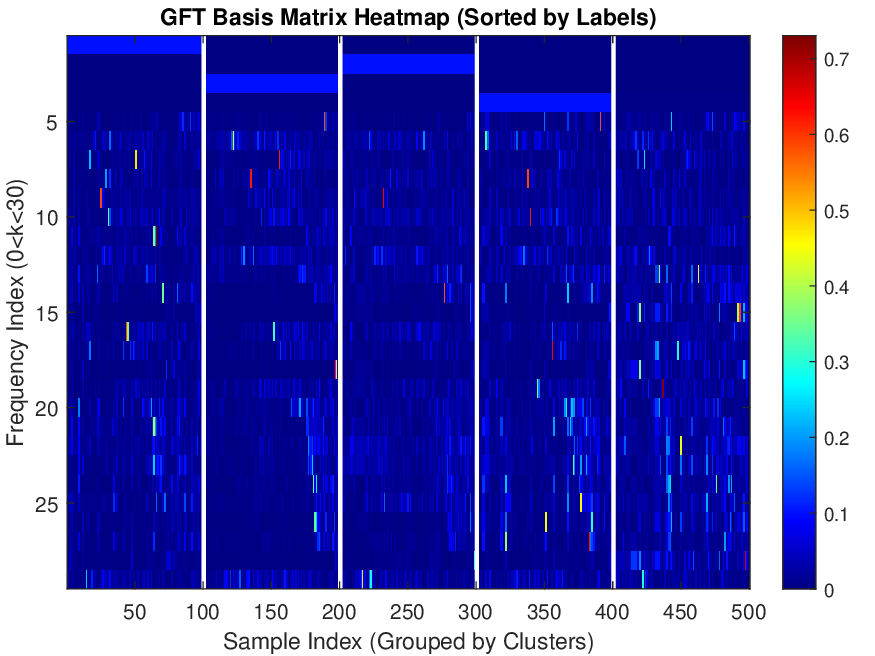}}\hspace{2pt}

	\subfloat[FFT]{\includegraphics[width=0.23\textwidth]{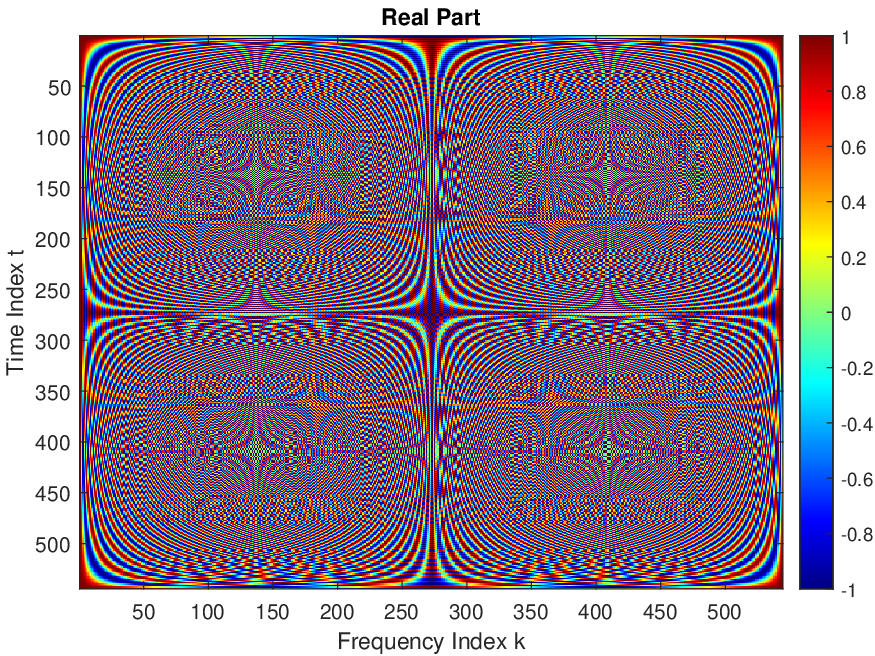}}\hspace{2pt}
	\subfloat[GFT(SimG)]{\includegraphics[width=0.23\textwidth]{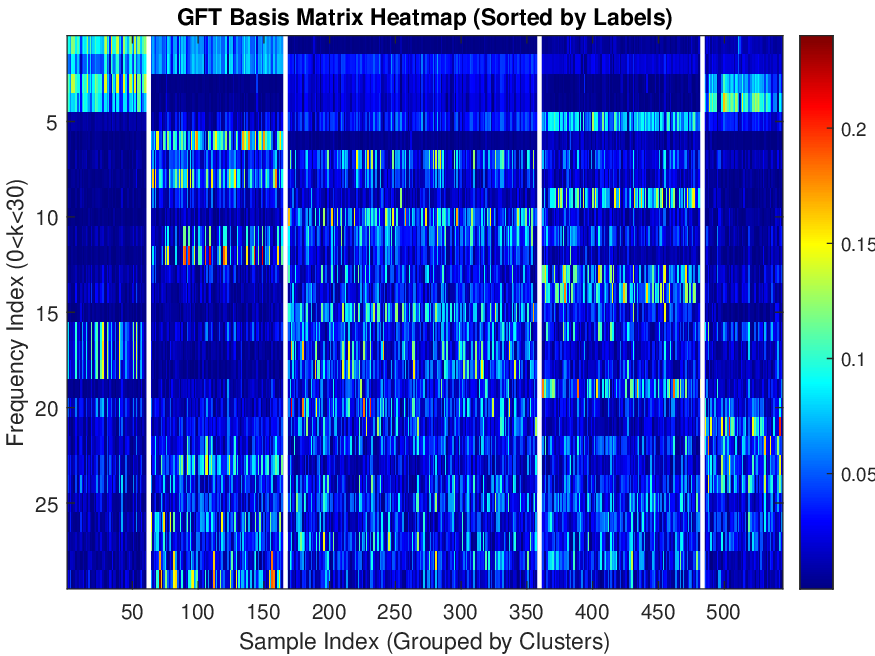}}\hspace{2pt}
    	\subfloat[GFT(GMC)]{\includegraphics[width=0.23\textwidth]{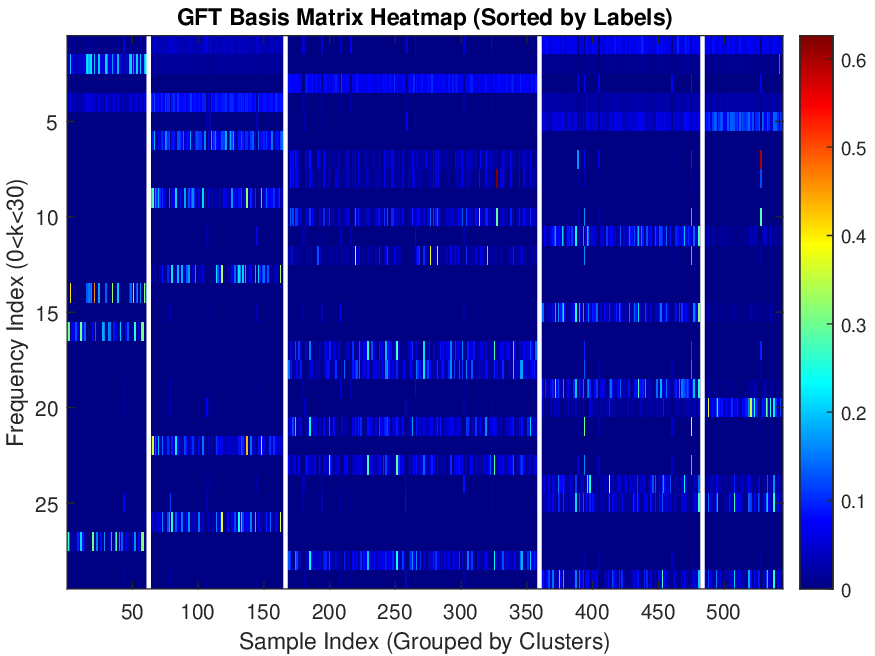}}\hspace{2pt}
        	\subfloat[GFT(Ideal)]{\includegraphics[width=0.23\textwidth]{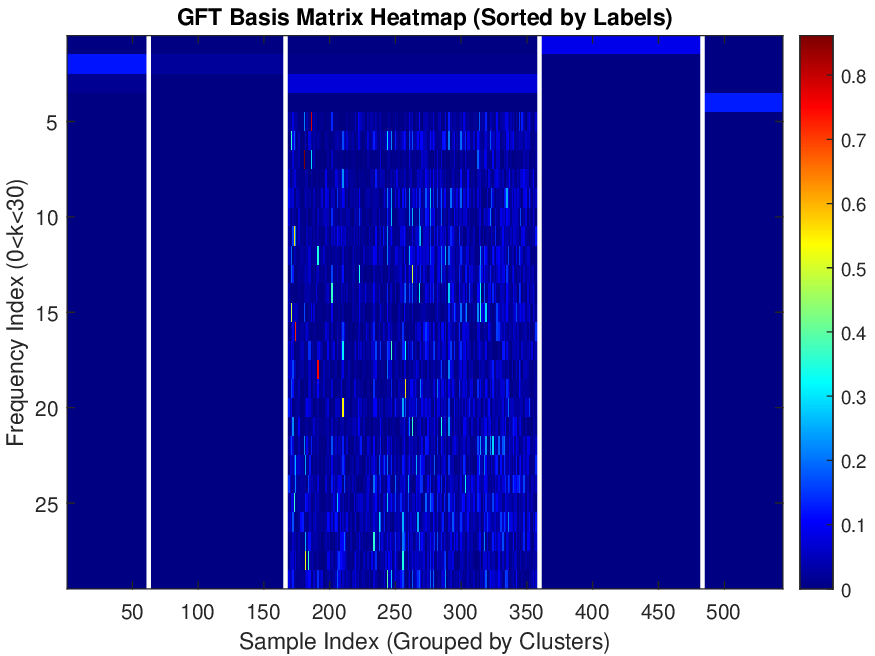}}\hspace{2pt}

	\subfloat[FFT]{\includegraphics[width=0.23\textwidth]{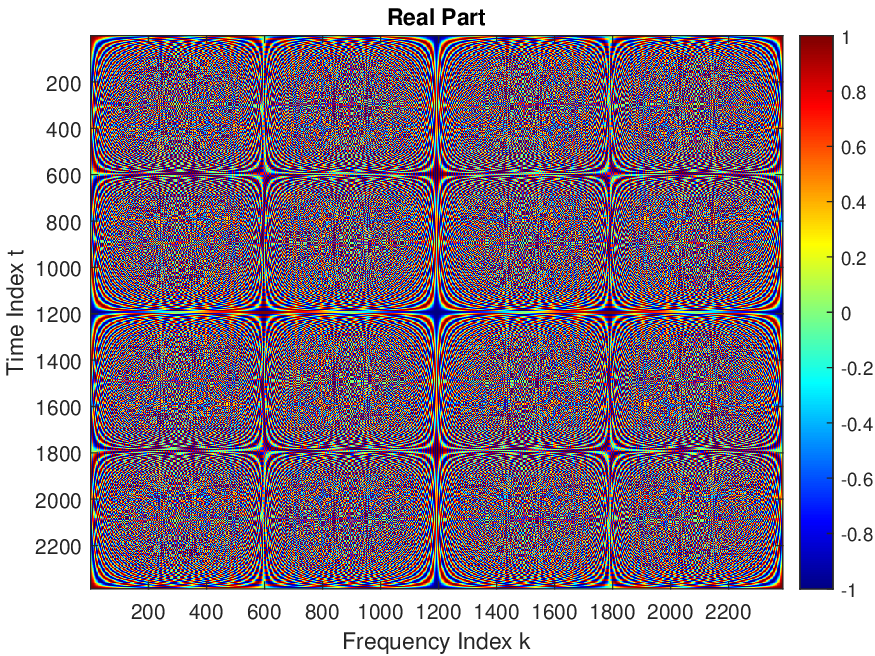}}\hspace{2pt}
	\subfloat[GFT(SimG)]{\includegraphics[width=0.23\textwidth]{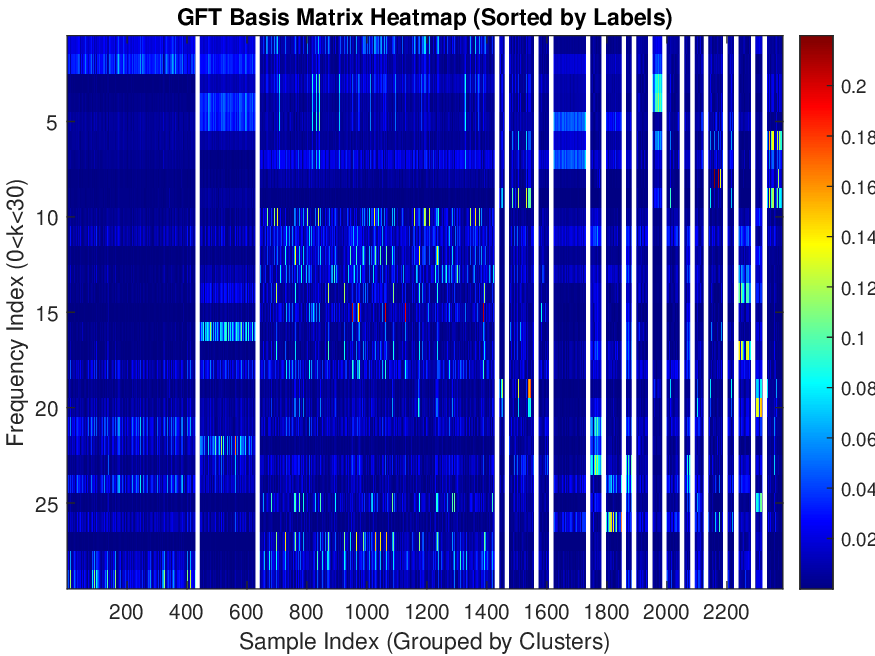}}\hspace{2pt}
    	\subfloat[GFT(GMC)]{\includegraphics[width=0.23\textwidth]{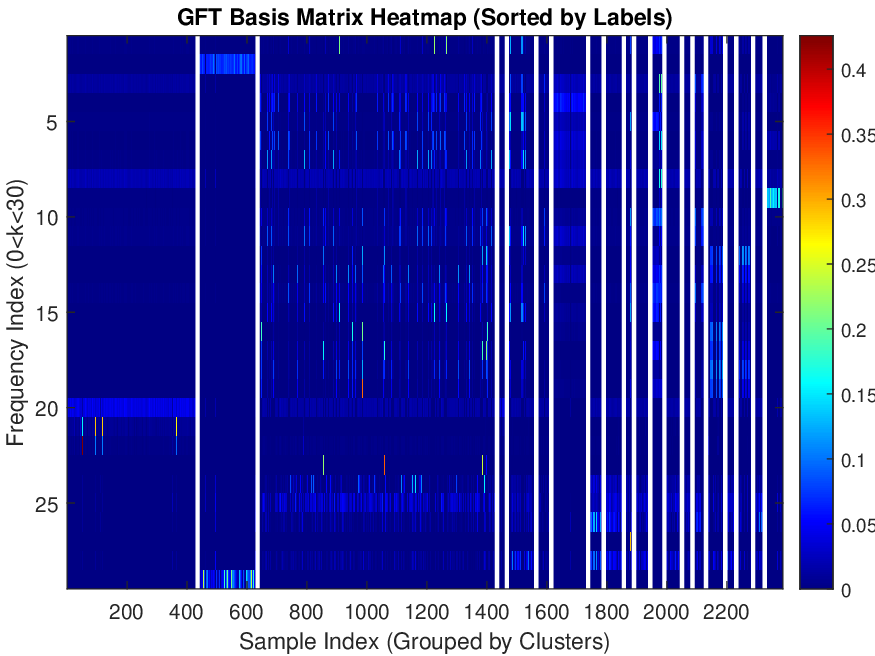}}\hspace{2pt}
        	\subfloat[GFT(Ideal)]{\includegraphics[width=0.23\textwidth]{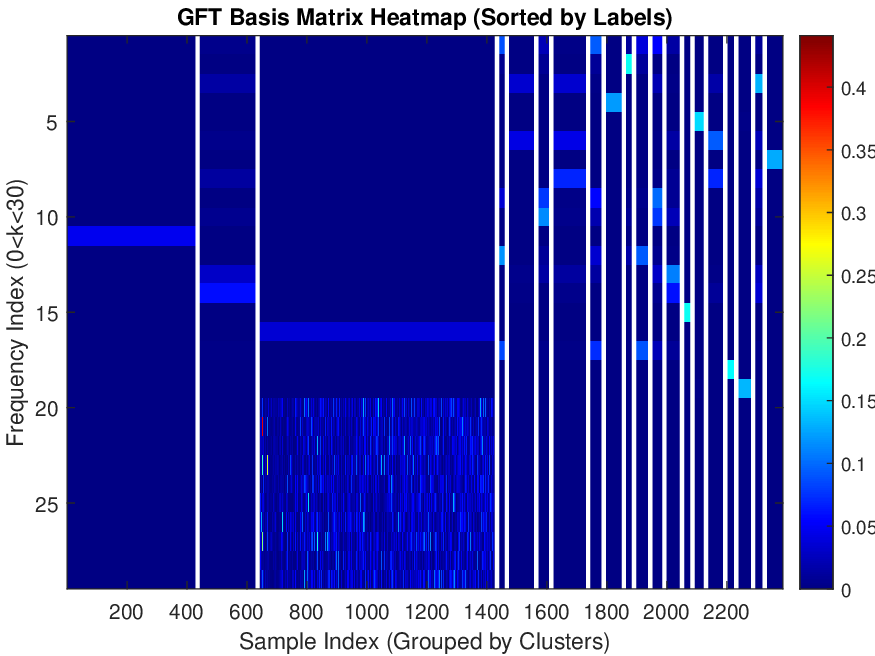}}\hspace{2pt}
 	\caption{Visualization of the transform basis matrices on the NGs (a--d), BBCSport (e--h) and Caltech101-20 (i--l) datasets.}
	\label{base_ablation1}
\end{figure*}

\begin{figure*}[!h]
	\centering
	\subfloat[MSRCv1]{\includegraphics[width=0.23\textwidth]{figure/energy_con_MSRCv1.eps}}\hspace{1pt}
	\subfloat[NGs]{\includegraphics[width=0.23\textwidth]{figure/energy_con_NGs.eps}}\hspace{1pt}
    \subfloat[BBCSport]{\includegraphics[width=0.23\textwidth]{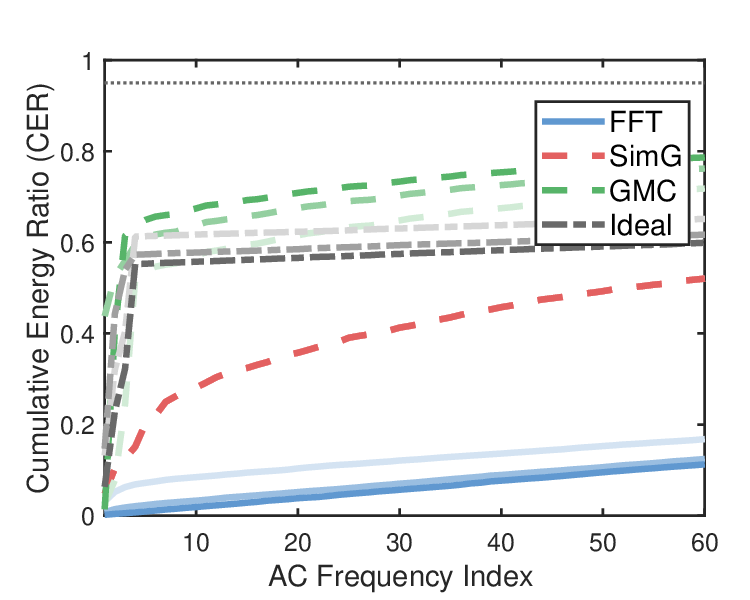}}\hspace{1pt}
	\subfloat[Caltech101-20]{\includegraphics[width=0.23\textwidth]{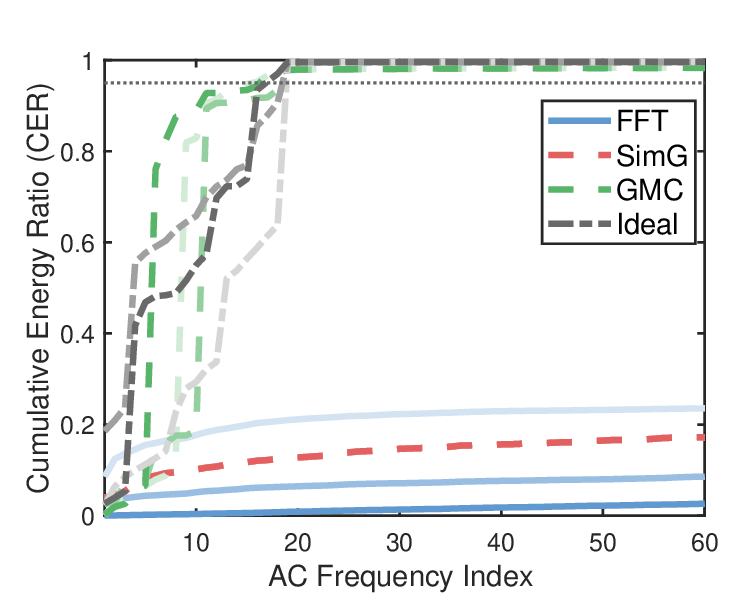}}\hspace{1pt}
 	\caption{Comparison of Cumulative Energy Ratio (CER) across different shuffle rates ($\{0, 0.5, 1\}$, where darker colors indicate higher shuffle rates) on the four datasets.}
	\label{energy_ablation_APPENDIX}
\end{figure*}

\textbf{Spectral Energy Compaction Analysis.}
To evaluate the energy-compaction capability of different spectral transforms, we organize the multi-view representations into a third-order tensor $\mathcal{Z}=\Phi(\mathbf{Z}^{(1)},\ldots,\mathbf{Z}^{(m)})\in\mathbb{R}^{d\times m\times n}$, where $n$ and $m$ denote the numbers of samples and views, respectively. Applying FFT or GFT along the third, i.e., the sample mode, yields $\bar{\mathcal{Z}}=\mathcal{T}(\mathcal{Z})$. The energy of the $i$-th spectral component is defined as:
\begin{equation}
    E_i = \| \bar{\mathcal{Z}}(:,:, i) \|_F^2, \quad i \in \{ 1, \dots, n\},
\end{equation}
where $\| \cdot \|_F$ denotes the Frobenius norm. To focus on the structural encoding capability, we analyze the AC frequencies (excluding the DC component at $i=1$). The Cumulative Energy Ratio (CER) for the first $K$ non-DC components is defined as:
\begin{equation}
    \text{CER}(K) = \frac{\sum_{i=2}^{K+1} E_i}{\sum_{i=2}^{n} E_i}, \quad K \in \{1, 2, \dots, n-1\}.
\end{equation}

A steeper CER curve indicates that the transform more effectively concentrates the structural information in a smaller number of spectral components. We provide the CER curves for all evaluated datasets in this section. The experiments are conducted under varying shuffle rates $\{0, 0.5, 1\}$, where 1 represents a fully randomized sample order. The results in \cref{energy_ablation_APPENDIX} lead to the following observations:

1. The CER curves of the GFT-based methods remain nearly unchanged across the tested shuffle rates. In contrast, the FFT baseline shows significant energy dispersion as the shuffle rate increases.

2. On most datasets, SimG and GMC exhibit a higher CER in the low-frequency AC range compared to FFT (especially in disordered cases). This confirms that graph-adaptive bases successfully align the intrinsic multi-view correlations into a compact spectral subspace.

3. In some specific cases where the shuffle rate $=0$ (perfectly ordered by category), FFT may show high compaction due to the artificial periodicity. However, our proposed GFT-based framework (SimG, GMC, and Ideal) provides a more robust and reliable representation for real-world scenarios where such an order is unknown.

\textbf{Influence of Shuffle Rates on GFT-based Methods.} To evaluate the robustness of our framework against data disorder, we conduct experiments with shuffle rates ranging from 0 to 1. As illustrated in \cref{fig:shuffle_rate_appendix}, while the performance of the FFT-based framework deteriorates sharply as the sample order becomes randomized, our GFT(SimG), GFT(GMC), and AGFT variants exhibit remarkable stability. Their clustering metrics remain nearly invariant as the shuffle rate increases, maintaining a near-constant plateau. This empirically validates that the graph-adaptive basis effectively captures the intrinsic topology, making the spectral representation substantially more stable under sample permutations.
\begin{figure*}[!h]
	\centering
	\subfloat[MSRCv1]{\includegraphics[width=0.30\textwidth]{figure/MSRCv1_tsvd_TV_gft.eps}}\hspace{4pt}
	\subfloat[NGs]{\includegraphics[width=0.30\textwidth]{figure/NGs_tsvd_TV_gft.eps}}\hspace{4pt}
    \subfloat[BBCSport]{\includegraphics[width=0.30\textwidth]{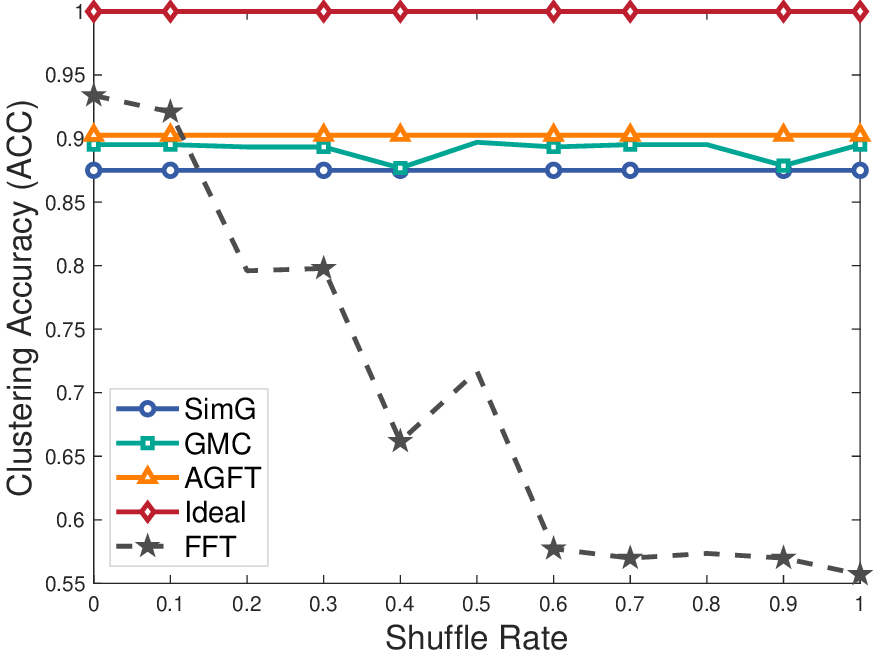}}\hspace{4pt}

    \subfloat[MSRCv1]{\includegraphics[width=0.30\textwidth]{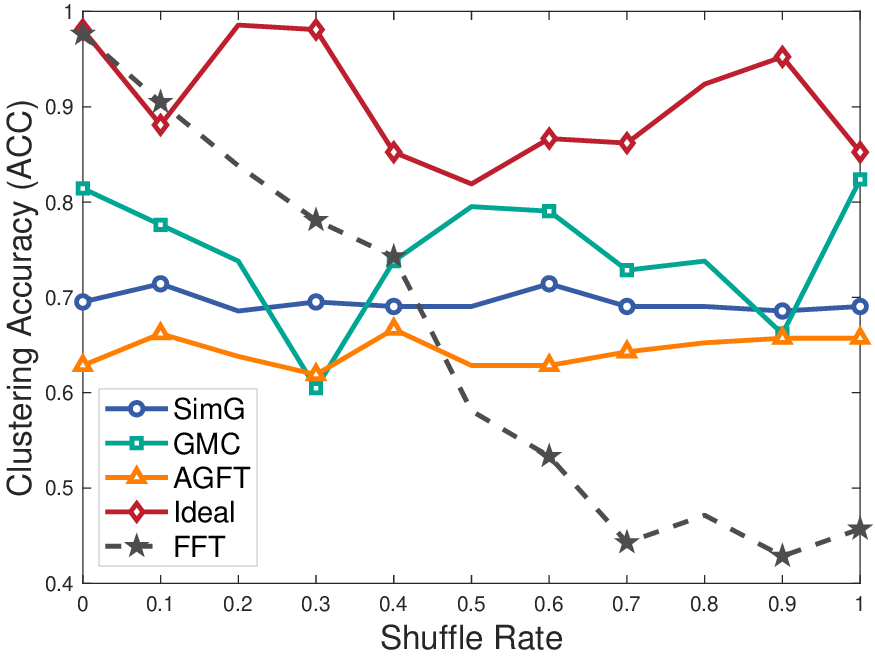}}\hspace{4pt}
	\subfloat[NGs]{\includegraphics[width=0.30\textwidth]{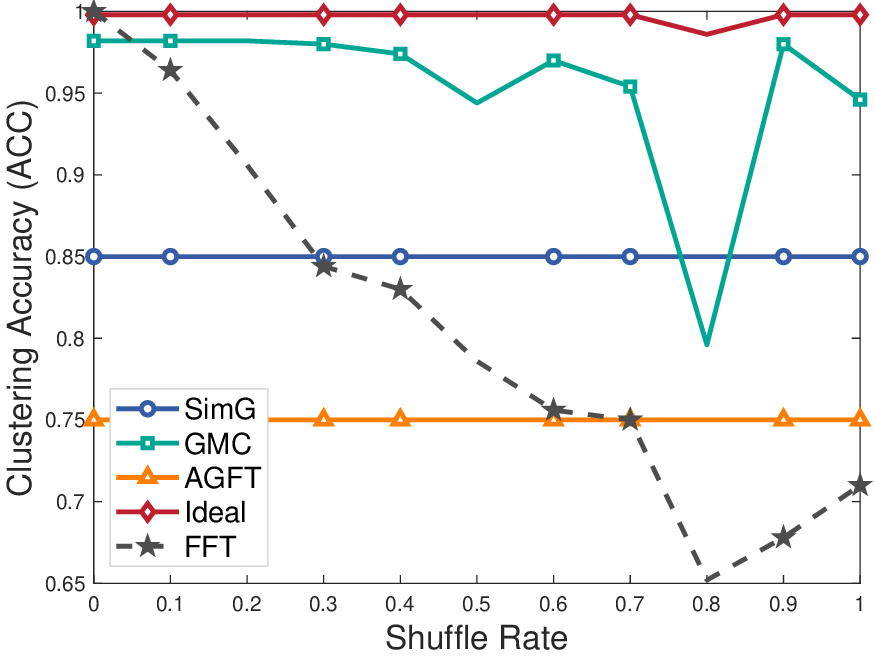}}\hspace{4pt}
    \subfloat[BBCSport]{\includegraphics[width=0.30\textwidth]{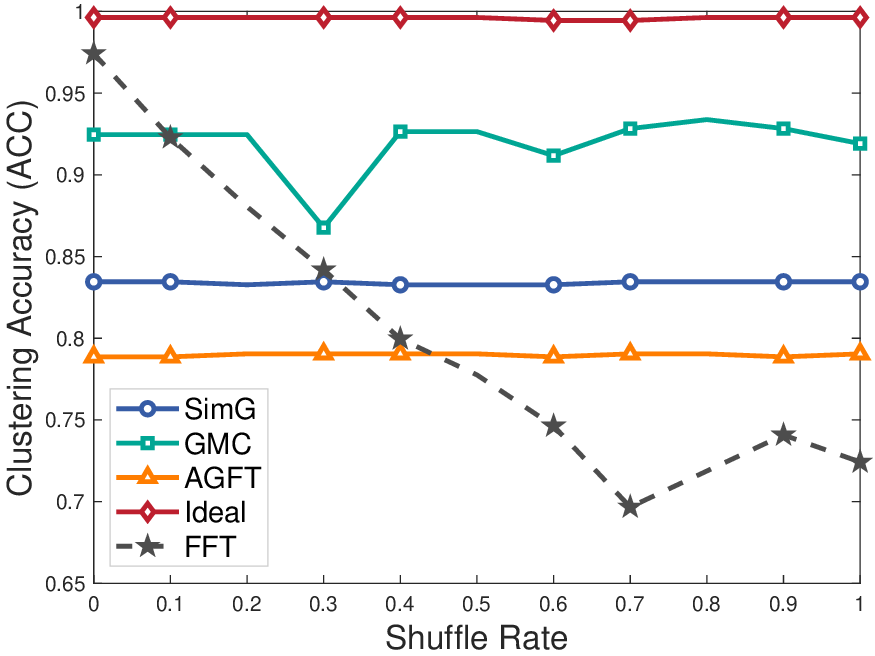}}\hspace{4pt}

    \subfloat[MSRCv1]{\includegraphics[width=0.30\textwidth]{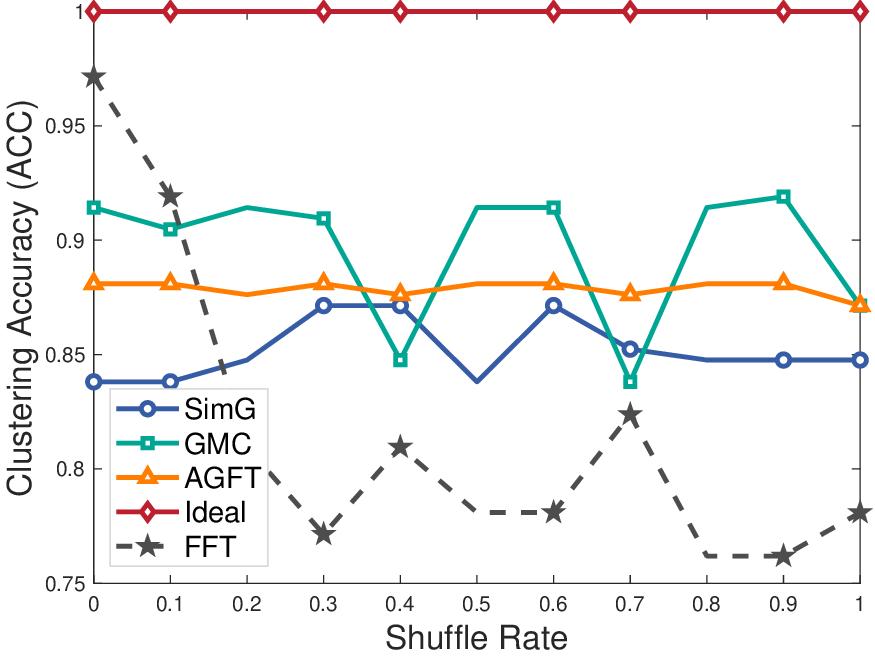}}\hspace{4pt}
	\subfloat[NGs]{\includegraphics[width=0.30\textwidth]{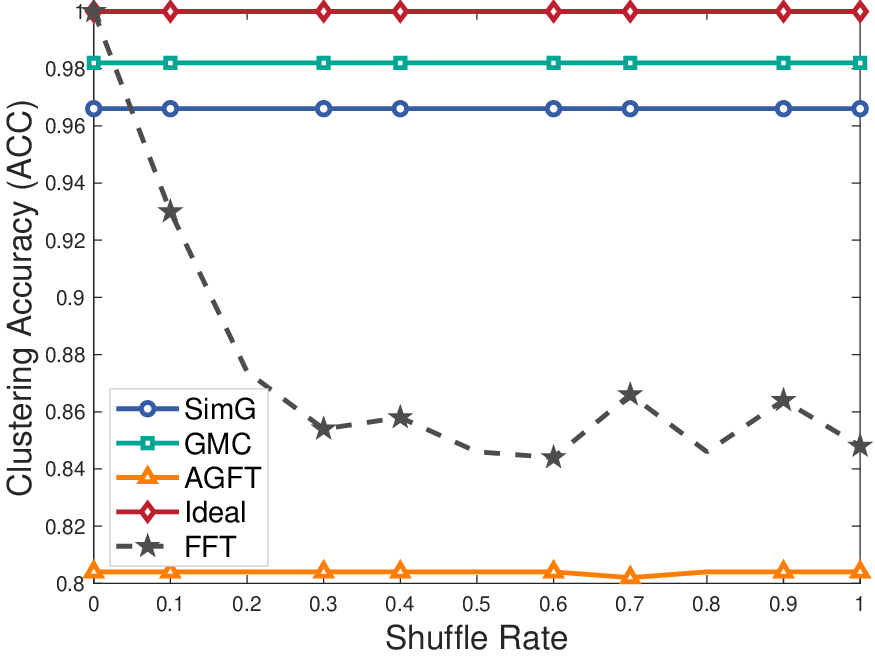}}\hspace{4pt}
    \subfloat[BBCSport]{\includegraphics[width=0.30\textwidth]{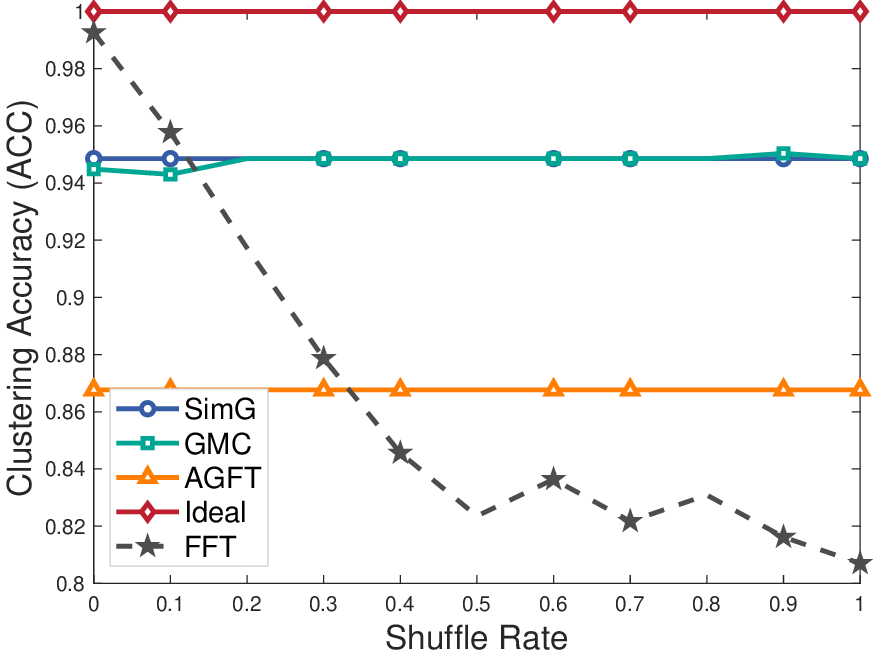}}\hspace{4pt}

    \subfloat[MSRCv1]{\includegraphics[width=0.30\textwidth]{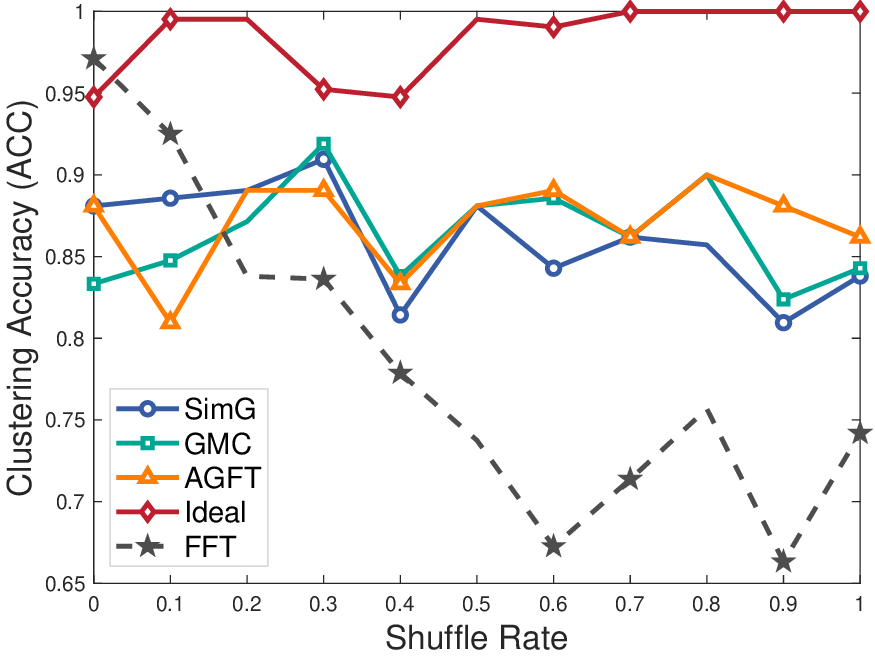}}\hspace{4pt}
	\subfloat[NGs]{\includegraphics[width=0.30\textwidth]{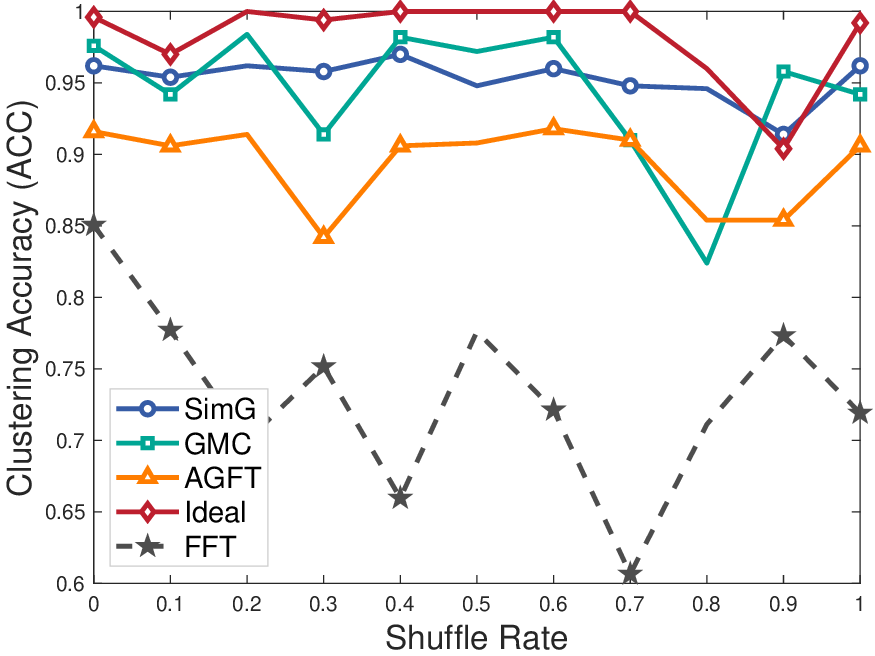}}\hspace{4pt}
    \subfloat[BBCSport]{\includegraphics[width=0.30\textwidth]{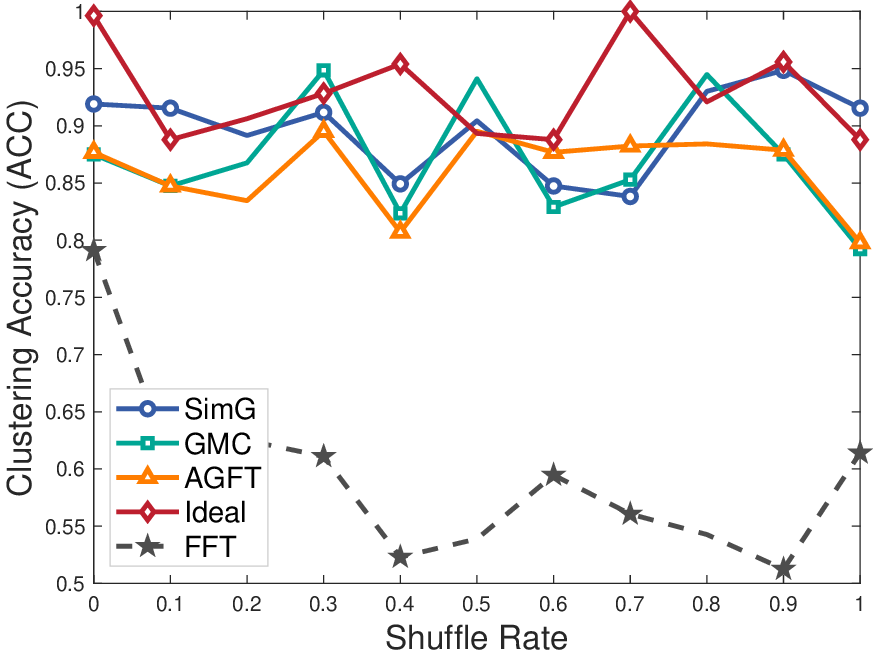}}\hspace{4pt}
 	\caption{Influence of shuffle rates on different GFT methods. Baselines: t-SVD-MVC (a-c), TLSpNM (d-f), ESTMC (g-i), TLRLF4MVC (j-l).}
	\label{fig:shuffle_rate_appendix}

\end{figure*}

\end{document}